\newcommand\opteq[1]{\mathrel{\mathpalette\opt@eq{#1}}}
\newcommand{\opt@eq}[2]{%
  \begingroup
  \sbox\z@{$#1#2$}%
  \sbox\tw@{\resizebox{!}{.5\ht\z@}{$\m@th#1($}}%
  \nonscript\hskip-\wd\tw@
  \mkern1mu
  \raisebox{-.35\ht\z@}[0pt][0pt]{\resizebox{!}{.5\ht\z@}{$\m@th#1($}}%
  \mkern-1mu
  {#2}%
  \mkern-1mu
  \raisebox{-.35\ht\z@}[0pt][0pt]{\resizebox{!}{.5\ht\z@}{$\m@th#1)$}}%
  \mkern1mu
  \nonscript\hskip-\wd\tw@
  \endgroup
}
\newcommand{\leoq}{\opteq{\leq}}
\newcounter{assumptions}
\renewcommand\theassumptions{\Roman{assumptions}}
\newtheorem{theorem}{Theorem}
\newtheorem{lemma}{Lemma}
\newcommand{\R}{\mathbb{R}}
\newcommand{\As}{A_{\text{c}}}
\newcommand{\AsTheorem}{A_{\text{\emph{c}}}}
\DeclareMathOperator{\Pro}{Pr}
\DeclareMathOperator{\Bias}{Bias}
\DeclareMathOperator{\Error}{Error}
\newcommand{\Psymb}{\Pro}
\newcommand{\Ycorr}{\widehat{Y}_{\text{corr}}}
\newcommand{\Ytrue}{\widehat{Y}_{\text{true}}}
\newcommand{\YcorrTheorem}{\widehat{Y}_{\text{\emph{corr}}}}
\newcommand{\YtrueTheorem}{\widehat{Y}_{\text{\emph{true}}}}
\setlist[itemize]{leftmargin=*}
\setlist[enumerate]{leftmargin=*}
\begin{document}

\twocolumn[

\aistatstitle{Equalized odds postprocessing under imperfect group information}

\aistatsauthor{ Pranjal Awasthi \And Matth\"{a}us Kleindessner \And  Jamie Morgenstern }

\aistatsaddress{ Rutgers University \& Google \And University of Washington \And University of Washington \& Google } ]

\begin{abstract}
  Most  approaches 
  aiming to 
  ensure 
  a model's fairness
  with respect to a protected attribute (such as gender or race) assume 
  to know 
  the true value of
  the 
  attribute  for every
  data point. 
  In this paper, we ask to what
  extent fairness interventions can be effective even 
  when only imperfect information about the protected attribute is available. 
  In particular, we study 
  the
  prominent equalized odds 
postprocessing 
  method of \citet{hardt2016equality}
  under a perturbation of the 
 attribute.
We identify conditions on the perturbation 
that guarantee that the bias of a classifier is reduced 
even 
by running equalized odds 
with 
the perturbed 
attribute. We also study the error of the resulting classifier. 
We empirically observe that under 
our identified 
conditions most often 
the error does not suffer from a perturbation of the protected attribute.
For a special case, we formally prove this observation to be true.
\end{abstract}

\section{INTRODUCTION}\label{section_intro}

As 
machine learning (ML)
algorithms become more 
and more 
embedded into our society,
evidence has surfaced questioning whether they produce equally high-quality
predictions for most members of diverse populations. The work on
fairness in 
ML 
aims to understand the extent to
which existing ML methods produce 
\emph{fair}  
predictions for
different individuals, and what new methods can 
remove the
discrepancies therein~\citep{barocas-hardt-narayanan}. The appropriate formalization of 
``fair'' 
necessarily varies based upon the domain, leading
to a variety of definitions, largely falling into either the category
of \emph{individual fairness}~\citep[e.g., ][]{fta2012, dwork2018} 
or \emph{group fairness} \citep[e.g., ][]{Kamishima2012, hardt2016equality, kleinberg2017, pleiss2017,zafar2017,zafar2017www}. 
The former 
tries to ensure
some property for 
every 
individual (and usually is agnostic to any group
membership), while the latter asks 
some statistic (e.g.,
accuracy or false positive rate) 
to 
be 
similar for different groups.  One
key drawback of individual fairness is the need for 
a task-specific 
similarity metric over the space of individuals. Group fairness, 
on the other hand, 
usually requires knowledge of group
membership (such as gender or race), 
encoded by a \emph{protected attribute}. 
While arguably a more reasonable requirement than asking for a task-specific similarity metric, in many practical applications perfect knowledge of the protected attribute is still an invalid assumption.  
In this work, we ask to what extent one can guarantee group fairness criteria with only
imperfect information about the protected attribute, hence generalizing the
applicability~of~such~methods.

More specifically, 
we explore 
the question of when perturbed protected attribute information can be
substituted for the true 
attribute in the training phase of an existing algorithmic
framework for fair classification with 
limited 
harm to the resulting model's
fairness 
and 
accuracy.  In particular, one would never want to end up
in a situation where the ``fair'' classifier obtained from perturbed protected attribute information has worse fairness guarantees than a classifier that ignores
fairness altogether, when tested on the true data distribution. In
this work,
we study 
this question 
in the context of 
the prominent postprocessing method of \citet{hardt2016equality} for ensuring equalized odds (EO).

Our main 
contribution is to identify (fairly natural) conditions on the perturbation of the protected attribute in the training data 
for the 
EO method 
that guarantee that the resulting classifier~$\widehat{Y}$ is still more fair than the original classifier $\widetilde{Y}$ that it is based on. 
To illustrate the application of our general result, consider 
a balanced case, where the probability of a data point having 
label $y\in\{-1,+1\}$ and protected attribute $a\in\{0,1\}$ equals~$1/4$ 
independent of the values of 
$y$ and $a$,
and assume that in the training phase every 
attribute is independently flipped to its complementary value with probability~$\gamma$. 
Our result implies that 
for 
$\gamma<0.5$, the bias  (as defined in Section~\ref{subsection_bias}) of $\widehat{Y}$ will be strictly smaller than the bias of $\widetilde{Y}$. 
While a similar phenomenon was empirically observed in the
recent work of \citet{gupta2018} (see Section~\ref{section_related_work} for related work), our work is 
among 
the first to
provide 
a formal guarantee on the effectiveness of a 
prominent 
method for fairness in ML under a perturbation
of~the~attribute. 
To complement our result, we show that our identified conditions are necessary for providing such a guarantee.

We also study the error of the classifier~$\widehat{Y}$. 
We observe 
that under our identified conditions, 
most often 
the error of $\widehat{Y}$ is not larger than the error of 
the classifier that we would obtain 
from running 
the EO 
method  
with the true protected attribute (if it is larger, the difference tends to be negligible---as long as the perturbation is 
moderate). 
In the balanced case outlined above, we
 formally 
 prove this observation to be true.

\section{EQUALIZED ODDS}\label{section_equalized_odds_MAIN}    

We begin by reviewing 
the equalized odds (EO) postprocessing method of \citet{hardt2016equality},
assuming the true protected attribute for every data point is
known. Like \citeauthor{hardt2016equality} and as is common in the
literature on fair machine learning
\citep[e.g.,][]{pleiss2017,Hashimoto2018}, 
we deal with the distributional setting 
and ignore the effect of estimating probabilities from finite 
training 
samples. 

Let  $X\in\mathcal{X}$, $Y\in\{-1,+1\}$ and $A\in\{0,1\}$  be random variables with 
some joint probability distribution. 
The variable $X$ represents a data point ($\mathcal{X}$ is some
suitable set), $Y$ is the data point's ground-truth label and $A$ its
protected attribute. Like \citeauthor{hardt2016equality}, we only consider the case of binary classification and a binary protected attribute. 
The goal in fair classification is to predict 
$Y$ from $X$, or from $(X,A)$, such that the prediction is ``fair'' with respect to 
the two groups 
defined by 
$A=0$ and $A=1$. 
Think of the standard example of hiring: 
in this case, $X$ 
would 
be a collection of features describing an applicant such as 
her 
GPA, 
$Y$ 
would 
encode whether 
the applicant 
is a good fit for the job, 
and $A$ could encode  
the applicant's gender. 
There are numerous formulations of what it means for a prediction to be fair in such an example (some of them contradicting each other; 
see Section~\ref{section_related_work}),  
of which the notion of equalized odds 
as introduced by \citeauthor{hardt2016equality}  
is one of the most prominent ones. 
Denoting the (possibly randomized) prediction by $\widehat{Y}\in\{-1,+1\}$, 
the prediction satisfies the 
EO 
criterion if, for $y\in\{-1,+1\}$, 
\begin{align}\label{eq_eq_odds_criterion}
\Pro\left[\widehat{Y}=1 \big| Y=y,A=0\right]&=\Pro\left[\widehat{Y}=1 \big| Y=y,A=1\right].
\end{align}
Throughout the paper we assume 
$\Pro[Y=y,A=a]>0$ for $y\in\{-1,+1\}$ and $a\in\{0,1\}$.
For $y=+1$, 
Equation~\eqref{eq_eq_odds_criterion} 
requires that $\widehat{Y}$ has equal true positive rates for the two groups  $A=0$ and $A=1$, and 
for $y=-1$ it 
requires $\widehat{Y}$ to have equal false positive rates. 
In their paper, \citeauthor{hardt2016equality} propose a simple postprocessing method to derive a 
predictor $\widehat{Y}$ that satisfies the 
EO 
criterion from a predictor $\widetilde{Y}$  that does not, which works as follows: given a data point 
with $\widetilde{Y}=y$ and $A=a$, the predictor~$\widehat{Y}$ predicts $+1$ with probability~$p_{y,a}$ 
(hence, 
$\widehat{Y}$ depends on $X$ and $Y$ only via $\widetilde{Y}$ and $A$). 
The four probabilities 
$p_{-1,0},p_{-1,1},p_{1,0},p_{1,1}$ are 
computed in such a 
way that (i) $\widehat{Y}$ satisfies the 
 EO 
 criterion, and (ii) the error of $\widehat{Y}$, that is the probability of $\widehat{Y}$ not equaling $Y$, is 
minimized.  The former requirement and the latter objective naturally give rise to the
following linear program:
\begin{align}\label{eq_odds_linear_program}
\begin{split}
&\min_{\substack{p_{-1,0},~p_{-1,1},\\p_{1,0},~p_{1,1}\in[0,1]}}~ \sum_{\substack{\tilde{y}\in\{-1,+1\}\\a\in\{0,1\}}}
\hspace{-1mm}\left\{G(-1,a,\tilde{y})-G(1,a,\tilde{y})\right\}\cdot p_{\tilde{y},a}\\
&~\text{s.t.}~~H(y,0)\cdot p_{1,0} + \left\{1-H(y,0)\right\}\cdot p_{-1,0} =\\
&~~~ H(y,1)\cdot  p_{1,1}+ \left\{1-H(y,1)\right\}\cdot p_{-1,1}, \quad y\in\{-1,1\},
\end{split}
\end{align}
where 
$G(y,a,\tilde{y})=\Pro\left[Y=y, A=a, \widetilde{Y}=\tilde{y}\right]$ and $H(y,a)=\Pro\left[\widetilde{Y}=1 \big| Y=y,A=a\right]$.
Note that 
this linear program 
is not guaranteed to have a unique solution: 
for example, in case of $\Pro[Y=1]=\Pro[Y=-1]=1/2$, it is 
not hard 
to see that if 
$p_{-1,0}^*,p_{-1,1}^*,p_{1,0}^*,p_{1,1}^*$ is an optimal solution, then $p_{-1,0}^*+c,p_{-1,1}^*+c,p_{1,0}^*+c,p_{1,1}^*+c$, for any $c$ such that 
$p_{-1,0}^*+c,p_{-1,1}^*+c,p_{1,0}^*+c,p_{1,1}^*+c\in[0,1]$, is an optimal solution too. 
Hence, the derived predictor $\widehat{Y}$ might not be uniquely defined. 
All our results apply to \emph{any} derived 
EO 
predictor 
(derived via an 
arbitrary 
optimal solution to 
\eqref{eq_odds_linear_program}), with one limitation: whenever the constant classifier $\widehat{Y}=+1$ or $\widehat{Y}=-1$ is an optimal 
EO 
predictor (corresponding to optimal probabilities 
$p_{-1,0}=p_{-1,1}=p_{1,0}=p_{1,1}=1$ or $p_{-1,0}=p_{-1,1}=p_{1,0}=p_{1,1}=0$), 
we assume 
the derived 
EO 
predictor to be this constant classifier. 
 Throughout the paper, we 
use the terms predictor and classifier interchangeably.

\section{ANALYSIS UNDER PERTUR- BATION OF THE 
ATTRIBUTE}\label{section_analysis}

 We first describe our noise model for perturbing the 
 protected
 attribute. We then study the bias and the error of the derived  
 EO 
 predictor under this noise~model. 

\subsection{Noise Model}\label{subsection_noise_model}

When deriving the 
equalized odds 
predictor $\widehat{Y}$ from a given
classifier 
$\widetilde{Y}$, one needs to estimate the probabilities
$\Pro\left[Y=y, A=a, \widetilde{Y}=\tilde{y}\right]$ and
$\Pro\left[\widetilde{Y}=1 \big| Y=y,A=a\right]$ that appear in 
\eqref{eq_odds_linear_program}  
from training data
and then solve the resulting linear
program~\eqref{eq_odds_linear_program} 
for some optimal probabilities  $p_{-1,0},p_{-1,1},p_{1,0},p_{1,1}$. 
We refer to this as 
the \emph{training
  phase} in the 
  EO 
  procedure. 
When applying the derived classifier~$\widehat{Y}$ in order to predict the 
label
of a test point, which we call the \emph{test phase} 
of 
the  
EO 
procedure,
one 
tosses 
a 
biased 
coin and outputs a label
estimate of $+1$ with probability~$p_{y,a}$, or $-1$ with
probability~$1-p_{y,a}$, if $\widetilde{Y}=y$ and $A=a$ for~the~test~point.

Our noise model 
 captures the scenario that the 
 protected
attribute in the training data has been corrupted. 
Concretely, 
we assume that in the training phase the 
two 
probabilities mentioned above 
are replaced by
$\Pro\left[Y=y, \As=a, \widetilde{Y}=\tilde{y}\right]$ and
$\Pro\left[\widetilde{Y}=1 \big| Y=y,\As=a\right]$,
respectively. The random variable~$\As$ denotes the 
perturbed, 
or corrupted, 
attribute. 
In the test phase we assume that we have access to
the true 
attribute~$A$ without any corruption. 
Hence,  
the probabilities $p_{y,a}$ of the derived 
EO 
predictor for predicting $+1$
depend upon the perturbed 
attribute, but the predictions
 themselves depend on the true 
 attribute. 
Our noise model 
applies to scenarios in which a classifier is trained on unreliable
data (e.g., crowdsourced data, data obtained from a third party, or
when a classifier predicts the unavailable 
attribute) and
then applied to test data for which the 
attribute can 
be 
accessed directly or
easily verified (as it usually is the case in hiring, 
for example). 
We discuss alternative settings and 
directions for future work~in~Section~\ref{section_discussion}.

\subsection{Bias of the Derived Equalized Odds Predictor under Perturbation
}\label{subsection_bias}
 
 We define the bias for the class $Y=y$ (with $y\in\{-1,+1\}$) of the 
 predictor $\widehat{Y}$ as the  
 absolute error 
 in the equalized odds condition~\eqref{eq_eq_odds_criterion} for this class, that is 
 \begin{align*}
&\Bias_{Y=y}(\widehat{Y})=\left|\Pro\left[\widehat{Y}=1 \big| Y=y,A=0\right]-\right.\\
&~~~~~~~~~~~~~~~~~~~~~~~~~~~~\left.\Pro\left[\widehat{Y}=1 \big| Y=y,A=1\right]\right|.
\end{align*}  
Similarly, we define 
$\Bias_{Y=y}(\widetilde{Y})$. 
Note that $\Bias_{Y=y}(\widehat{Y})$ refers to the bias of $\widehat{Y}$ in the test phase,  
and recall from Section~\ref{subsection_noise_model} that in the test phase, according to our noise model, the derived 
EO 
predictor~$\widehat{Y}$ 
always 
makes its prediction based on 
$\widetilde{Y}$ and the true 
protected 
attribute~$A$, 
regardless of whether the
attribute has been corrupted in the training phase.  

Let now $\Ycorr$ be the derived 
EO 
predictor 
(derived from~$\widetilde{Y}$)  
when 
the protected attribute in the 
EO 
training phase 
has been corrupted, that is 
$\Ycorr$
is based on the linear program~\eqref{eq_odds_linear_program} with $A$ replaced by $\As$ (the need for notation $\Ycorr$ instead of $\widehat{Y}$ 
 comes from Section~\ref{subsection_error}, 
where we compare $\Ycorr$ to the derived 
EO 
predictor~$\Ytrue$ 
that is based on the true attribute; we provide a table collecting all random variables used in the paper in Table~\ref{table_of_notation} 
in Appendix~\ref{appendix_table_of_notation}). 
The main contribution of our paper is to establish that
the following assumptions

\vspace{1mm}
\refstepcounter{assumptions}
\textbf{Assumptions \theassumptions}\label{assu_bias}
\vspace{-2.5mm}
\begin{enumerate}[label=(\alph*)]
 \item\label{assu_bias_a}
 given  the ground-truth label~$Y$ and the true 
attribute~$A$, the prediction
$\widetilde{Y}$ and the corrupted attribute $\As$ are conditionally 
independent
 \item\label{assu_bias_b}
$\sum_{a\in\{0,1\}} \Pro\left[\As\neq A \big| Y=y,A=a\right]\leq 1$ and both summands 
strictly smaller than $1$, 
$y\in\{-1,+1\}$
 \end{enumerate}

guarantee that 
\begin{align}\label{claim_bias}
 \Bias_{Y=y}(\Ycorr)\leq \Bias_{Y=y}(\widetilde{Y}),\quad y\in\{-1,+1\},
\end{align} 
and that Assumptions~\ref{assu_bias} are necessary for guaranteeing~\eqref{claim_bias}. 
Furthermore, 
under Assumptions~\ref{assu_bias}\,\ref{assu_bias_a}, 
a strict inequality holds in \eqref{claim_bias} whenever
$\Bias_{Y=y}(\widetilde{Y})>0$ and a strict inequality holds in Assumptions~\ref{assu_bias}\,\ref{assu_bias_b}.  
Less surprising, 
$\Bias_{Y=y}(\Ycorr)$ tends to 
zero 
as, for $a\in\{0,1\}$, 
$\Pro\left[\As\neq A \big| Y=y,A=a\right]$ 
tends to
zero.   
These claims follow from Theorem~\ref{theorem_bias} and Lemma~\ref{lemma_bias_assumption}~below. 
Note that the goal of our paper is to \emph{analyze} the equalized odds method as it is
and we do not try to \emph{modify} 
the~method.

Before stating Theorem~\ref{theorem_bias} and Lemma~\ref{lemma_bias_assumption}, let us discuss their implications. 
To the practitioner 
who wants to run 
the EO method, 
but cannot rule out that the protected attribute might have been corrupted, it is higly relevant to know whether she 
can still expect to
benefit from running 
EO
or whether there is actually a risk of doing harm. 
According to 
Theorem~\ref{theorem_bias}, 
if she 
believes 
Assumptions~\ref{assu_bias} to be true, then she is guaranteed that \eqref{claim_bias} holds 
and that by running 
EO,
at the very least, she does not increase the unfairness of the given classifier~$\widetilde{Y}$. 
On the other hand, according to 
Lemma~\ref{lemma_bias_assumption}, if the practitioner expects Assumptions~\ref{assu_bias} to be 
violated, she 
should refrain from running 
EO 
as this might yield a predictor with higher bias than~the~given~classifier~$\widetilde{Y}$.

Assumptions~\ref{assu_bias} are fairly natural and might be satisfied in 
several practical 
situations. 
Assumptions~\ref{assu_bias}\,\ref{assu_bias_a} asks for conditional independence (given $Y$ and $A$) of the given 
classifier~$\widetilde{Y}$ and the corrupted attribute~$\As$. For example, this is the case if $\As$ is the output of a classifier that only uses features that are 
conditionally independent of the features used by $\widetilde{Y}$ (e.g., 
 body height is used 
for predicting 
gender and
$\widetilde{Y}$ uses 
GPA for predicting aptitude for a job\footnote{Another example, involving race as 
attribute, might be the following: 
if one uses a person's surname to predict her race and her income to predict her creditworthiness, 
then it seems 
very 
plausible that conditional independence holds.}).  
As another example, 
Assumptions~\ref{assu_bias}\,\ref{assu_bias_a} 
is also true if $\As$ is a crowdsourced estimate of $A$ and one 
assumes 
that a crowdworker's probability of providing an incorrect estimate depends on 
the true label of the task (in our case $A$), but not on the task~$X$ itself, which is the standard assumption in 
most 
of the 
ML 
literature on crowdsourcing \citep[cf.][]{kleindessner_crowdsourcing}.
 Assumptions~\ref{assu_bias}\,\ref{assu_bias_b} limits the level of perturbation of the 
 protected 
 attribute, but in a rather moderate way. 
For example, 
in case of 
$\Pro[A=a | Y=y]=1/2$, 
$y\in\{-1,+1\}$, $a\in\{0,1\}$,  
if 
$\Pro[\As\neq A| Y=y]< 1/2$, $y\in\{-1,+1\}$,  
then Assumptions~\ref{assu_bias}\,\ref{assu_bias_b}
is satisfied.

\vspace{1mm}
\begin{theorem}[Bias of $\Ycorr$ vs. bias of $\widetilde{Y}$]\label{theorem_bias}
Assume that Assumptions~\ref{assu_bias}\,\ref{assu_bias_a} holds and that 
$\Pro\left[\AsTheorem\neq A | Y=y,A=a\right]<1$ for $y\in\{-1,+1\}$ and $a\in\{0,1\}$. Then, for $y\in\{-1,+1\}$, 
the derived equalized odds predictor~$\YcorrTheorem$ satisfies 
\begin{align}\label{theorem_bias_formula}
\begin{split}
&\Bias_{Y=y}(\YcorrTheorem)\leq \Bias_{Y=y}(\widetilde{Y})\cdot \\
&~~~~~~~~~~~~F\left(L(y,0),L(y,1),\Pro\left[A=1 |Y=y\right]\right),
\end{split}
\end{align}
where $L(y,a)=\Pro\left[\AsTheorem\neq A | Y=y, A=a \right]$ and $F=F(\gamma_1,\gamma_2,p)$ is some 
differentiable 
function (explicitly stated in \eqref{definition_function_F} in Appendix~\ref{supp_mat_proofs})
that is strictly~increasing both in $\gamma_1$ and in $\gamma_2$ 
with $F(0,0,p)=0$ 
and $F(\gamma_1,\gamma_2,p)\leoq 1$ for all $(\gamma_1,\gamma_2,p)$ with $\gamma_1+\gamma_2\leoq 1$. 
\end{theorem}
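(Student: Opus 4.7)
The plan is to compute $\Bias_{Y=y}(\YcorrTheorem)$ directly from the randomized prediction rule and then use the equality constraint of the LP (formulated with $\AsTheorem$) to relate it to $\Bias_{Y=y}(\widetilde{Y})$. Fix $y$. Since at test time $\YcorrTheorem$ samples using $p_{y,a}$ applied to $(\widetilde{Y}, A)$,
\[
Q(a) := \Pro[\YcorrTheorem = 1 \mid Y = y, A = a] = p_{-1,a} + H(y,a)(p_{1,a} - p_{-1,a}),
\]
where $H(y,a) = \Pro[\widetilde{Y} = 1 \mid Y = y, A = a]$, and $\Bias_{Y=y}(\YcorrTheorem) = |Q(0) - Q(1)|$. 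The LP constraint, written via $H_c(y,a) = \Pro[\widetilde{Y} = 1 \mid Y = y, \AsTheorem = a]$, enforces the analogous identity $Q_c(0) = Q_c(1)$. Subtracting $0 = Q_c(0) - Q_c(1)$ from $Q(0) - Q(1)$ yields
\[
Q(0) - Q(1) = \bigl(H(y,0) - H_c(y,0)\bigr)(p_{1,0} - p_{-1,0}) - \bigl(H(y,1) - H_c(y,1)\bigr)(p_{1,1} - p_{-1,1}).
\]

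Next I would invoke Assumption~\ref{assu_bias}\,\ref{assu_bias_a} to tie $H_c$ to $H$. Conditioning on $A$ and using $\widetilde{Y} \perp \AsTheorem \mid (Y,A)$ gives
\[
H_c(y,a') = (1 - \beta_{a'}) H(y,0) + \beta_{a'} H(y,1), \qquad \beta_{a'} := \Pro[A=1 \mid Y=y, \AsTheorem = a'],
\]
so, writing $\Delta := H(y,1) - H(y,0)$, we get $H(y,0) - H_c(y,0) = -\beta_0 \Delta$ and $H(y,1) - H_c(y,1) = (1-\beta_1)\Delta$. Since $|\Delta| = \Bias_{Y=y}(\widetilde{Y})$, substituting gives
\[
\Bias_{Y=y}(\YcorrTheorem) = \Bias_{Y=y}(\widetilde{Y}) \cdot \bigl| \beta_0 (p_{1,0} - p_{-1,0}) + (1 - \beta_1)(p_{1,1} - p_{-1,1}) \bigr|,
\]
and the trivial slack bound $|p_{1,a} - p_{-1,a}| \leq 1$ yields the multiplicative factor $\beta_0 + (1-\beta_1)$.

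Finally, to obtain the claimed $F$, I would apply Bayes' rule (with $\gamma_1 = L(y,0)$, $\gamma_2 = L(y,1)$, $p = \Pro[A=1 \mid Y=y]$) to get
\[
\beta_0 = \frac{p\gamma_2}{(1-p)(1-\gamma_1) + p\gamma_2}, \qquad 1 - \beta_1 = \frac{(1-p)\gamma_1}{(1-p)\gamma_1 + p(1-\gamma_2)},
\]
and define $F(\gamma_1, \gamma_2, p) := \beta_0 + (1-\beta_1)$. The stated properties are then routine: $F(0,0,p) = 0$ is immediate; each summand is strictly increasing in $\gamma_1$ (via the denominator of the first term and the numerator of the second) and in $\gamma_2$ by analogous inspection, giving joint strict monotonicity; and on the boundary $\gamma_1 + \gamma_2 = 1$ direct substitution collapses $F$ to $p + (1-p) = 1$, so by monotonicity $F \leq 1$ whenever $\gamma_1 + \gamma_2 \leq 1$.

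The main obstacle will be identifying the correct closed form for $F$ once the multiplicative bound is in hand: the raw factor $\beta_0 + (1-\beta_1)$ lives in terms of posteriors, and translating it via Bayes into an explicit expression in $(\gamma_1, \gamma_2, p)$ that simultaneously satisfies monotonicity, vanishes at the origin, and saturates at $1$ on $\gamma_1 + \gamma_2 = 1$ is the one step where careless algebra could produce a weaker bound. Everything else is a one-line conditional-independence mixing identity plus the trivial LP-variable bound.
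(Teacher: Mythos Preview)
Your proof is correct and lands on exactly the same multiplicative bound as the paper: the identity
\[
\Bias_{Y=y}(\Ycorr)=\Bias_{Y=y}(\widetilde{Y})\cdot\bigl|\beta_0(p_{1,0}-p_{-1,0})+(1-\beta_1)(p_{1,1}-p_{-1,1})\bigr|
\]
followed by the triangle-inequality bound $\beta_0+(1-\beta_1)$ and the Bayes-rule rewriting into $F$. The paper reaches the same identity, but by a longer route that passes through the LP \emph{objective}: it writes $p_{1,a}-p_{-1,a}=(\Delta-p_{-1,a}\theta)/u$ (resp.\ $/v$) and manipulates, which forces a separate case analysis when $u=0$ or $v=0$ (the constant-classifier case). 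Your subtraction trick $Q(0)-Q(1)=(Q(0)-Q(1))-(Q_c(0)-Q_c(1))$ uses only the LP \emph{constraint}, so it works for every feasible solution and sidesteps that degenerate case entirely---a genuinely cleaner derivation.

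One minor imprecision: the first summand of your $F$ is not strictly increasing in $\gamma_1$ when $\gamma_2=0$ (it is identically zero then); but the second summand has strictly positive $\gamma_1$-derivative $(1-p)p(1-\gamma_2)/[\cdot]^2$, so the sum is strictly increasing as claimed. Your $F$ differs from the paper's explicit formula only by the relabeling $(\gamma_1,\gamma_2)\leftrightarrow(\gamma_2,\gamma_1)$, which is immaterial since the theorem's stated properties are symmetric in the two arguments.
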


\vspace{3mm}
\begin{lemma}[Assumptions~\ref{assu_bias} are necessary for guaranteeing~\eqref{claim_bias}]\label{lemma_bias_assumption}
If 
any of 
Assumptions~\ref{assu_bias}\,\ref{assu_bias_a} or \ref{assu_bias_b} is violated, 
inequality~\eqref{claim_bias} 
might not be true.
  \end{lemma}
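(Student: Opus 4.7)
The plan is to prove necessity by exhibiting two explicit counterexamples: one small joint distribution in which Assumption~\ref{assu_bias}\,\ref{assu_bias_a} holds but \ref{assu_bias_b} fails, and another in which \ref{assu_bias_b} holds but \ref{assu_bias_a} fails, such that in each case the resulting $\Ycorr$ has strictly larger bias than $\widetilde{Y}$ for at least one class $y$. Since the lemma only asserts that \eqref{claim_bias} ``might not be true'' under a violation, a single witness for each violation suffices.

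For a violation of Assumption~\ref{assu_bias}\,\ref{assu_bias_b}, I would exploit the intuition that when the noise level on the protected attribute is so severe that the corrupted attribute is on average more often wrong than right, the EO postprocessing effectively reasons about a \emph{swapped} group. The cleanest instance is $\As = 1-A$ almost surely, which makes the two summands in \ref{assu_bias_b} equal to $1$ and their sum equal to $2$. Under this swap the perturbed linear program~\eqref{eq_odds_linear_program} is identical to the true linear program up to relabeling $a \leftrightarrow 1-a$, so its optimum $p^\ast_{y,a}$ is simply the true-EO optimum with the $a$-index flipped. Applying these swapped probabilities in the test phase (which uses the true $A$) performs an \emph{anti}-equalizing transformation: in any setting in which $\widetilde{Y}$ already has a modest gap in true positive or false positive rates, the swap amplifies that gap. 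I would pick concrete numerical values, say $Y$ and $A$ uniform on $\{-1,+1\}\times\{0,1\}$ and $\widetilde{Y}$ a simple randomized function of $(Y,A)$ with a small built-in TPR/FPR gap, so that the amplification is easy to read off. The boundary case where only one summand equals $1$, or where the strict sum is slightly above $1$, admits an analogous construction, and I would cover it with a second small variant obtained by softening the flip on one of the two groups.

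For a violation of Assumption~\ref{assu_bias}\,\ref{assu_bias_a}, the plan is to build a distribution in which $\widetilde{Y}$ and $\As$ are correlated given $(Y,A)$ even though the induced marginals on $\As$ alone stay close to those of $A$. A convenient construction is to couple $\As$ directly to $\widetilde{Y}$ on a subset of the $(Y,A)$ cells, so that the joint $\Pro[Y,\As,\widetilde{Y}]$ used in the perturbed LP differs from $\Pro[Y,A,\widetilde{Y}]$ in a way that no noisy-channel-on-$A$ model can reproduce. I would then choose parameters so that $\widetilde{Y}$ has a small but nonzero bias with respect to the true $A$, while appearing substantially more skewed when conditioned on $\As$. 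Solving the perturbed LP yields adjustment probabilities whose mismatch with the true-$A$ conditional distributions causes the test-phase predictor to over- or under-shoot, producing a strictly larger bias than that of~$\widetilde{Y}$.

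The main obstacle is keeping the counterexamples small enough that the perturbed LP~\eqref{eq_odds_linear_program} can be solved by inspection and the comparison against $\Bias_{Y=y}(\widetilde{Y})$ reduces to a handful of elementary probability calculations. I would work with $Y$ and $A$ uniform on $\{-1,+1\}\times\{0,1\}$ and $\widetilde{Y}$ a two- or three-parameter randomized function of $(Y,A)$, so that at most two EO constraints are active and a closed-form optimum can be written out. A secondary concern is ensuring that the chosen $\widetilde{Y}$ is not already optimal for the corrupted LP (which would trivially give $\Ycorr=\widetilde{Y}$ and no change in bias), and that the constant-classifier tiebreak convention adopted after~\eqref{eq_odds_linear_program} is not accidentally triggered; both can be avoided by a mild generic perturbation of the chosen parameters.
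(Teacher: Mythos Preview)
Your proposal is correct and follows the same counterexample strategy as the paper: exhibit one distribution violating \ref{assu_bias_a} while satisfying \ref{assu_bias_b}, and another violating \ref{assu_bias_b} while satisfying \ref{assu_bias_a}, such that $\Bias_{Y=y}(\Ycorr)>\Bias_{Y=y}(\widetilde{Y})$ for some $y$. For the \ref{assu_bias_a}-violation your idea of coupling $\As$ to $\widetilde{Y}$ on a single $(Y,A)$-cell is exactly what the paper does (it sets $\Pro[\As\neq A\mid Y=1,A=0,\widetilde{Y}=-1]=0.15$ and zero elsewhere, with balanced base rates).

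The only noteworthy difference is your construction for the \ref{assu_bias_b}-violation. The paper handles this case rather informally by pointing to a simulation plot (its Figure~\ref{figure_simulations}, top left) where the noise level exceeds $0.5$ and the bias of $\Ycorr$ visibly overshoots that of $\widetilde{Y}$. Your deterministic swap $\As=1-A$ is the extreme point of that same family and has the advantage of being fully analytic: since the perturbed LP is the true LP with $a\leftrightarrow 1-a$, one gets $p'^{\ast}_{\tilde y,a}=p^{\ast}_{\tilde y,1-a}$ and a short computation (using the true EO constraint) gives $\Bias_{Y=y}(\Ycorr)=\Bias_{Y=y}(\widetilde{Y})\cdot\bigl|(p^{\ast}_{1,0}-p^{\ast}_{-1,0})+(p^{\ast}_{1,1}-p^{\ast}_{-1,1})\bigr|$, and in the balanced setting with Assumption~\ref{assu_error} both summands are positive with one equal to $1$, so the multiplier exceeds $1$. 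This buys you a cleaner, self-contained argument than the paper's figure reference, at the cost of using the maximal rather than a ``minimal'' violation of \ref{assu_bias_b}; your remark about softening the flip to get a near-boundary example is the right way to close that gap if you care about it.
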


The proofs of Theorem~\ref{theorem_bias} and Lemma~\ref{lemma_bias_assumption} can be found in  Appendix~\ref{supp_mat_proofs}, 
and we provide some intuition behind Theorem~\ref{theorem_bias} in Section~\ref{subsection_intuition}.  
The main difficulty in proving Theorem~\ref{theorem_bias} comes from the fact that the equalized odds method uses the protected attribute in the test phase. 
This creates conditional 
dependencies that make it necessary to characterize how the optimal probabilities $p_{-1,0},p_{-1,1},p_{1,0},p_{1,1}$ 
to 
the linear program~\eqref{eq_odds_linear_program} 
change under a perturbation of the attribute. 
Note that the counterexamples that we provide for proving Lemma~\ref{lemma_bias_assumption} are not worst-case scenarios in which 
Assumptions~\ref{assu_bias}\,\ref{assu_bias_a} or \ref{assu_bias_b} would be heavily violated. Indeed, our counterexamples show that a moderate violation of 
Assumptions~\ref{assu_bias}\,\ref{assu_bias_a} or a minimal violation of Assumptions~\ref{assu_bias}\,\ref{assu_bias_b} can result in \eqref{claim_bias} not being true. 
Also note that \eqref{theorem_bias_formula} in Theorem~\ref{theorem_bias} provides a quantitative bound on the bias of $\Ycorr$ (in our experiments in Section~\ref{subsec_simulations} we will see that 
in most cases 
this
 bound is 
 quite tight). In case the practitioner can estimate the various probabilities that it involves, this bound might provide additional benefit to her, 
 but this 
 idea 
 goes beyond the scope of our paper (cf. Section~\ref{section_discussion}).

\subsection{Error of the Derived Equalized Odds Predictor under Perturbation
}\label{subsection_error}

The error of $\widehat{Y}$ 
is given by  
$\Error(\widehat{Y})=\Pro\left[\widehat{Y}\neq Y\right]$. 
Note that just as $\Bias_{Y=y}(\widehat{Y})$, $\Error(\widehat{Y})$ refers to the error of $\widehat{Y}$ in the test phase.
As in 
Section~\ref{subsection_bias}, 
let $\Ycorr$ be the derived 
EO 
predictor based on the corrupted protected attribute~$\As$, and let
 $\Ytrue$ be the 
 EO 
 predictor that is based on the true attribute~$A$. 
In our experiments in Section~\ref{section_experiments} we observe that under Assumptions~\ref{assu_bias} 
from  Section~\ref{subsection_bias}  
and the following additional assumption

\vspace{1mm}
\refstepcounter{assumptions}
\textbf{Assumption \theassumptions}\label{assu_error}
\vspace{-2.5mm}
\begin{itemize}
 \item the given predictor~$\widetilde{Y}$ is 
correlated with the ground-truth label $Y$ 
in the sense that for $a\in\{0,1\}$
\begin{align*}
&\Psymb\left[\widetilde{Y}=1 \big| Y=1, A=a\right]>\\
&~~~~~~~~~~~~~~~~\Psymb\left[\widetilde{Y}=1 \big| Y=-1, A=a\right]
\end{align*}
 \end{itemize}
 
we most often have
\begin{align}\label{inequality_error}
 \Error(\Ycorr)\leq\Error(\Ytrue). 
\end{align}
In our experiments, 
if 
inequality~\eqref{inequality_error} 
is not true, it tends to be violated only to a negligible extent. 
In fact, 
our experiments and some intuition (outlined in Section~\ref{subsection_intuition})  
initially  
misled us to 
conjecture that  
 \eqref{inequality_error}  
would \emph{always} 
be true 
under Assumptions~\ref{assu_bias} and~\ref{assu_error} 
(cf. 
the prior arXiv-version of this 
paper), but as we show in Section~\ref{subsec_simulations},  
such a 
conjecture is wrong. It only holds  in a special balanced case as stated in  
Theorem~\ref{thm:accuracy-balanced}  below. In general, it remains an open question which assumptions on $\As$, $\widetilde{Y}$, and the 
base rates~
$\Psymb[Y=y,A=a]$ 
would 
guarantee 
inequality~\eqref{inequality_error} 
to hold (cf. Section~\ref{section_discussion}).

Assumption~\ref{assu_error} is mild and 
kind of a minimal requirement 
for $\widetilde{Y}$  to be considered useful. 
If 
$\Psymb[Y=y| A=a] = 1/2$, $y\in\{-1,+1\}$, $a\in\{0,1\}$,  
it 
is equivalent to requiring 
$\widetilde{Y}$ to be a weak learner for 
both  
groups $A=a$,  
that 
is to satisfy 
$\Psymb\left[\widetilde{Y}\neq Y \big| A=a\right]<1/2$, $a\in\{0,1\}$. 
However, in a special balanced case, together with Assumptions~\ref{assu_bias}, 
Assumption~\ref{assu_error} 
is sufficient to guarantee 
that 
\eqref{inequality_error} 
holds 
as the 
following theorem states:

\begin{theorem}[Error of $\Ycorr$ vs. error of $\Ytrue$  in a special case]
\label{thm:accuracy-balanced}
Assume that Assumptions~\ref{assu_bias} and~\ref{assu_error} hold. Furthermore, assume that  
$\Psymb[Y=y, A=a] = 1/4$, 
$y\in\{-1,+1\}$, $a\in\{0,1\}$, and  
$\Pro\left[\AsTheorem\neq A |Y=y, A=a\right]\in
(0,1/2]$  
does not depend on 
$y$ and $a$. Then we have
\begin{align*}
 \Error(\YcorrTheorem)\leq\Error(\YtrueTheorem), 
\end{align*}
with equality holding  if and only if the given classifier~$\widetilde{Y}$ is unbiased, 
that is 
$\Bias_{Y=+1}(\widetilde{Y})=\Bias_{Y=-1}(\widetilde{Y})=0$.
\end{theorem}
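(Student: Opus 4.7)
The plan is to reduce both LPs to two-dimensional problems in the variables $\Delta_a := p_{1,a} - p_{-1,a}$, solve them in closed form, and then compare the true errors algebraically. A pleasant feature of the balanced/symmetric-noise setting is that the active-constraint ``case'' of the LP is preserved under corruption, so there is no mismatched case to worry about.

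As a first step, I will show that under balance, $\Error(\widehat{Y}) = \tfrac{1}{2} - \tfrac{1}{4}(\delta_0\Delta_0 + \delta_1\Delta_1)$ for \emph{any} postprocessed $\widehat{Y}$, where $\delta_a := \Pro[\widetilde{Y}=1|Y=1,A=a] - \Pro[\widetilde{Y}=1|Y=-1,A=a]$. Crucially, this formula uses the test distribution (with the true $A$) and depends on $p$ only through the $\Delta_a$, so it applies to both $\Ytrue$ and $\Ycorr$. The true EO constraints collapse to (C1) $\delta_0\Delta_0 = \delta_1\Delta_1$ together with (C2) a single linear coupling of $p_{-1,0}, p_{-1,1}$ via $\mu_a := (\Pro[\widetilde{Y}=1|Y=1,A=a] + \Pro[\widetilde{Y}=1|Y=-1,A=a])/2$; under symmetric noise, the corrupted analogues obey $\delta_a^c = (1-\gamma)\delta_a + \gamma\delta_{1-a}$ and $\mu_a^c = (1-\gamma)\mu_a + \gamma\mu_{1-a}$.

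Second, I project both LPs onto the $(\Delta_0, \Delta_1)$-plane by eliminating $p_{-1,a}$: feasibility of (C2) plus the box constraints becomes a pair of affine inequalities on $(\Delta_0, \Delta_1)$. Using Assumption~\ref{assu_error} and group-relabeling to impose $\delta_0 \geq \delta_1 > 0$, the line (C1) parameterizes a 1D slice along which $2\delta_0 \Delta_0$ is maximized at one of three explicit boundary points, giving LP values $2\delta_1$, or $2\delta_0\delta_1/X$ with $X := \mu_0\delta_1 + (1-\mu_1)\delta_0$, or $2\delta_0\delta_1/Y$ with $Y := \mu_1\delta_0 + (1-\mu_0)\delta_1$, depending on the ordering of $\mu_0\delta_1$ vs. $\mu_1\delta_0$ and of $(1-\mu_0)\delta_1$ vs. $(1-\mu_1)\delta_0$. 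A direct computation gives $\mu_0^c\delta_1^c - \mu_1^c\delta_0^c = (1-2\gamma)(\mu_0\delta_1 - \mu_1\delta_0)$ and similarly for $1-\mu$, so the case classification is invariant under corruption for $\gamma \in (0, 1/2]$, and the same formulas apply to the corrupted LP with $\delta^c, \mu^c$ in place of $\delta, \mu$.

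Third, I substitute $(\Delta_0^c, \Delta_1^c)$ into the true objective and compare to the true LP value case by case. The key identities $\delta_0\delta_1^c + \delta_1\delta_0^c = 2\delta_0\delta_1 + \gamma(\delta_0-\delta_1)^2$ and $\mu_0^c\delta_1^c + (1-\mu_1^c)\delta_0^c = (1-\gamma)X + \gamma Y$ reduce each case to showing a nonnegative expression. In the canonical case (maximizer at $\Delta_1 = 1$) the reduction is simply $\delta_0\delta_1^c - \delta_1\delta_0^c = \gamma(\delta_0^2 - \delta_1^2) \geq 0$, while in Case A (value $2\delta_0\delta_1/X$) it becomes $X(\delta_0-\delta_1)^2 + 2\delta_0\delta_1(X-Y) \geq 0$; the Case A condition plus WLOG $\delta_0 \geq \delta_1$ yields $X \geq Y$ via $X - Y = 2(\mu_0\delta_1 - \mu_1\delta_0) + (\delta_0 - \delta_1)$, so both summands are nonnegative, and the third case is handled symmetrically by swapping $\mu$ with $1-\mu$. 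Finally, each nonnegative quantity arising is $\gamma$ times a squared difference in $\delta$ or in $\mu$ (or $1-\mu$), and all vanish simultaneously iff $\delta_0 = \delta_1$ \emph{and} $\mu_0 = \mu_1$, equivalently $\Pro[\widetilde{Y}=1|Y=y,A=0] = \Pro[\widetilde{Y}=1|Y=y,A=1]$ for both $y$, i.e., $\widetilde{Y}$ is unbiased. The main obstacle is the algebra in this third step---isolating the two identities above and checking that the Case A / Case B2 inequalities really do reduce to manifestly nonnegative quantities; once this is in hand the theorem follows.
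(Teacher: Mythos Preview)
Your proposal is correct and follows the same overall strategy as the paper's proof: reduce the four-dimensional LP to a one-parameter problem along the EO-constraint line, observe that the identity of the binding constraint is the same for $\gamma=0$ and for $\gamma=\gamma_0\in(0,1/2]$, and then compare the two test errors case by case via explicit algebra.

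The differences are organizational rather than conceptual. The paper works directly in the $(\alpha_1,\alpha_2,\beta_1,\beta_2)$ coordinates, imposes the WLOG $\alpha_2\beta_1\ge\alpha_1\beta_2$ (which in your language is $\mu_0\delta_1\ge\mu_1\delta_0$), keeps all four $p_{y,a}$ in play after normalizing $p_{-1,0}=0$, and splits into two cases according to whether $p_{1,0}=1$ or $p_{1,1}=1$ at optimality. You instead pass to the $(\delta_a,\mu_a)$ coordinates, impose the WLOG $\delta_0\ge\delta_1$, project out $p_{-1,a}$ entirely so that only $(\Delta_0,\Delta_1)$ remains, and split into three cases according to which of $\{\delta_1,\,\delta_0\delta_1/X,\,\delta_0\delta_1/Y\}$ is smallest. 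Your parametrization makes the corruption map especially transparent (each corrupted quantity is a convex combination of its uncorrupted counterpart and its group-swapped version), which yields the clean identities $X^c=(1-\gamma)X+\gamma Y$ and $\mu_0^c\delta_1^c-\mu_1^c\delta_0^c=(1-2\gamma)(\mu_0\delta_1-\mu_1\delta_0)$; the paper establishes the analogous case-invariance by showing that the sign of $p_{1,0}-p_{1,1}$ is independent of $\gamma$. Both routes land on the same algebraic comparisons, and your equality characterization ($\delta_0=\delta_1$ and $\mu_0=\mu_1$) matches the paper's ($\alpha_1=\beta_1$ and $\alpha_2=\beta_2$).
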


 The proof of Theorem~\ref{thm:accuracy-balanced} can be found  in 
 Appendix~\ref{supp_mat_proofs}.
Although 
several 
expressions in the analysis of the linear program~\eqref{eq_odds_linear_program} 
simplify in the special case considered in 
Theorem~\ref{thm:accuracy-balanced}, the proof of Theorem~\ref{thm:accuracy-balanced} 
is still 
involved and requires a case analysis 
that distinguishes 
which of 
the  probabilities in an optimal solution to~\eqref{eq_odds_linear_program} equal 1.

\subsection{Intuition behind 
\eqref{claim_bias} and \eqref{inequality_error}
}\label{subsection_intuition}

To provide some intuition behind our results 
and observations, 
consider the simple case of independently flipping each data point’s protected attribute to its complementary value with probability~$\gamma$.  
If $\gamma=0$, the 
EO 
method gets to see the true attribute and we end up with the classifier~$\Ytrue$, which has zero bias, 
but usually quite a larger error than the given classifier~$\widetilde{Y}$.   
If $\gamma=0.5$, the EO method gets to see random noise as the attribute and the given classifier~$\widetilde{Y}$ appears to be totally fair. 
In this case, unless 
$\widetilde{Y}$ is rather  bad and its accuracy 
can be improved 
simply by flipping its predictions from $+1$ to $-1$, or the other way round, for a group $A=a$ 
(in the balanced case studied in Theorem~\ref{thm:accuracy-balanced},  Assumption~\ref{assu_error} rules out such a situation),
the EO method 
returns 
the given $\widetilde{Y}$, 
which has smaller error than $\Ytrue$, but higher test-phase bias. 
For $0<\gamma<0.5$, the EO method yields a 
classifier~$\Ycorr$ that interpolates between these two extremes: some amount of random noise in the attribute makes the given classifier~$\widetilde{Y}$
appear more fair than it actually is and the EO method changes $\widetilde{Y}$ 
(i.e., 
decreases its bias~/~increases its error) in a less severe way than when it gets 
to see the true attribute. This interpolation behavior can be seen nicely in the simulations 
that we~provide~in~Section~\ref{subsec_simulations}.

While the conclusions 
in 
this simple case about the relationship between $\widetilde{Y}$, $\Ytrue$, and $\Ycorr$ with respect to the bias
carry over to the more general setting that we consider (we proved \eqref{claim_bias} to be always true), 
we do not know where our intuition breaks down with respect to the error in those rare situations in which \eqref{inequality_error} is not true 
(cf. Sections~\ref{subsec_simulations} and~\ref{section_discussion}). 

\section{RELATED WORK}\label{section_related_work}

This section is a short version of a corresponding long version provided in Appendix~\ref{appendix_related_work}. 

By now, there is a huge body of work on fairness in 
ML, 
mainly  in supervised learning 
\citep[e.g.,][]{feldman2015,hardt2016equality,kleinberg2017,pleiss2017,woodworth2017,
zafar2017,zafar2017www,agarwal2018,donini2018,FairGAN2018,kallus2019}, 
but 
more 
recently also in  unsupervised learning  
\citep[e.g.,][]{fair_clustering_Nips2017,samira2018,fair_k_center_2019,fair_SC_2019}. 
All of these papers assume to know the true value of the protected attribute for 
every 
data point. We will discuss some papers not making this assumption below. 
First we discuss the pieces of work
 related to the fairness notion of equalized odds, which is central to our paper and one of the most prominent fairness notions in the ML literature
 (see \citealp{Verma2018}, for a 
 summary of the various 
 notions and a citation~count).

 \vspace{1.5mm}
 \textbf{Equalized Odds~~}
 Our paper builds upon the 
 EO 
 method of \citet{hardt2016equality} as described in Section~\ref{section_equalized_odds_MAIN}. 
Concurrently with 
\citeauthor{hardt2016equality}, the 
fairness 
notion of 
EO 
has also been proposed by \citet{zafar2017www} under the name of disparate mistreatment. 
The seminal paper of 
 \citet{kleinberg2017} proves that, except for trivial cases, a classifier cannot satisfy the 
 EO 
 criterion and 
 calibration within groups 
 at the same time. 
 Subsequently,  
 \citet{pleiss2017} show how to achieve calibration within groups and  a relaxed 
 form of the 
 EO 
 constraints 
 simultaneously. 
 \citet{woodworth2017} show that 
 postprocessing a Bayes optimal 
 unfair  
 classifier 
 in order to obtain a fair classifier 
  can~be~suboptimal.

\vspace{1.5mm}
\textbf{Fairness with 
Only 
Limited Information about the 
Attribute~~} 
Only recently there have been works studying how to satisfy group fairness criteria when having only limited information about the protected attribute. 
Most important to mention are  the 
works by  \citet{gupta2018} and \citet{lamy2019}. 
\citet{gupta2018} empirically show that 
when the 
attribute is not known, improving a fairness metric  
for a proxy of the true 
attribute 
can 
improve the fairness metric for the true attribute. 
Our paper provides theoretical evidence 
for their observations.
\citet{lamy2019} study a scenario related to ours and consider training a fair classifier when the 
attribute is corrupted 
according to a mutually contaminated model \citep{scott2013}. 
In their case, 
training 
is done by means of constrained empirical risk minimization. 
Also important to mention is the paper by \citet{Hashimoto2018}, 
which
uses distributionally robust optimization in order to minimize the worst-case misclassification risk in a $\chi^2$-ball around the data generating distribution. 
In doing so, under the assumption that the resulting non-convex optimization problem was solved exactly, 
one provably 
controls the risk of each protected group without knowing which group a data point belongs to. \citeauthor{Hashimoto2018} 
show that their 
approach helps to avoid disparity amplification 
in a sequential classification setting in which a group's fraction in the data decreases as its misclassification risk increases. 
In Section~\ref{subsec_exp_repeated_loss_minimization} /  Appendix~\ref{supp_mat_rep_loss_min},   
we experimentally compare 
their approach 
to the 
EO 
method with perturbed 
attribute information in such a sequential setting.  
Further works around group fairness 
with 
limited information about the 
attribute
are the papers by
\citet{botros2018}, \citet{kilbertus2018}, \citet{chen_fairness_under_unawareness}, and 
\citet{coston2019}.

\begin{figure*}[t]
 \centering
 \includegraphics[width=7.1cm]{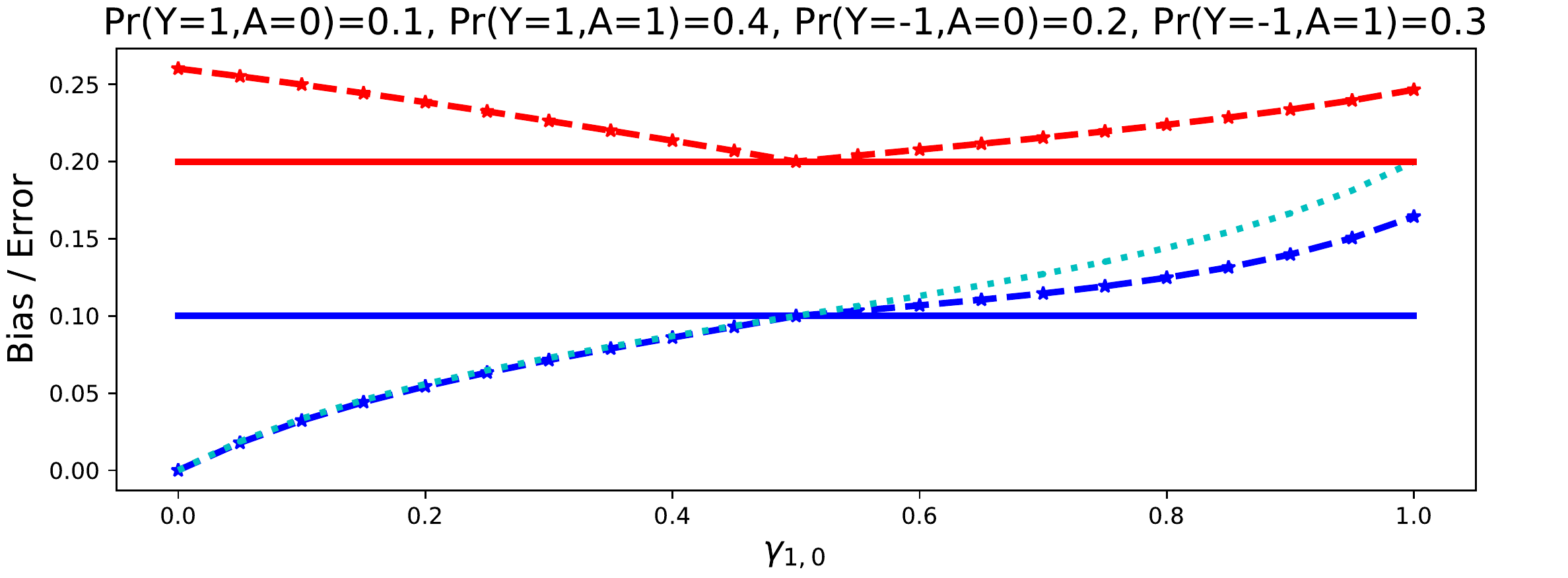}
 \hspace{8mm}
 \includegraphics[width=7.1cm]{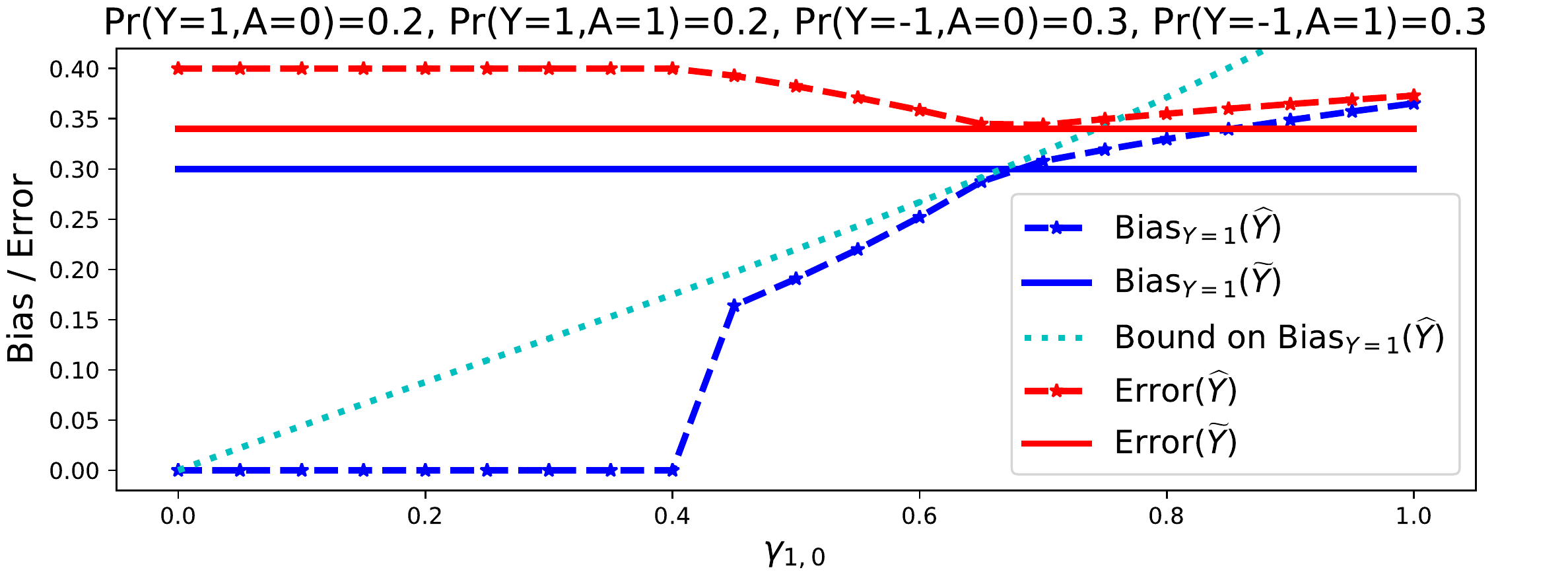}
 
 \vspace{2mm}
 \includegraphics[width=7.1cm]{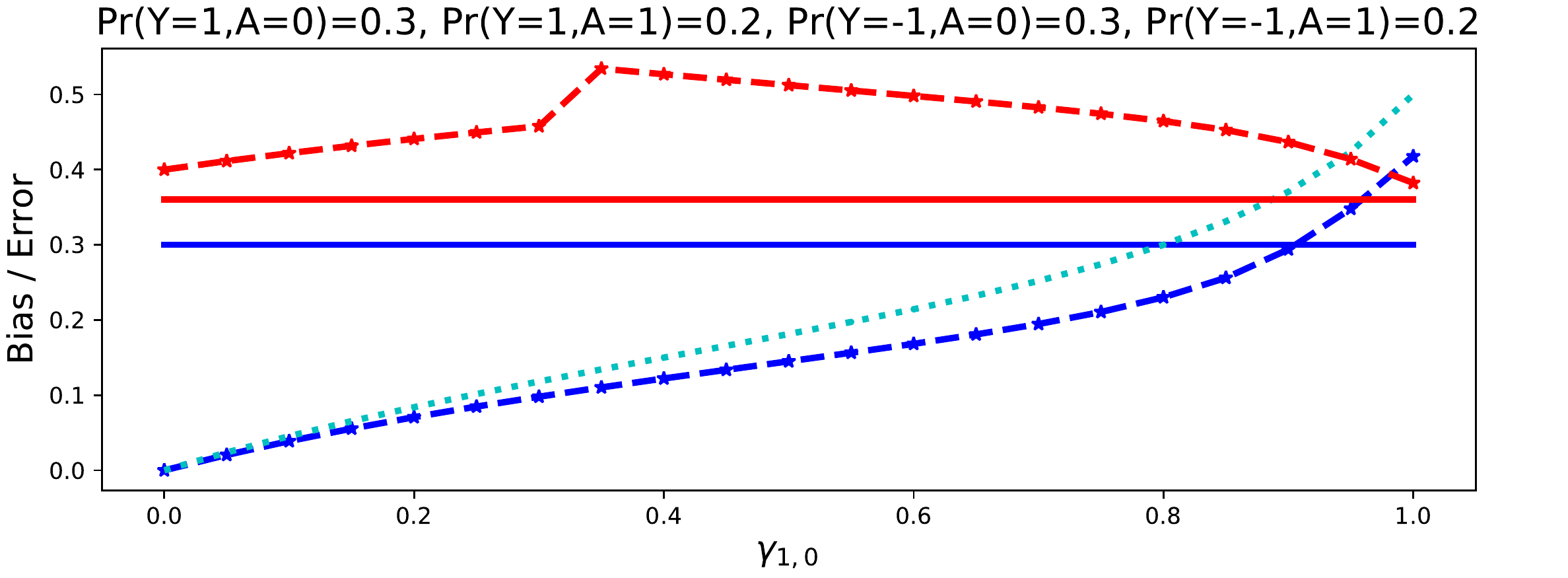}
 \hspace{8mm}
 \includegraphics[width=7.1cm]{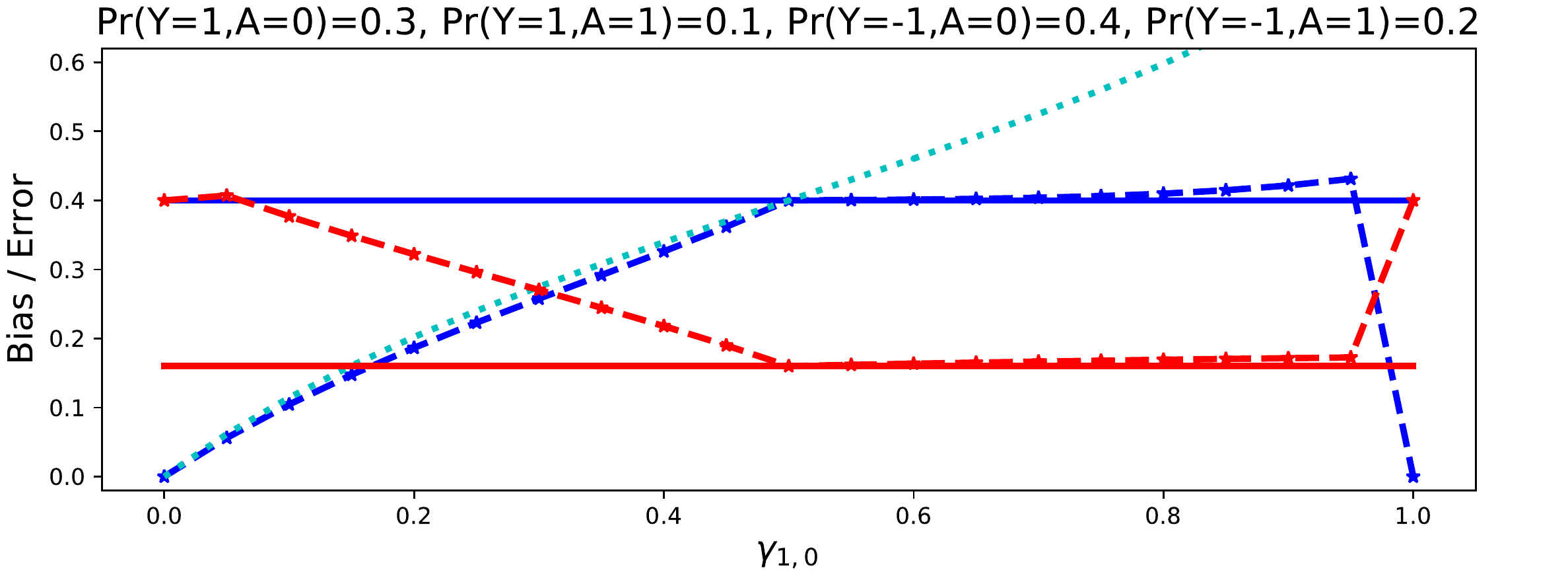}

\caption{$\Bias_{Y=1}(\widehat{Y})$ (dashed blue) and $\Error(\widehat{Y})$ (dashed red) as a function of the perturbation level for various 
problem parameters 
(see the titles of the plots and Table~\ref{table_parameters_main} in Appendix~\ref{subsec_table1}). 
For $\gamma_{1,0}=0$ it is $\widehat{Y}=\Ytrue$, and for $\gamma_{1,0}>0$ it is $\widehat{Y}=\Ycorr$.
The solid lines show the bias (blue) and the error (red) of 
$\widetilde{Y}$.
The dotted cyan 
curve shows the bound on $\Bias_{Y=1}(\widehat{Y})$ provided in \eqref{theorem_bias_formula} in Theorem~\ref{theorem_bias}. 
In the left bottom plot, 
Assumption~\ref{assu_error} is not satisfied and here the error of $\widehat{Y}$ clearly initially increases, 
that is \eqref{inequality_error} does not hold. In the right bottom plot, although Assumption~\ref{assu_error} is satisfied, the 
error of $\widehat{Y}$ also initially increases, but in this case only to a negligible~extent.}\label{figure_simulations}
\end{figure*}

\section{EXPERIMENTS}\label{section_experiments}

In this section, we present a number of experiments.\footnote{Python  code  available  on \url{https://github.com/matthklein/equalized_odds_under_perturbation}.} 
First, we study the bias and 
the error of the 
EO 
predictor~$\widehat{Y}$ 
as a function of the  level of perturbation of the 
protected 
attribute 
in extensive simulations. In doing so, we empirically validate 
Theorems~\ref{theorem_bias} and~\ref{thm:accuracy-balanced} of Section~\ref{section_analysis} and provide evidence for our claim 
of Section~\ref{subsection_error} 
that most often we observe inequality~\eqref{inequality_error} to be true. 
Next, we show some experiments on real data, 
providing 
some motivation for our paper and 
further support for 
its main results and claims. 
Finally, we consider the repeated loss minimization setting of \citet{Hashimoto2018} and demonstrate that the 
EO 
method achieves the same goal
as 
their strategy, 
even when the protected attribute is highly perturbed.

\begin{table*}[t]
  \caption{Experiment on the Drug Consumption data set.  The full table is provided in Appendix~\ref{appendix_table_drugs}.}\label{table_drugs_main}
   \centering
\renewcommand{\arraystretch}{1.5}
\begin{small}
\begin{tabular}{cccccccc}
\toprule \toprule
$Y$ & $\Psymb[Y=1]$ & $\Bias_{Y=1/-1}(\widetilde{Y})$ & $\Bias_{Y=1/-1}(\Ycorr)$ & $\Error(\widetilde{Y})$ & $\Error(\Ycorr)$ & 
$\Error(\Ytrue)$ & Co. Ind. \eqref{independence_measure}\\
\toprule
Amphet & 0.36 & 0.085 / 0.106 & 0.076 / 0.065 &  0.317 &  0.339 &  0.352 &  0.033 \\ 
\midrule
 Benzos &  0.41 & 0.074 / 0.132 & 0.064 / 0.1 &  0.351 &  0.369 &  0.39 &  0.036 \\ 
 \midrule
 Cannabis &  0.67 & 0.092 / 0.052 & 0.091 / 0.073 &  0.214 &  0.227 &  0.255 &  0.032 \\





\bottomrule \bottomrule
\end{tabular}
\end{small}
\end{table*}

\subsection{Simulations of Bias and Error}\label{subsec_simulations}

For various choices of the problem parameters $\Psymb[Y=y,A=a]$ and $\Psymb\left[\widetilde{Y}=1|Y=y,A=a\right]$, we study how the 
bias and the error of the derived 
EO 
predictor $\widehat{Y}$ change 
as the  perturbation probabilities $\Pro\left[\As\neq A | Y=y, A=a \right]$, with which the protected attribute in the 
EO 
training phase is perturbed, increase. 
For doing so, we solve the linear program~\eqref{eq_odds_linear_program} where in all probabilities 
$A$ is replaced by $\As$. We 
always assume that 
 Assumptions~\ref{assu_bias}\,\ref{assu_bias_a} is satisfied.  
The resulting linear program is provided in Appendix~\ref{subsec_supp_mat_sim_details}. 
We compare 
the bias and the error of $\widehat{Y}$ to the  bias and the error of $\widetilde{Y}$, and we also compare the bias of $\widehat{Y}$ to our theoretical bound provided
in \eqref{theorem_bias_formula} in Theorem~\ref{theorem_bias}. Let $\gamma_{y,a}:=\Pro\left[\As\neq A | Y=y, A=a\right]$, $y\in\{-1,+1\},a\in\{0,1\}$. 
Figure~\ref{figure_simulations} shows the quantities of interest as a function 
of $\gamma_{1,0}$, 
where 
$\gamma_{1,1},\gamma_{-1,0},\gamma_{-1,1}$ grow with $\gamma_{1,0}$ in a certain way, in various scenarios  
(the probabilities $\Psymb[Y=y,A=a]$ 
can be read from the titles 
of the plots, and the other parameters are provided in Table~\ref{table_parameters_main} in Appendix~\ref{subsec_table1}). 
In the notation of Section~\ref{section_analysis}, for $\gamma_{1,0}=0$ it is $\widehat{Y}=\Ytrue$  
and for $\gamma_{1,0}>0$ it is $\widehat{Y}=\Ycorr$.  
For clarity, we only show the bias for the class $Y=+1$. 
As suggested by our upper 
bound~\eqref{theorem_bias_formula},
in all four plots 
the bias of $\widehat{Y}$ is increasing as the perturbation level increases, and we can see that our upper 
bound is quite tight in most cases. For a moderate perturbation level with $\gamma_{1,0}+\gamma_{1,1}<1$, the bias of $\widehat{Y}$ is smaller than the bias of $\widetilde{Y}$ as 
claimed 
by Theorem~\ref{theorem_bias}. 
Note that all four plots show a non-balanced case, which is not captured by Theorem~\ref{thm:accuracy-balanced}.  
Still, in the two plots in the top row, 
the error of $\widehat{Y}$ 
decreases as the perturbation level increases up to the point that 
the error of $\widehat{Y}$ equals the error of~$\widetilde{Y}$, 
that is inequality~\eqref{inequality_error} is true.  
In the bottom left plot, Assumption~\ref{assu_error} is not satisfied   
and we do not expect inequality~\eqref{inequality_error} to hold here. In the bottom right plot, 
Assumption~\ref{assu_error} is  satisfied, but \eqref{inequality_error} does not hold either 
since the error of~$\widehat{Y}$ initially increases; however, the violation of \eqref{inequality_error} 
is negligible (for $\gamma_{1,0}=0$, it is $\Error(\widehat{Y})=0.4$, and for $\gamma_{1,0}=0.05$, it is $\Error(\widehat{Y})=0.407$). 
Note that we do not fully understand this behavior (cf. Sections~\ref{subsection_error}, \ref{subsection_intuition} and~\ref{section_discussion}).
We make 
similar 
observations in a 
number of further experiments of this type 
presented
in Appendix~\ref{supp_mat_section_experiments}, and our findings 
confirm 
the main claims~of~our~paper.

\subsection{Experiments on Real Data}\label{subsec_exp_real_data}

We first present an experiment in which we train a classifier to predict the protected attribute and replace the true attribute by the prediction in the 
EO 
training 
phase. Such a scenario is one of our motivations for studying the 
EO 
method under a perturbation of the protected attribute. 
We perform the experiment on the Drug Consumption data set \citep{drug_consumption_data}. It comprises 1885 records of human subjects, and
for each subject, it provides 
five demographic features (e.g., Age, Gender, or Education), seven features measuring personality traits (e.g., Nscore is a measure of neuroticism and Ascore of agreeableness), 
and 18 features 
each 
of which 
describes the 
subject's last use of a certain drug (e.g., Cannabis). 
We set the protected attribute~$A$ to be Gender, and, fixing a drug, 
we set the ground-truth label~$Y$ to indicate whether 
a subject has used the drug within the last decade ($Y=1$) or not ($Y=-1$).
Randomly splitting the data set into three batches of equal size, 
we use the first batch to train a logistic regression classifier that predicts $A$ using 
the features 
Nscore and Ascore (these two turned out to 
work best), 
and 
 a one-hidden-layer perceptron that predicts $Y$ using the demographic features except Gender 
 and the five 
 features for personality traits
 other than Nscore and Ascore.  We consider the first classifier to provide a perturbed version $\As$ of the true 
 attribute~$A$ and  
 the second classifier to be the given classifier~$\widetilde{Y}$. 
  We use the second batch to derive 
  EO 
  predictors~$\Ycorr$ and~$\Ytrue$ from $\widetilde{Y}$, where 
 $\Ycorr$ is based on $\As$ and $\Ytrue$ is based on $A$.  
 The third batch is our test batch, on which 
 we evaluate the bias and the error of $\widetilde{Y}$, $\Ycorr$, and $\Ytrue$, 
 the probability of $\As$ not equaling $A$, 
 and also 
 whether Assumptions~\ref{assu_bias} and~\ref{assu_error} are satisfied. 
We measure the extent to which Assumptions~\ref{assu_bias}\,\ref{assu_bias_a} is violated by the 
estimated 
$l_{\infty}$-distance between the conditional (given $Y$ and $A$) 
joint distribution of $\widetilde{Y}$ and $\As$ and the product of their conditional marginal distributions, that is 
\begin{align}\label{independence_measure}
\begin{split}
 &\max_{\substack{y,\tilde{y}\in\{-1,+1\}\\a,\tilde{a}\in\{0,1\}}}\left|\Pro\left[\widetilde{Y}=\tilde{y},\As=\tilde{a}\big|Y=y,A=a\right]-\right.\\
&\left.\Pro\left[\widetilde{Y}=\tilde{y}\big|Y=y,A=a\right]\hspace{-0.9pt}\cdot\hspace{-0.7pt} \Pro\left[\As=\tilde{a}\big|Y=y,A=a\right]\right|.
\end{split}
\end{align}
Note that, in the distributional setting,  Assumptions~\ref{assu_bias}\,\ref{assu_bias_a} is satisfied if and only if this quantity~is~zero.

Table~\ref{table_drugs_main} shows the results for three of the drugs, where we report average results obtained from running the experiment for 200 times. 
For the sake of readability, we do not report $\Bias_{Y=y}(\Ytrue)$ (which equals zero in the distributional setting) in 
Table~\ref{table_drugs_main}. 
A full table that  shows $\Bias_{Y=y}(\Ytrue)$ as well as the results for the other 
drugs is provided as Table~\ref{table_drugs_appendix} in Appendix~\ref{appendix_table_drugs}. 
It is 
$\Psymb[A=a]=1/2$ 
and  
$\Psymb[\As\neq A | A=a]=0.4$, $a\in\{0,1\}$. 
Almost always,
Assumptions~\ref{assu_bias}\,\ref{assu_bias_b} and Assumption~\ref{assu_error} are 
satisfied (see Table~\ref{table_assumptions_are_satisfied} in  
Appendix~\ref{appendix_table_drugs} for details). 
As we can see from the last column of Table~\ref{table_drugs_main} or Table~\ref{table_drugs_appendix}, 
for all the drugs,  the measure~\eqref{independence_measure} is rather small, indicating that also 
Assumptions~\ref{assu_bias}\,\ref{assu_bias_a} might be (almost) satisfied. 
In this light, 
the results for the bias and the error of the various classifiers are in accordance with the claims of our paper: 
we most often have  
$\Bias_{Y=y}(\Ycorr)< \Bias_{Y=y}(\widetilde{Y})$ 
and we 
always have 
$\Error(\Ycorr)\leq\Error(\Ytrue)$.
We 
consider 
finite-sample effects to be responsible for the first inequality not always being true  
since for Cannabis it even happens that $\Bias_{Y=-1}(\Ytrue)> \Bias_{Y=-1}(\widetilde{Y})$ 
(cf. Table~\ref{table_drugs_appendix}).

\newcommand{\wire}{7.3cm}
\begin{figure*}[t]
  \centering
 \includegraphics[width=\wire]{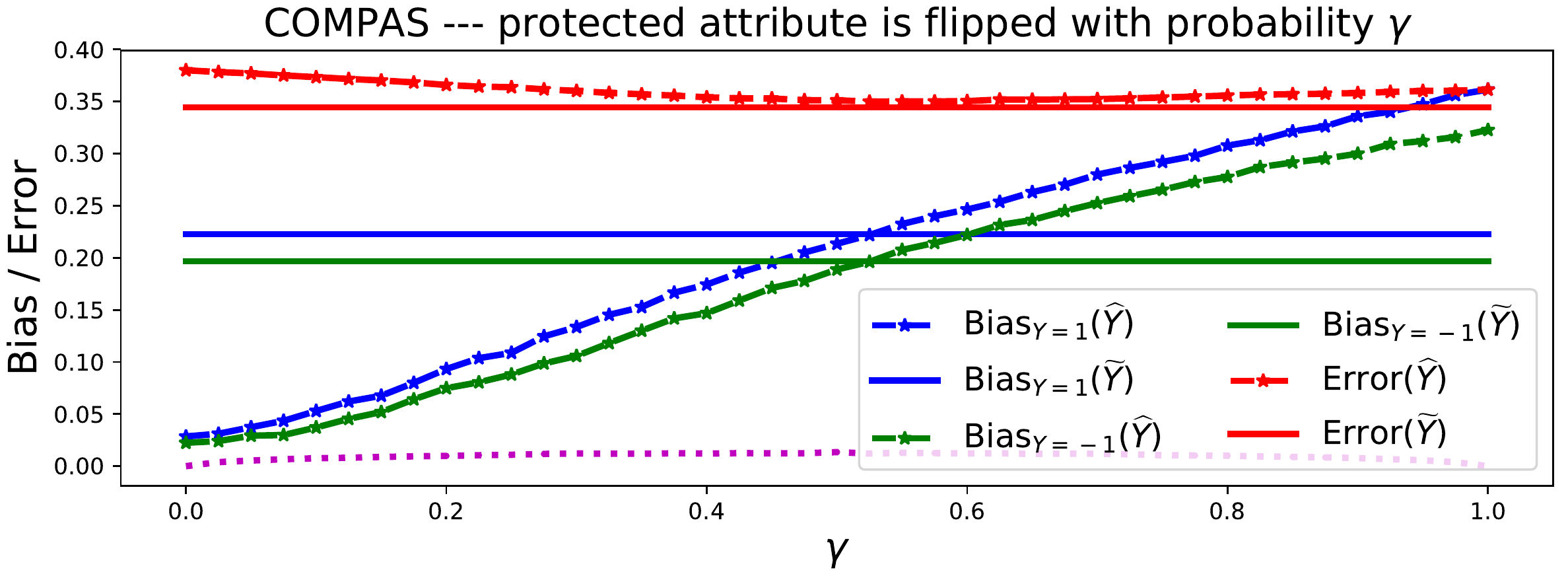}
 \hspace{8mm}
 \includegraphics[width=\wire]{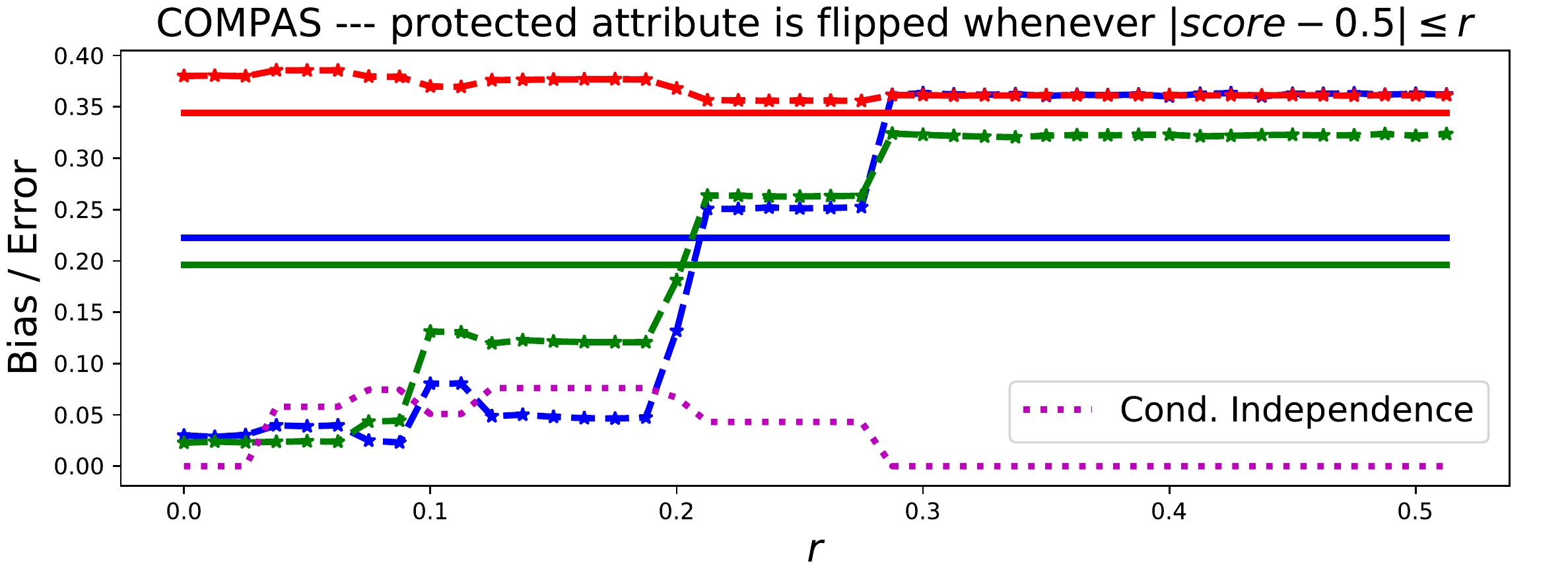}
 
 \vspace{2mm}
 \includegraphics[width=\wire]{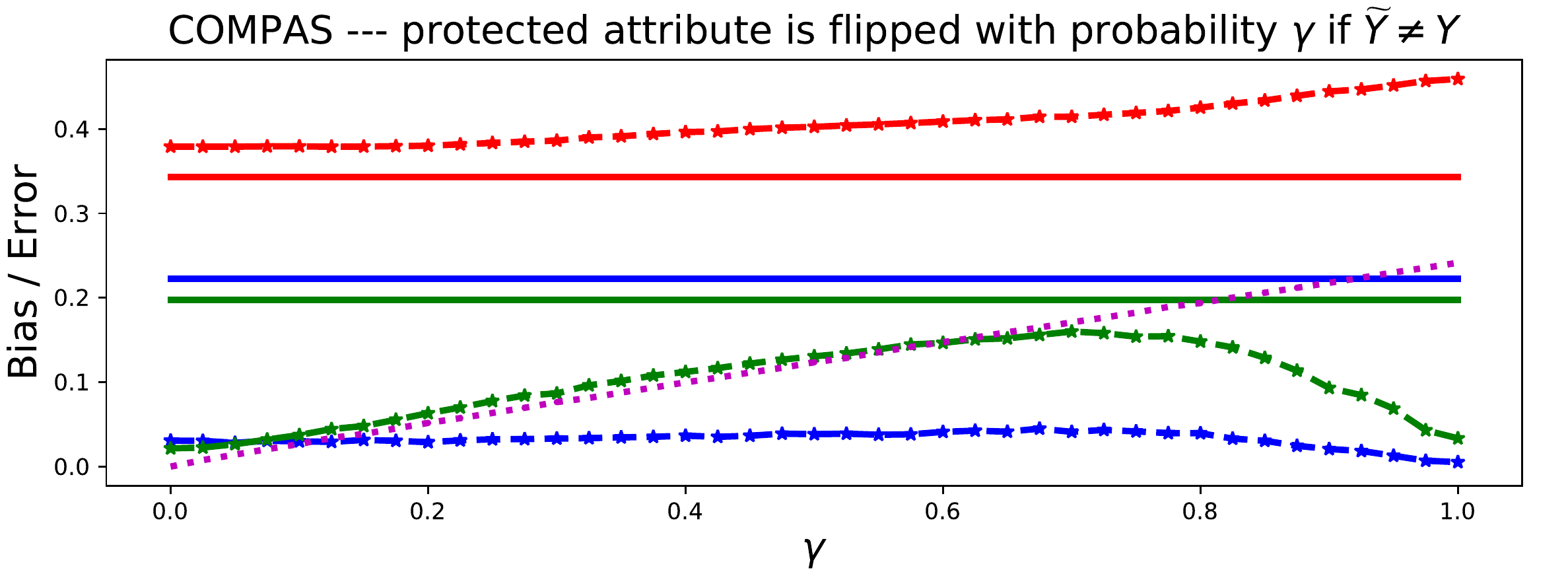}
 \hspace{8mm}
 \includegraphics[width=\wire]{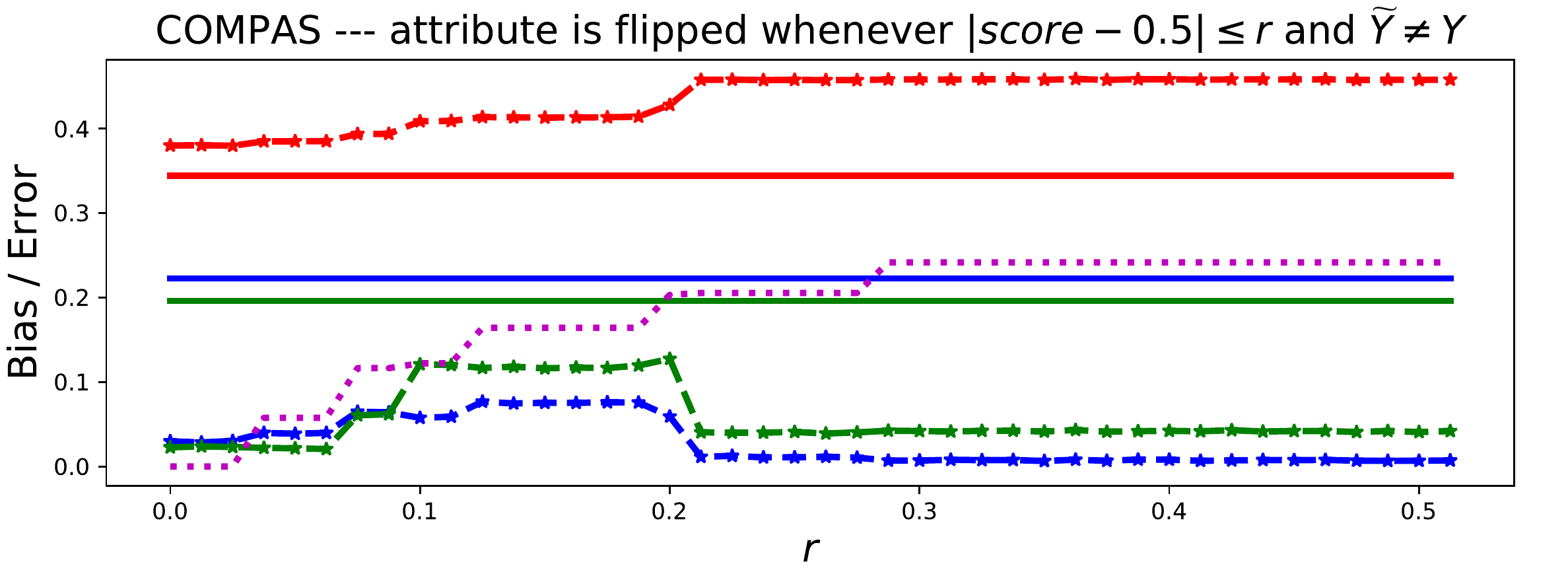}

\caption{COMPAS data set. 
$\Bias_{Y=+1 / -1}(\widehat{Y})$ (dashed blue / dashed green) 
and $\Error(\widehat{Y})$ (dashed red) as a function of the perturbation level 
in four perturbation scenarios.
The solid lines show the bias (blue and green) and the error (red) of 
$\widetilde{Y}$. 
The magenta line shows an estimate of \eqref{independence_measure} and how heavily Assumptions~\ref{assu_bias}\,\ref{assu_bias_a} is violated.}\label{figure_experiments_real_data}
\end{figure*}

In our 
second 
experiment, we run the 
EO 
method on two real data sets when we artificially perturb the protected attribute in one of four ways: either 
 we  set 
 the
 attribute of each data point to its complementary value independently with probability~$\gamma$, or we 
deterministically flip the attribute of every data point whose score lies in the interval 
$[0.5-r,0.5+r]$, or we perturb the attribute in one of these two ways only for those data points for which $\widetilde{Y}\neq Y$. 
The score of a data point is the likelihood predicted by a classifier for the data point to belong to the 
class~$Y=1$  
and is related to the given predictor~$\widetilde{Y}$ in that $\widetilde{Y}$ 
predicts $+1$ whenever the score is greater than 
$0.5$. 
We build upon the data provided by \citet{pleiss2017}. 
It contains the ground-truth labels, the true protected attributes and the
predicted scores for 
the COMPAS criminal recidivism risk assessment data set \citep{dieterich2016} and 
the Adult data set \citep{Dua:2019}. 
The scores for the COMPAS data set are the actual scores from the COMPAS risk assessment tool, the scores for the Adult data set are obtained from 
a multilayer perceptron. We randomly split the 
data sets into a training and a test set of equal size (we report several statistics such as the sizes of the original data sets in Appendix~\ref{supp_mat_setion_stats_real_data}).
Figure~\ref{figure_experiments_real_data} 
shows the bias and the error of 
$\widetilde{Y}$
and the derived 
EO 
predictor~$\widehat{Y}$ as well as an estimate of
\eqref{independence_measure} 
in the four perturbation scenarios as a function of the perturbation level $\gamma$ and~$r$, respectively, for the COMPAS data set. 
Figure~\ref{figure_experiments_real_data_APPENDIX} in Appendix~\ref{supp_mat_setion_stats_real_data} shows analogous plots for the Adult data set.  
The shown curves are obtained 
from averaging the results of 200 runs of the experiment. 
In the first two perturbation scenarios, where \eqref{independence_measure} is small and 
Assumptions~\ref{assu_bias}\,\ref{assu_bias_a}  (almost) satisfied, 
the curves look quite similar to the ones that we obtained in the experiments of 
Section~\ref{subsec_simulations}. In the third and the fourth perturbation scenario, 
Assumptions~\ref{assu_bias}\,\ref{assu_bias_a} is clearly violated, and here the error of $\widehat{Y}$ does not initially decrease. 
Also, for the Adult data set, the bias of  $\widehat{Y}$ explodes even for a moderate perturbation level, which once again 
shows that our identified Assumptions~\ref{assu_bias} are necessary for guaranteeing \eqref{claim_bias}. 
Overall, also the findings of this experiment confirm the main claims of our paper.

\subsection{Repeated Loss Minimization}\label{subsec_exp_repeated_loss_minimization}

 As another application of our results, 
 we compare the 
 EO 
 method to the method of \citet{Hashimoto2018}, discussed in Section~\ref{section_related_work}, in 
 a sequential classification setting. 
 This 
 experiment is presented in Appendix~\ref{supp_mat_rep_loss_min}. It shows that just as the method of  \citeauthor{Hashimoto2018}, 
 the 
 EO
 method can help avoid disparity amplification, even when the protected attribute is
 highly perturbed.

\section{DISCUSSION}\label{section_discussion}

We studied the EO  
postprocessing 
method of \citet{hardt2016equality} for fair classification when 
the protected attribute is perturbed. 
We identified 
conditions on the perturbation that guarantee
that the bias of a classifier is reduced even by
running 
the 
EO 
method 
with the perturbed
attribute. 
We showed that our conditions are necessary for providing such a guarantee. 
For the error of the resulting classifier, we 
empirically observed that 
under our conditions and a mild additional assumption, most often the error is not larger than the error of the 
EO 
classifier based on the true 
attribute. In a special 
case, we formally proved this observation. 
 Importantly, we \emph{analyzed} the 
EO 
method as it is and did not try to \emph{modify}
it in order to make it more robust.
We believe that 
often 
the practitioner with domain knowledge 
can 
assess 
whether our 
conditions hold 
and hence 
will 
benefit from our analysis. 
In contrast, 
modifying the 
method  
would require additional knowledge about the perturbation 
probabilities $\Pro\left[\As\neq A |Y=y, A=a\right]$ (e.g., access to some estimates or 
knowledge about their order) 
that the practitioner often~does~not~have.

There are several directions for future work: 
generally, one could analyze any of the many existing methods for fair ML 
(cf. Section~\ref{section_related_work}) 
with respect to a perturbation of~the~protected~attribute.  
Specifically related to our paper, 
a key question is to fully understand when inequality~\eqref{inequality_error} holds and to provide upper bounds on its violation in case it does not hold.  
It would also be interesting to study 
alternative noise models in which the 
attribute is also corrupted in the test phase (where the corruption can be caused by either the same or a different mechanism as in the training phase). 
Finally, it would be interesting to extend our results to multiple 
groups (i.e., a non-binary attribute) or when one only requires 
$\widehat{Y}$ to have equal true positive rates 
(aka equality of opportunity). Based on our intuition as outlined in Section~\ref{subsection_intuition}, we 
believe 
that such extensions are possible, 
but they still need to be formally~established.

\bibliography{mybibfile_fairness}
\bibliographystyle{plainnat}

\newpage
\onecolumn
\normalsize

\runningtitle{Appendix to \emph{Equalized odds postprocessing under imperfect group information}}

\appendix

\newcommand{\abstparaapp}{4mm}

\section{APPENDIX}

\subsection{
List of Random Variables Used in the Paper}\label{appendix_table_of_notation}

\begin{table}[h!]
  \caption{Random variables used in the paper.}\label{table_of_notation}
  \centering
\renewcommand{\arraystretch}{1.4}
\begin{tabular}{cc l l}
\toprule
 Variable & Range &  Meaning \\
 \midrule
 $X$ & $\mathcal{X}$ & data point (i.e., features representing a data point)\\
$A$ & $\{0,1\}$ & true protected attribute \\ 
$\As$ & $\{0,1\}$ & perturbed / corrupted protected attribute  \\
$Y$ & $\{-1,+1\}$ & ground-truth label \\
$\widetilde{Y}$ & $\{-1,+1\}$ &  given predictor for predicting $Y$ \\
$\widehat{Y}$ & $\{-1,+1\}$ & EO predictor derived from $\widetilde{Y}$ and based on some protected attribute (i.e., $A$ or $\As$)  \\ 
$\Ycorr$ & $\{-1,+1\}$ & EO predictor derived from $\widetilde{Y}$ and based on $\As$\\
$\Ytrue$ & $\{-1,+1\}$ & EO predictor derived from $\widetilde{Y}$ and based on $A$  \\
\bottomrule
\end{tabular}
\end{table}

\vspace{\abstparaapp}

\subsection{Proofs}\label{supp_mat_proofs}

We first require a simple technical lemma.

\vspace{2mm}

\begin{lemma}\label{lem:sum-of-probabilities_gen}
Let $D=[0,1)\times [0,1)\times(0,1)$ and consider $F:D \rightarrow \R$ with
\begin{align}\label{definition_function_F}
F(\gamma_1,\gamma_2,p)=\frac{\gamma_1 p}{\gamma_1 p + (1-\gamma_2)(1-p)}  - \frac{(1-\gamma_1)p}{(1-\gamma_1)p + \gamma_2(1-p)} + 1.
\end{align}
We have:
\begin{enumerate}[leftmargin=1cm,label=(\roman*)]
\item $0\leq F(\gamma_1,\gamma_2,p)\leq 2$ for all $(\gamma_1,\gamma_2,p)\in D$
 \item $F(0,0,p)=0$ for all $p\in (0,1)$
 \item $F(\gamma_1,\gamma_2,p)<1$ for all $(\gamma_1,\gamma_2,p)\in D$ with $\gamma_1+\gamma_2<1$ 
\item $F(\gamma_1,\gamma_2,p)=1$ for all $(\gamma_1,\gamma_2,p)\in D$ with $\gamma_1+\gamma_2=1$
\item $F(\gamma_1,\gamma_2,p)=F(\gamma_2,\gamma_1,1-p)$ for all $(\gamma_1,\gamma_2,p)\in D$
\item $\frac{\partial}{\partial \gamma_1} F(\gamma_1,\gamma_2,p)> 0$ and $\frac{\partial}{\partial \gamma_2} F(\gamma_1,\gamma_2,p)> 0$ for all $(\gamma_1,\gamma_2,p)\in D$ 
\end{enumerate}

\end{lemma}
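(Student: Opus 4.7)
The approach is routine: each of (i)--(vi) is an elementary algebraic manipulation based on the structural observation that both summands of $F$ have the form $\frac{a}{a+b}$ with $a\geq 0$ and $b>0$ on $D$. Indeed, since $\gamma_1,\gamma_2<1$ and $p\in(0,1)$, both $(1-\gamma_2)(1-p)$ and $(1-\gamma_1)p$ are strictly positive, so each fraction lies in $[0,1)$, is strictly increasing in its numerator, and strictly decreasing in the remaining summand of its denominator. Claim (ii) then follows by directly substituting $\gamma_1=\gamma_2=0$, which gives $F(0,0,p)=0-1+1=0$, and claim (i) follows because $F$ is the difference of two numbers in $[0,1)$ plus $1$, hence lies in $[0,2]$.

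For (iii) and (iv), the plan is to note that $F(\gamma_1,\gamma_2,p)\leq 1$ is equivalent to
\[ \frac{\gamma_1 p}{\gamma_1 p+(1-\gamma_2)(1-p)} \;\leq\; \frac{(1-\gamma_1)p}{(1-\gamma_1)p+\gamma_2(1-p)}. \]
Cross-multiplying (valid since both denominators are positive) and expanding, the common term $\gamma_1(1-\gamma_1)p^2$ cancels on both sides, and after dividing by $p(1-p)>0$ the inequality reduces to $\gamma_1\gamma_2\leq (1-\gamma_1)(1-\gamma_2)$, which after further expansion is equivalent to $\gamma_1+\gamma_2\leq 1$. The entire chain preserves strict inequality, yielding (iii) and (iv) simultaneously. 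For (v), I would apply the identity $\frac{a}{a+b}=1-\frac{b}{a+b}$ to both fractions to rewrite
\[ F(\gamma_1,\gamma_2,p) \;=\; 1+\frac{\gamma_2(1-p)}{(1-\gamma_1)p+\gamma_2(1-p)} - \frac{(1-\gamma_2)(1-p)}{\gamma_1 p+(1-\gamma_2)(1-p)}, \]
which, after relabelling, is precisely $F(\gamma_2,\gamma_1,1-p)$ read directly from the definition.

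Finally, for (vi), I plan to avoid explicit partial derivatives and argue via the monotonicity of $\frac{a}{a+b}$: it is strictly increasing in $a$ when $b>0$, and strictly decreasing in $b$ when $a>0$. Applied to the first summand, this shows it is strictly increasing in $\gamma_1$ (always, since $(1-\gamma_2)(1-p)>0$) and non-decreasing in $\gamma_2$ (strictly when $\gamma_1>0$, constant when $\gamma_1=0$). Applied to the second summand, which enters $F$ with a minus sign, it is non-increasing in $\gamma_1$ (strictly when $\gamma_2>0$) and strictly decreasing in $\gamma_2$ (always, since $(1-\gamma_1)p>0$). Hence both partials of $F$ split as the sum of a strictly positive and a non-negative contribution, so they are strictly positive on all of $D$. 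The only care needed is in the boundary cases $\gamma_1=0$ or $\gamma_2=0$, where one of the two contributions vanishes but the other remains strict; this is the sole bit of bookkeeping to watch, and there is no substantive obstacle beyond it, as the lemma is essentially a routine algebraic verification.
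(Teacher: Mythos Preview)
Your proof is correct and largely parallels the paper's, with a cleaner treatment of (vi). The paper first combines the two fractions into the single expression
\[
\frac{\gamma_1 p}{\gamma_1 p + (1-\gamma_2)(1-p)} - \frac{(1-\gamma_1)p}{(1-\gamma_1)p + \gamma_2(1-p)} \;=\; \frac{p(1-p)(\gamma_1+\gamma_2-1)}{\bigl[\gamma_1 p + (1-\gamma_2)(1-p)\bigr]\bigl[(1-\gamma_1)p + \gamma_2(1-p)\bigr]},
\]
from which (iii), (iv), and (v) are read off directly (the invariance under $(\gamma_1,\gamma_2,p)\mapsto(\gamma_2,\gamma_1,1-p)$ is visible in the combined form). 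For (vi) the paper then differentiates this combined expression and bounds the resulting numerator, which requires a short but slightly fiddly estimate. Your term-by-term monotonicity argument bypasses that computation: since $a\mapsto a/(a+b)$ has derivative $b/(a+b)^2>0$ whenever $b>0$, the first summand contributes a strictly positive amount to $\partial F/\partial\gamma_1$ and the second a nonpositive amount, so the sum is strictly positive (and symmetrically for $\gamma_2$, with the roles of the two summands swapped). This is more elementary and sidesteps the bound the paper needs. Your handling of (iii)--(iv) by cross-multiplication and of (v) via the identity $a/(a+b)=1-b/(a+b)$ are minor stylistic variations on the paper's common-denominator computation and lead to the same cancellations.
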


\begin{proof}
First note that for $(\gamma_1,\gamma_2,p)\in D$ both denominators are greater than zero and $F$ is well-defined. Both fractions are not smaller than zero and not greater than one, 
which implies 
(i).
It is trivial to show (ii). 
It is 
\begin{small}
\begin{align*}
\frac{\gamma_1 p}{\gamma_1 p + (1-\gamma_2)(1-p)}   - \frac{(1-\gamma_1)p}{(1-\gamma_1)p + \gamma_2(1-p)}
=\frac{p(1-p)[\gamma_1+\gamma_2-1]}{\big[\gamma_1 p + (1-\gamma_2)(1-p)\big]\cdot \big[(1-\gamma_1)p + \gamma_2(1-p)\big]},
\end{align*}
\end{small}
from which (iii), (iv) and (v) follow. Finally, it is
\begin{small}
\begin{align*}
 \frac{\partial}{\partial \gamma_1} F(\gamma_1,\gamma_2,p)&=\frac{\partial}{\partial \gamma_1} \frac{p(1-p)[\gamma_1+\gamma_2-1]}{\big[\gamma_1 p + (1-\gamma_2)(1-p)\big]\cdot \big[(1-\gamma_1)p + \gamma_2(1-p)\big]}\\
&=\frac{p(1-p)\Big[1-(\gamma_1+\gamma_2-1)\cdot\big\{p\cdot[(1-\gamma_1)p + \gamma_2(1-p)\big]-p\cdot \big[\gamma_1 p + (1-\gamma_2)(1-p)\big]\big\}\Big]}{\big[\gamma_1 p + (1-\gamma_2)(1-p)\big]^2\cdot \big[(1-\gamma_1)p + \gamma_2(1-p)\big]^2}.
\end{align*}
\end{small}
We have
\begin{small}
\begin{align*}
 \left|p\cdot[(1-\gamma_1)p + \gamma_2(1-p)\big]-p\cdot \big[\gamma_1 p + (1-\gamma_2)(1-p)\big]\right|&=|p|\cdot\left|[p(1-2\gamma_1)+(1-p)(2\gamma_2-1)]\right|\\
 &\leq |p|
\end{align*}
\end{small}
for all $(\gamma_1,\gamma_2,p)\in D$ and hence
\begin{small}
\begin{align*}
 1-(\gamma_1+\gamma_2-1)\cdot\big\{p\cdot[(1-\gamma_1)p + \gamma_2(1-p)\big]-p\cdot \big[\gamma_1 p + (1-\gamma_2)(1-p)\big]\big\}\geq~~~~~~~~~~~~~~~~~~~~~~~\\
 1-\left|\gamma_1+\gamma_2-1\right|\cdot\left|p\cdot[(1-\gamma_1)p + \gamma_2(1-p)\big]-p\cdot \big[\gamma_1 p + (1-\gamma_2)(1-p)\big]\right|\geq 1-p>0.
 \end{align*}
 \end{small}
This shows $\frac{\partial}{\partial \gamma_1} F(\gamma_1,\gamma_2,p)> 0$. It follows from (v) that also $\frac{\partial}{\partial \gamma_2} F(\gamma_1,\gamma_2,p)> 0$ for all $(\gamma_1,\gamma_2,p)\in D$. 
\end{proof}

\vspace{8mm}
Now we can prove Theorem~\ref{theorem_bias}.

\vspace{2mm}
\textbf{Proof of Theorem~\ref{theorem_bias}:}

\vspace{2mm}
Let
\begin{small} \begin{align}\label{def_alpha_beta}
\begin{split}
\alpha_1 &:= \Psymb\left[\widetilde{Y}=1 \,\big|\, Y=1, A=0\right], \qquad \beta_1 := \Psymb\left[\widetilde{Y}=1 \,\big|\, Y=1, A=1\right],\\[2pt]
\alpha_2 &:= \Psymb\left[\widetilde{Y}=1 \,\big|\, Y=-1, A=0\right], \qquad \beta_2 := \Psymb\left[\widetilde{Y}=1 \,\big|\, Y=-1, A=1\right].
 \end{split}
 \end{align} \end{small}
Then 
\begin{small} \begin{align}\label{bias_tildeY}
 \Bias_{Y=+1}(\widetilde{Y})=|\alpha_1-\beta_1|, \quad  \Bias_{Y=-1}(\widetilde{Y})=|\alpha_2-\beta_2|.
 \end{align} \end{small}

When computing the probabilities $p_{-1,0},p_{-1,1},p_{1,0},p_{1,1}$ for $\Ycorr$, we have to replace $\Pro\left[Y=y, A=a, \widetilde{Y}=\tilde{y}\right]$ and 
$\Pro\left[\widetilde{Y}=1 \,\big|\, Y=y,A=a\right]$  by 
$\Pro\left[Y=y, \As=a, \widetilde{Y}=\tilde{y}\right]$ and $\Pro\left[\widetilde{Y}=1 \,\big|\, Y=y,\As=a\right]$, respectively, in the linear program~\eqref{eq_odds_linear_program}.
Note that the assumption $\Pro\left[\As\neq A\,|\, A=a, Y=y\right]<1$ for $y\in\{-1,+1\}$ and $a\in\{0,1\}$ implies that $\Pro[Y=y,\As=a]>0$ for $y\in\{-1,+1\}$ and $a\in\{0,1\}$. 
It is
\begin{small} \begin{align*}
\Pro\left[Y=y, \As=a, \widetilde{Y}=\tilde{y}\right]=\Pro\left[\widetilde{Y}=\tilde{y}\,\big|\, Y=y, \As=a \right]\cdot \Pro\left[Y=y, \As=a\right]
 \end{align*} \end{small}
and because of Assumptions~\ref{assu_bias}\,\ref{assu_bias_a}, for $a\in\{0,1\}$,
\begin{small} \begin{align*}
\Psymb\left[\widetilde{Y}=1 \,\big|\, Y=1, \As=a\right]=\,&\beta_1\cdot\Psymb\left[A=1 \,|\, Y=1, \As=a\right]+\alpha_1\cdot\left(1-\Psymb\left[A=1 \,|\, Y=1, \As=a\right]\right),\\[2pt]
\Psymb\left[\widetilde{Y}=1 \,\big|\, Y=-1, \As=a\right]=\,&\beta_2\cdot\Psymb\left[A=1 \,|\, Y=-1, \As=a\right]+\alpha_2\cdot\left(1-\Psymb\left[A=1 \,|\, Y=-1, \As=a\right]\right).
 \end{align*} \end{small}

Hence, we end up with  the new linear program
\begin{small}
\begin{align}\label{lin_pr_supp}
\begin{split}
&\min_{\substack{p_{1,0},~p_{1,1},\\p_{-1,0},~p_{-1,1}\in[0,1]}}~ \sum_{\substack{y\in\{-1,+1\}\\a\in\{0,1\}}}
\left\{\Pro\left[Y=-1, \As=a, \widetilde{Y}=y\right]-\Pro\left[Y=1, \As=a, \widetilde{Y}=y\right]\right\}\cdot p_{y,a}\\[3pt]
&~\text{s.t.}~~\left\{\beta_1\cdot\Psymb[A=1  \,|\, Y=1, \As=0]+\alpha_1\cdot(1-\Psymb[A=1  \,|\, Y=1, \As=0])\right\}\cdot p_{1,0}~~~~~~~~~~~~~~~~~~\\
&~~~~~~~~~~~~~~~~~~ +\left\{1-\beta_1\cdot\Psymb[A=1  \,|\, Y=1, \As=0]-\alpha_1\cdot(1-\Psymb[A=1  \,|\, Y=1, \As=0])\right\}\cdot  p_{-1,0}=\\
&~~~~~~~~~~\left\{\beta_1\cdot\Psymb[A=1  \,|\, Y=1, \As=1]+\alpha_1\cdot(1-\Psymb[A=1  \,|\, Y=1, \As=1])\right\}\cdot p_{1,1}\\
&~~~~~~~~~~~~~~~~~~~ +\left\{1-\beta_1\cdot\Psymb[A=1  \,|\, Y=1, \As=1]-\alpha_1\cdot(1-\Psymb[A=1  \,|\, Y=1, \As=1])\right\}\cdot p_{-1,1}, \\[5pt]
&~~~~~~~~\left\{\beta_2\cdot\Psymb[A=1  \,|\, Y=-1, \As=0]+\alpha_2\cdot(1-\Psymb[A=1  \,|\, Y=-1, \As=0])\right\}\cdot p_{1,0}\\
&~~~~~~~~~~~~~~~~ + \left\{1-\beta_2\cdot\Psymb[A=1  \,|\, Y=-1, \As=0]-\alpha_2\cdot(1-\Psymb[A=1  \,|\, Y=-1, \As=0])\right\}\cdot p_{-1,0}=\\
&~~~~~~~~~~ \left\{\beta_2\cdot\Psymb[A=1  \,|\, Y=-1, \As=1] +\alpha_2\cdot(1-\Psymb[A=1  \,|\, Y=-1, \As=1])\right\}\cdot p_{1,1}\\
&~~~~~~~~~~~~~~~~~~~+ \left\{1-\beta_2\cdot\Psymb[A=1  \,|\, Y=-1, \As=1] -\alpha_2\cdot(1-\Psymb[A=1  \,|\, Y=-1, \As=1])\right\}\cdot p_{-1,1}.
\end{split}
\end{align}
\end{small}

Some elementary calculations yield that the objective function $\Delta=\Delta(p_{1,0},p_{1,1},p_{-1,0},p_{-1,1})$ in \eqref{lin_pr_supp} equals 
\begin{small} \begin{align}\label{form_for_details_1}
\begin{split}
\Delta=\,\Psymb\left[Y=-1, \As=0\right] \Big[ (p_{1,0} - p_{-1,0}) \cdot\left\{ \alpha_2 + (\beta_2 - \alpha_2)\cdot\Psymb\left[A=1  \,|\, Y=-1, \As =0\right] \right\} + p_{-1,0}\Big]&\\
+ \,\Psymb\left[Y=-1, \As=1\right] \Big[ (p_{1,1} - p_{-1,1}) \cdot\left\{ \alpha_2 + (\beta_2 - \alpha_2)\cdot\Psymb\left[A=1  \,|\, Y=-1, \As =1\right]\right\}  + p_{-1,1} \Big]&\\
- \,\Psymb\left[Y=1, \As=0\right] \Big[ (p_{1,0} - p_{-1,0}) \cdot\left\{ \alpha_1 + (\beta_1 - \alpha_1)\cdot\Psymb\left[A=1  \,|\, Y=1, \As =0\right] \right\} + p_{-1,0} \Big]&\\
- \,\Psymb\left[Y=1, \As=1\right] \Big[(p_{1,1} - p_{-1,1})\cdot \left\{ \alpha_1 + (\beta_1 - \alpha_1)\cdot\Psymb\left[A=1  \,|\, Y=1, \As =1\right]\right\}  + p_{-1,1} \Big]&.
\end{split}
\end{align}
\end{small}
and that the constraints are equivalent to
\begin{small} \begin{align}\label{form_for_details_2}
\begin{split}
&(p_{1,0} - p_{-1,0})\cdot \left\{ \alpha_1 + (\beta_1 - \alpha_1)\cdot\Psymb\left[A=1  \,|\, Y=1, \As =0\right] \right\} + p_{-1,0} \\
&~~~~~~~~~~~~~~~~~~~~~~~~= (p_{1,1} - p_{-1,1})\cdot \left\{ \alpha_1 + (\beta_1 - \alpha_1)\cdot\Psymb\left[A=1  \,|\, Y=1, \As =1\right] \right\}+ p_{-1,1},\\[6pt]
&(p_{1,0} - p_{-1,0})\cdot \left\{ \alpha_2 + (\beta_2 - \alpha_2)\cdot\Psymb\left[A=1  \,|\, Y=-1, \As =0\right] \right\} + p_{-1,0}\\
&~~~~~~~~~~~~~~~~~~~~~~~~= (p_{1,1} - p_{-1,1})\cdot \left\{ \alpha_2 + (\beta_2 - \alpha_2)\cdot\Psymb\left[A=1  \,|\, Y=-1, \As =1\right] \right\}+ p_{-1,1}.
 \end{split}
 \end{align} \end{small}
Let
\begin{small} \begin{align}
 e&:=\alpha_1 + (\beta_1 - \alpha_1)\cdot\Psymb\left[A=1  \,|\, Y=1, \As =0\right],\label{def_e}\\
f&:=\alpha_1 + (\beta_1 - \alpha_1)\cdot\Psymb\left[A=1  \,|\, Y=1, \As =1\right],\label{def_f}\\
g&:= \alpha_2 + (\beta_2 - \alpha_2)\cdot\Psymb\left[A=1  \,|\, Y=-1, \As =0\right],\label{def_g}\\
h&:= \alpha_2 + (\beta_2 - \alpha_2)\cdot\Psymb\left[A=1  \,|\, Y=-1, \As =1\right].\label{def_h}
 \end{align} \end{small}
Then the constraints are
\begin{small} \begin{align}
(p_{1,0} - p_{-1,0})\cdot e + p_{-1,0} = (p_{1,1} - p_{-1,1})\cdot f + p_{-1,1},\label{proof_bias_cond1}\\
(p_{1,0} - p_{-1,0})\cdot g + p_{-1,0} = (p_{1,1} - p_{-1,1})\cdot h + p_{-1,1}.\notag
 \end{align} \end{small}

Because of the constraints we have 
\begin{small} \begin{align}\label{delta_two_param}
\begin{split}
\Delta &= p_{-1,0}\cdot \left\{ \Psymb[Y=-1] - \Psymb[Y=1]\right\} + (p_{1,0} - p_{-1,0})\cdot u\\
&= p_{-1,1}\cdot \left\{ \Psymb[Y=-1] - \Psymb[Y=1]\right\} + (p_{1,1} - p_{-1,1})\cdot v,
\end{split}
 \end{align} \end{small}
where
\begin{small} \begin{align}\label{def_u_and_v}
u := g\cdot\Psymb[Y=-1] - e\cdot \Psymb[Y=1],\qquad v := h \cdot\Psymb[Y=-1] - f \cdot\Psymb[Y=1].
 \end{align} \end{small}
If $u=0$ or $v=0$, 
one optimal solution to \eqref{lin_pr_supp} is $p_{1,0}=p_{1,1}=p_{-1,0}=p_{-1,1}=1$ or $p_{1,0}=p_{1,1}=p_{-1,0}=p_{-1,1}=0$, depending on whether  $\Psymb[Y=-1] \leq \Psymb[Y=1]$ or 
$\Psymb[Y=-1] > \Psymb[Y=1]$. In this case the derived equalized odds predictor $\Ycorr$ is the constant predictor 
$\Ycorr=+1$ or $\Ycorr=-1$ with $\Bias_{Y=y}(\Ycorr)=0$, $y\in\{-1,+1\}$, and 
\eqref{theorem_bias_formula} is true.

\vspace{2mm}
So let us assume that $u\neq 0$ and $v\neq 0$. 
Let $\theta := \Psymb[Y=-1] - \Psymb[Y=1]$.  Because of
\begin{small} \begin{align*}
\Psymb\left[\Ycorr=1\,\big|\,Y=1,A=0\right]
&=p_{1,0}\cdot\alpha_1+p_{-1,0}\cdot(1-\alpha_1),\\ 
\Psymb\left[\Ycorr=1\,\big|\,Y=1,A=1\right]
&=p_{1,1}\cdot\beta_1+p_{-1,1}\cdot(1-\beta_1),\\ 
\Psymb\left[\Ycorr=1\,\big|\,Y=-1,A=0\right]
&=p_{1,0}\cdot\alpha_2+p_{-1,0}\cdot(1-\alpha_2),\\ 
\Psymb\left[\Ycorr=1\,\big|\,Y=-1,A=1\right]
&=p_{1,1}\cdot\beta_2+p_{-1,1}\cdot(1-\beta_2),
 \end{align*} \end{small}
we have
\begin{small} \begin{align}\label{bias_widehatY_in_proof_theo_bias}
 \begin{split}
 \Bias_{Y=+1}(\Ycorr) &= \left|\alpha_1 \cdot(p_{1,0} - p_{-1,0}) - \beta_1\cdot (p_{1,1} - p_{-1,1}) + p_{-1,0} - p_{-1,1}\right|,\\
 \Bias_{Y=-1}(\Ycorr) &= \left|\alpha_2  \cdot (p_{1,0} - p_{-1,0}) - \beta_2  \cdot (p_{1,1} - p_{-1,1}) + p_{-1,0} - p_{-1,1}\right|.
  \end{split}
  \end{align} \end{small}

It is
\begin{small} \begin{align*}
\Bias_{Y=+1}(\Ycorr)&\stackrel{\eqref{delta_two_param}}{=} \left|\frac{\Delta \alpha_1}{u} - \frac{\Delta \beta_1}{v} + p_{-1,0}\left(1-\frac{\theta \alpha_1}{u}\right) - p_{-1,1}\left(1-\frac{\theta \beta_1}{v}\right)\right|\\
&~= \left|\frac{\Delta \alpha_1}{u} - \frac{\Delta \beta_1}{v} + p_{-1,0}\left(1-\frac{\theta e}{u}\right) - 
p_{-1,1}\left(1-\frac{\theta f}{v}\right) + p_{-1,0}\frac{\theta (e-\alpha_1)}{u} - p_{-1,1}\frac{\theta(f-\beta_1)}{v}\right|.
\end{align*}
\end{small}
From \eqref{proof_bias_cond1} and \eqref{delta_two_param} we obtain that
\begin{small} \begin{align*}
p_{-1,0}\left(1-\frac{\theta e}{u}\right) - p_{-1,1}\left(1-\frac{\theta f}{v}\right) = \frac{\Delta f}{v} - \frac{\Delta e}{u}.
 \end{align*} \end{small}
From this we get that 
 \begin{small} \begin{align}\label{bound_bias_1}
\begin{split}
&\Bias_{Y=+1}(\Ycorr) = \left|\left(\frac{\Delta}{u}- \frac{p_{-1,0} \theta}{u}\right)(\alpha_1-e) - \left(\frac{\Delta}{v} - \frac{p_{-1,1} \theta}{v}\right)(\beta_1-f)\right|\\[1pt]
&~~~~~~~~~~~~~\stackrel{\eqref{def_e}\&\eqref{def_f}}{=} |\alpha_1 - \beta_1|\cdot \left| \left(\frac{\Delta}{u}- \frac{p_{-1,0} \theta}{u}\right)\cdot\Psymb[A=1 \,|\, Y=1, \As=0] +\left(\frac{\Delta}{v} - \frac{p_{-1,1} \theta}{v}\right)\cdot \Psymb[A=0\,|\,Y=1, \As=1]\right|\\[1pt]
&~~~~~~~~~~~~~~~~\stackrel{\eqref{delta_two_param}}{=} |\alpha_1 - \beta_1|\cdot \left| (p_{1,0}-p_{-1,0})\cdot\Psymb[A=1 \,|\, Y=1, \As=0] +(p_{1,1}-p_{-1,1})\cdot\Psymb[A=0\,|\,Y=1, \As=1] \right|\\[1pt]
&~~~~~~~~~~~~~~~~~\leq  |\alpha_1 - \beta_1|\cdot\left\{\Psymb[A=1 \,|\, Y=1, \As=0] + \Psymb[A=0\,|\,Y=1, \As=1]\right\},
\end{split}
 \end{align} \end{small}
  where the last inequality follows from the triangle inequality and $|p_{1,0}-p_{-1,0}|\leq 1$ and $|p_{1,1}-p_{-1,1}|\leq 1$ because of 
$p_{-1,0},p_{-1,1},p_{1,0},p_{1,1}\in[0,1]$.

\vspace{2mm}
Similarly, we obtain
\begin{small} \begin{align}\label{bound_bias_2}
\Bias_{Y=-1}(\Ycorr) &\leq |\alpha_2 - \beta_2| \cdot\left\{\Psymb[A=1 \,|\, Y=-1, \As=0] + \Psymb[A=0\,|\,Y=-1, \As=1]\right\}.
 \end{align} \end{small}

It is, for $y\in\{-1,+1\}$,
\begin{small}
\begin{align}\label{prob_in_proof_end_1}
\begin{split}
&\Psymb[A=1 \,|\, Y=y, \As=0]=\frac{\Psymb[A=1, \As=0 \,|\, Y=y]}{\Psymb[\As=0 \,|\, Y=y]}\\
&~~~~~~~~=\frac{\Psymb[\As=0\,|\, Y=y,A=1]\cdot \Psymb[A=1\,|\, Y=y]}{\Psymb[\As=0\,|\, Y=y,A=1]\cdot \Psymb[A=1\,|\, Y=y]+\Psymb[\As=0\,|\, Y=y,A=0]\cdot 
\Psymb[A=0\,|\, Y=y]}
\end{split}
\end{align}
\end{small}
and $\Psymb[A=0 \,|\, Y=y, \As=1]=1-\Psymb[A=1 \,|\, Y=y, \As=1]$ with
\begin{small}
\begin{align}\label{prob_in_proof_end_2}
\begin{split}
&\Psymb[A=1 \,|\, Y=y, \As=1]=\frac{\Psymb[A=1, \As=1 \,|\, Y=y]}{\Psymb[\As=1 \,|\, Y=y]}\\
&~~~~~~~=\frac{\Psymb[\As=1\,|\, Y=y,A=1]\cdot \Psymb[A=1\,|\, Y=y]}{\Psymb[\As=1\,|\, Y=y,A=1]\cdot \Psymb[A=1\,|\, Y=y]+
\Psymb[\As=1\,|\, Y=y,A=0]\cdot \Psymb[A=0\,|\, Y=y]}.
\end{split}
\end{align}
\end{small}

 Combining \eqref{bias_tildeY}, \eqref{bound_bias_1}, \eqref{bound_bias_2}, \eqref{prob_in_proof_end_1}, \eqref{prob_in_proof_end_2}
and Lemma~\ref{lem:sum-of-probabilities_gen} yields Theorem~\ref{theorem_bias}. \hfill$\square$

\vspace{8mm}
We prove Lemma~\ref{lemma_bias_assumption} by means of counterexamples.

\vspace{2mm}
\textbf{Proof of Lemma~\ref{lemma_bias_assumption}:}

\begin{itemize}
 \item Assumptions~\ref{assu_bias}\,\ref{assu_bias_a} violated \& Assumptions~\ref{assu_bias}\,\ref{assu_bias_b} satisfied: 
\end{itemize}

\vspace{-2mm}
Assume that 
\begin{small}
\begin{align}\label{proof_lemma_necessary_case1}
\begin{split}
\Pro\left[Y=y,A=a\right]&=\frac{1}{4}, \quad y\in\{-1,+1\}, a\in\{0,1\},\\
\Psymb\left[\widetilde{Y}=1 \,\big|\, Y=1, A=0\right]&=0.65, \qquad \Psymb\left[\widetilde{Y}=1 \,\big|\, Y=1, A=1\right]=0.6,\\
\Psymb\left[\widetilde{Y}=1 \,\big|\, Y=-1, A=0\right]&=0, \qquad \Psymb\left[\widetilde{Y}=1 \,\big|\, Y=-1, A=1\right]=0
\end{split}
\end{align}
\end{small}
and that
\begin{small}
\begin{align*}
\Pro\left[\As\neq A \,\big|\, Y=1,A=0,\widetilde{Y}=-1\right]=0.15, \quad~~ \Pro\left[\As\neq A \,\big|\, Y=y,A=a,\widetilde{Y}=\tilde{y}\right]=0, \quad (y,a,\tilde{y})\neq(1,0,-1).
\end{align*}
\end{small}
Then $\Pro\left[\As\neq A \,\big|\, Y=1,A=0\right]=0.15\cdot 0.35=0.0525$ and $\Pro\left[\As\neq A \,\big|\, Y=y,A=a\right]=0$, $(y,a)\neq(1,0)$, and Assumptions~\ref{assu_bias}\,\ref{assu_bias_b} is satisfied. 
However, Assumptions~\ref{assu_bias}\,\ref{assu_bias_a} is not satisfied since 
$\Pro\left[\As=1 \,\big|\, Y=1,A=0,\widetilde{Y}=-1\right]\neq \Pro\left[\As=1 \,\big|\, Y=1,A=0,\widetilde{Y}=1\right]$. 
It is 
$\Bias_{Y=+1}(\widetilde{Y}) =0.05$ and $\Bias_{Y=-1}(\widetilde{Y}) =0$. 

It is straightforward to compute all probabilities~$\Pro\left[Y=y, \As=a, \widetilde{Y}=\tilde{y}\right]$ and $\Pro\left[\widetilde{Y}=1 \,\big|\, Y=y,\As=a\right]$ and solve the 
the linear program~\eqref{eq_odds_linear_program} with $\Pro\left[Y=y, A=a, \widetilde{Y}=\tilde{y}\right]$ and 
$\Pro\left[\widetilde{Y}=1 \,\big|\, Y=y,A=a\right]$ replaced by 
$\Pro\left[Y=y, \As=a, \widetilde{Y}=\tilde{y}\right]$ and $\Pro\left[\widetilde{Y}=1 \,\big|\, Y=y,\As=a\right]$, respectively. 
In doing so, one ends up with 
an optimal solution $(p_{-1,0}^*,p_{-1,1}^*,p_{1,0}^*,p_{1,1}^*)\approx(0,0,0.83,1)$. 
The bias of the equalized odds predictor~$\Ycorr$ for the class $Y=+1$ is
\begin{small}
\begin{align*}
 \Bias_{Y=+1}(\Ycorr) &= \left|\Psymb\left[\widetilde{Y}=1 \,\big|\, Y=1, A=0\right] \cdot(p_{1,0}^* - p_{-1,0}^*) - \Psymb\left[\widetilde{Y}=1 \,\big|\, Y=1, A=1\right]\cdot (p_{1,1}^* - p_{-1,1}^*) + p_{-1,0}^* - p_{-1,1}^*\right|\\
  &\approx \left|0.65 \cdot 0.83 -0.6\right|\approx 0.06>0.05=\Bias_{Y=+1}(\widetilde{Y}).
\end{align*}
\end{small}

\vspace{-4mm}
\begin{itemize}
 \item Assumptions~\ref{assu_bias}\,\ref{assu_bias_a} satisfied \& Assumptions~\ref{assu_bias}\,\ref{assu_bias_b} violated:
\end{itemize}

\vspace{-2mm}
The top left plot of Figure~\ref{figure_simulations} in Section~\ref{subsec_simulations} provides an example where Assumptions~\ref{assu_bias}\,\ref{assu_bias_a} is satisfied and  for 
$\Pro\left[\As\neq A \,\big|\, Y=1,A=0\right]=\Pro\left[\As\neq A \,\big|\, Y=1,A=1\right]>0.5$ (and hence Assumptions~\ref{assu_bias}\,\ref{assu_bias_b} being violated) we have 
$\Bias_{Y=+1}(\Ycorr)>\Bias_{Y=+1}(\widetilde{Y})$.
\hfill$\square$

\vspace{8mm}
Next, we prove Theorem~\ref{thm:accuracy-balanced}.

\vspace{2mm}
\textbf{Proof of Theorem~\ref{thm:accuracy-balanced}:}

\vspace{2mm}
We use the same notation as in the proof of Theorem~\ref{theorem_bias}. In particular, let 
$\alpha_1,\alpha_2,\beta_1,\beta_2$ be the probabilities defined in \eqref{def_alpha_beta}. Since we assume 
Assumption~\ref{assu_error} to hold, 
we have $\alpha_1>\alpha_2$ and $\beta_1>\beta_2$. 
Furthermore, without loss of generality, we may assume that $\alpha_2 \beta_1 \geq \alpha_1 \beta_2$ (otherwise, 
we can simply swap the role of the groups $A=0$ and $A=1$ so that this condition holds). 
 
 \vspace{2mm}
 Let $\gamma:=\Pro\left[\As\neq A\,|\, A=a, Y=y\right]$, which does not depend on the values of $a$ and $y$, 
 be the perturbation probability.   
 In the training phase for $\Ycorr$ we have $\gamma=\gamma_0$ for some $\gamma_0\in(0,\frac{1}{2}]$, 
 and in the training phase for $\Ytrue$ we have $\gamma=0$. 
 
Note that we have 
 $\Pro[Y=+1]=\Pro[Y=-1]=\frac{1}{2}$. It follows from \eqref{def_e} to \eqref{def_u_and_v}, \eqref{prob_in_proof_end_1} and \eqref{prob_in_proof_end_2}
that for any fixed value of the perturbation probability~$\gamma\in[0,1]$ the equalized odds method 
solves the following linear program:
\begin{small}\begin{align}\label{lin_pr_proof_theo2}
\begin{split}
&\min_{p_{1,0},p_{1,1},p_{-1,0},p_{-1,1}\in[0,1]}~  \Delta \\
&~\text{s.t.}~~(p_{1,0}-p_{-1,0})\cdot\{(1-\gamma)\alpha_1 + \gamma \beta_1\} + p_{-1,0} = (p_{1,1}-p_{-1,1})\cdot\{(1-\gamma)\beta_1 + \gamma \alpha_1\} + p_{-1,1},\\
&~~~~~~~~(p_{1,0}-p_{-1,0})\cdot\{(1-\gamma)\alpha_2 + \gamma \beta_2\} + p_{-1,0} = (p_{1,1}-p_{-1,1})\cdot\{(1-\gamma)\beta_2 + \gamma \alpha_2\} + p_{-1,1},
\end{split}
\end{align}\end{small}
where
\begin{small}\begin{align}\label{def_delta_proof_theo2}
 \Delta = (p_{1,0} - p_{-1,0})u = (p_{1,1}-p_{-1,1})v
\end{align}\end{small}
with
\begin{small}\begin{align}\label{defxy_proof_theo2}
u = \frac{1}{2}\left[(1-\gamma)(\alpha_2 - \alpha_1) + \gamma(\beta_2 - \beta_1)\right],\qquad v = \frac{1}{2}\left[(1-\gamma)(\beta_2 - \beta_1) + \gamma(\alpha_2 - \alpha_1)\right].
\end{align}\end{small}
Note that $u<0$ and $v<0$ for any $\gamma \in [0,1]$ 
because of  $\alpha_1>\alpha_2$ and $\beta_1>\beta_2$. 
Since $p_{1,0}=p_{1,1}=p_{-1,0}=p_{-1,1}=0$ satisfies the constraints in \eqref{lin_pr_proof_theo2} 
and has objective value $\Delta=0$, 
in an equalized odds solution (i.e., an optimal solution to \eqref{lin_pr_proof_theo2}) we must have 
 $\Delta\leq 0$, $p_{-1,0} \leq p_{1,0}$ and $p_{-1,1} \leq p_{1,1}$ for any $\gamma \in [0,1]$. 
Furthermore, for $\gamma \in [0,\frac{1}{2}]$ we obtain from the first constraint in \eqref{lin_pr_proof_theo2}
 that
\begin{small}\begin{align}\label{diff_pm10_pm11_proof_theo2}
\begin{split}
p_{-1,0} - p_{-1,1}&= (p_{1,1}-p_{-1,1})\cdot\{(1-\gamma)\beta_1 + \gamma \alpha_1\} - (p_{1,0}-p_{-1,0})\cdot\{(1-\gamma)\alpha_1 + \gamma \beta_1\}\\ 
&\stackrel{\eqref{def_delta_proof_theo2}}{=} \frac{\Delta}{v}((1-\gamma)\beta_1 + \gamma \alpha_1) - \frac{\Delta}{u}((1-\gamma)\alpha_1 + \gamma \beta_1)\\
&= \frac{\Delta}{uv}\Big( \beta_1((1-\gamma)u - \gamma v) - \alpha_1((1-\gamma)v - \gamma u) \Big)\\
&\stackrel{\eqref{defxy_proof_theo2}}{=} \frac{\Delta (1-2\gamma)}{2uv} (\alpha_2 \beta_1 - \alpha_1 \beta_2) \\
&\leq 0, 
\end{split}
\end{align}\end{small}
where the last inequality holds because of $\Delta \leq 0$, $1-2\gamma\geq 0$, $u< 0$, $v< 0$ and $\alpha_2 \beta_1 \geq \alpha_1 \beta_2$.
Hence, in an equalized odds solution, for any $\gamma \in [0,1/2]$, we must have 
 $p_{-1,0}\leq p_{-1,1}$ and 
$p_{-1,0}=\min\{p_{1,0},p_{1,1},p_{-1,0},p_{-1,1}\}$. 
It is straightforward to check that the error $\Error(\widehat{Y})$ of a derived equalized odds 
predictor~$\widehat{Y}$ with probabilities $p_{1,0},p_{1,1},p_{-1,0},p_{-1,1}$ is given by
\begin{small}\begin{align}\label{error_widehat_Y_proof_theo2}
 \Error(\widehat{Y})=\frac{1}{4}\cdot\left\{(p_{1,0} - p_{-1,0})(\alpha_2 - \alpha_1) + (p_{1,1} - p_{-1,1})(\beta_2 - \beta_1)\right\}+\frac{1}{2}
\end{align}\end{small}
and hence is invariant under translations of the probabilities (compare with the end of Section~\ref{section_equalized_odds_MAIN}).  
Hence, without loss of generality, we may assume  that $p_{-1,0}=0$.  
Substituting 
in the expressions computed above we get that
\begin{small}\begin{align}
p_{1,0} &\stackrel{\eqref{def_delta_proof_theo2}}{=} \frac{\Delta}{u},\label{p10_proof_theo2}\\
p_{-1,1} &\stackrel{\eqref{diff_pm10_pm11_proof_theo2}}{=} \frac{\Delta (1-2\gamma)}{2uv} (\alpha_1 \beta_2 - \alpha_2 \beta_1),\label{pm11_proof_theo2}\\
p_{1,1} &\stackrel{\eqref{def_delta_proof_theo2}}{=} \frac{\Delta}{v} + p_{-1,1} = \Delta \left[\frac{1}{v} + \frac{(1-2\gamma)(\alpha_1 \beta_2 - \alpha_2 \beta_1)}{2uv}\right].\label{p11_proof_theo2}
\end{align}\end{small}
The value of $\Delta$ must be the smallest value such that all these three probabilities are in $[0,1]$. 
It follows that in an equalized odds solution, for any $\gamma\in[0,\frac{1}{2}]$, either 
$p_{1,0}$ or $p_{1,1}$ (or both) equals $1$ and this depends on the sign of the difference 
\begin{small}\begin{align}\label{difference_proof_theo2}
\begin{split}
p_{1,0} - p_{1,1} &\stackrel{\eqref{p10_proof_theo2}\&\eqref{p11_proof_theo2}}{=} {\Delta} \Big(\frac{1}{u} - \frac{1}{v} - \frac{(1-2\gamma)(\alpha_1 \beta_2 - \alpha_2 \beta_1)}{2uv} \Big)\\
&~~~\stackrel{\eqref{defxy_proof_theo2}}{=} \frac{\Delta (1-2\gamma)}{2uv} \Big( \beta_2 - \beta_1  + \alpha_1 - \alpha_2 + \alpha_2 \beta_1 - \alpha_1 \beta_2 \Big).
\end{split}
\end{align}\end{small}
Importantly, the difference \eqref{difference_proof_theo2} 
has the same sign for any $\gamma \in [0,\frac{1}{2}]$. 
We distinguish two cases depending on whether $\beta_2 - \beta_1  + \alpha_1 - \alpha_2 + \alpha_2 \beta_1 - \alpha_1 \beta_2$ is smaller than zero or not:

\vspace{4mm}
\textbf{Case 1: $\beta_2 - \beta_1  + \alpha_1 - \alpha_2 + \alpha_2 \beta_1 - \alpha_1 \beta_2<0$}. In this case, 
for $\gamma \in [0,\frac{1}{2}]$, the  difference \eqref{difference_proof_theo2} is non-negative and we have $p_{1,0}=1$.

\vspace{1mm}
Let $p^0_{1,0}, p^0_{-1,0}, p^0_{1,1}, p^0_{-1,1}$ be an equalized odds solution for $\gamma=0$ 
(corresponding to $\Ytrue$) 
and $p^{\gamma_0}_{1,0}, p^{\gamma_0}_{-1,0}, p^{\gamma_0}_{1,1}, p^{\gamma_0}_{-1,1}$ be an equalized odds solution 
for $\gamma=\gamma_0\in(0,\frac{1}{2}]$ (corresponding to $\Ycorr$).  
It is $p^0_{1,0} = p^{\gamma_0}_{1,0}=1$ and $p^0_{-1,0} = p^{\gamma_0}_{-1,0}=0$. 
It follows from \eqref{error_widehat_Y_proof_theo2} that 
\begin{small}\begin{align*}
 \Error(\Ytrue) - \Error(\Ycorr) = \frac{1}{4}\cdot\{(p^0_{1,1} - p^0_{-1,1})(\beta_2 - \beta_1) -(p^{\gamma_0}_{1,1} - p^{\gamma_0}_{-1,1})(\beta_2 - \beta_1)\}.
\end{align*}\end{small}
Using the fact  that 
$(p^0_{1,0} - p^0_{-1,0})(\alpha_2 - \alpha_1) = (p^0_{1,1} - p^0_{-1,1})(\beta_2 - \beta_1)$, which follows from subtracting 
the first from the second constraint in \eqref{lin_pr_proof_theo2} with $\gamma=0$, we get that
\begin{small}\begin{align*}
 \Error(\Ytrue) - \Error(\Ycorr) =  \frac{1}{4}\cdot\{(\alpha_2 - \alpha_1) - (p^{\gamma_0}_{1,1} - p^{\gamma_0}_{-1,1})(\beta_2 - \beta_1)\}.
\end{align*}\end{small}
We write $u(\gamma_0)$ and $v(\gamma_0)$ for $u$ 
or $v$ 
with $\gamma=\gamma_0$. 
Because of $p^{\gamma_0}_{1,0} - p^{\gamma_0}_{-1,0}=1$, we have that 
\begin{small}\begin{align*}
p^{\gamma_0}_{1,1} - p^{\gamma_0}_{-1,1} \stackrel{\eqref{def_delta_proof_theo2}}{=} \frac{u(\gamma_0)}{v(\gamma_0)}
\end{align*}\end{small}
and hence
\begin{small}
\begin{align*}
\Error(\Ytrue) - \Error(\Ycorr) =  \frac{1}{4}\cdot\{(\alpha_2 - \alpha_1) - 
\frac{u(\gamma_0)}{v(\gamma_0)}(\beta_2 - \beta_1)\}\stackrel{\eqref{defxy_proof_theo2}}{=} 
\frac{\gamma_0}{4} \frac{(\alpha_2 - \alpha_1)^2 - (\beta_2-\beta_1)^2}{2v(\gamma_0)}.
\end{align*}
\end{small}
Because of $\beta_2 - \beta_1 + \alpha_1 - \alpha_2 + \alpha_2 \beta_1 - \alpha_1 \beta_2 < 0$ 
and $\alpha_2 \beta_1 - \alpha_1 \beta_2 \geq 0$, we have
$\beta_1 - \beta_2 > \alpha_1 - \alpha_2 > 0$, and because of $v(\gamma_0)<0$ it follows that
\begin{small}\begin{align*}
\Error(\Ytrue) - \Error(\Ycorr) > 0
\end{align*}\end{small}
for all $\gamma_0\in(0,\frac{1}{2}]$.

\vspace{4mm}
\textbf{Case 2: $\beta_2 - \beta_1  + \alpha_1 - \alpha_2 + \alpha_2 \beta_1 - \alpha_1 \beta_2\geq 0$}. 
In this case, 
for $\gamma \in [0,\frac{1}{2}]$, the  difference \eqref{difference_proof_theo2} is non-positive
and we have $p_{1,1}=1$.

\vspace{1mm}
As before in Case~1, let $p^0_{1,0}, p^0_{-1,0}, p^0_{1,1}, p^0_{-1,1}$ 
be an equalized odds solution for $\gamma=0$ 
(corresponding to $\Ytrue$) 
and $p^{\gamma_0}_{1,0}, p^{\gamma_0}_{-1,0}, p^{\gamma_0}_{1,1}, p^{\gamma_0}_{-1,1}$ be an equalized odds solution 
for $\gamma=\gamma_0\in(0,\frac{1}{2}]$ (corresponding to $\Ycorr$).  
It is $p^0_{1,1} = p^{\gamma_0}_{1,1}=1$ and $p^0_{-1,0} = p^{\gamma_0}_{-1,0}=0$. 
Similarly as in Case~1 we obtain that 
\begin{small}
\begin{align}\label{proof_theo2_errordiff_case2}
\Error(\Ytrue) - \Error(\Ycorr)& = \frac{1}{4}\Big\{2(1-p^0_{-1,1})(\beta_2 - \beta_1) - (1 - p^{\gamma_0}_{-1,1})(\beta_2 - \beta_1) 
-\frac{v(\gamma_0)}{u(\gamma_0)}(1-p^{\gamma_0}_{-1,1})(\alpha_2 - \alpha_1)\Big\}.
\end{align}\end{small}
When $p_{1,1}=1$, we obtain from \eqref{p11_proof_theo2} that 
\begin{small}\begin{align*}
\Delta = \frac{2uv}{2u + (1-2\gamma)(\alpha_1 \beta_2 - \alpha_2 \beta_1)}.
\end{align*}\end{small}
This implies that
\begin{small}\begin{align}\label{temp_proof_theo2_c2}
1-p^{\gamma_0}_{-1,1} \stackrel{\eqref{pm11_proof_theo2}}{=} \frac{2u(\gamma_0)}{2u(\gamma_0) + (1-2\gamma_0)(\alpha_1 \beta_2 - \alpha_2 \beta_1)}
\end{align}\end{small}
and 
\begin{small}\begin{align*}
1-p^0_{-1,1} \stackrel{\eqref{temp_proof_theo2_c2}\&\eqref{defxy_proof_theo2}}{=} \frac{\alpha_2 - \alpha_1}{\alpha_2 - \alpha_1 + \alpha_1 \beta_2 - \alpha_2 \beta_1}.
\end{align*}\end{small}
Substituting these in \eqref{proof_theo2_errordiff_case2} we get that
\begin{small}
\begin{align}\label{case2_temp}
\begin{split}
\Error(\Ytrue) - \Error(\Ycorr)&= \frac{1}{4}\left\{2\frac{(\beta_2 - \beta_1)(\alpha_2  - \alpha_1)}{\alpha_2 - \alpha_1 + \alpha_1 \beta_2 - \alpha_2 \beta_1} - \frac{(\beta_2 - \beta_1)2u(\gamma_0) +
(\alpha_2 - \alpha_1)2v(\gamma_0)}{2u(\gamma_0) + (1-2\gamma_0)(\alpha_1 \beta_2 - \alpha_2 \beta_1)}\right\}\\
&=\frac{1}{4}\left\{ 2\frac{(\beta_2 - \beta_1)(\alpha_2 - \alpha_1)}{\alpha_2 - \alpha_1 + \alpha_1 \beta_2 - \alpha_2 \beta_1} 
- \frac{\gamma_0  (\beta_2 - \beta_1 - \alpha_2 + \alpha_1)^2 + 2(\beta_2-\beta_1)(\alpha_2-\alpha_1)}{2u(\gamma_0) + (1-2\gamma_0)(\alpha_1 \beta_2 - \alpha_2 \beta_1)}\right\}\\
&=\frac{1}{4}\left\{ 2\frac{(\beta_2 - \beta_1)(\alpha_2  - \alpha_1)}{\alpha_2 - \alpha_1 + \alpha_1 \beta_2 - \alpha_2 \beta_1} +
\frac{\gamma_0  (\beta_2 - \beta_1 - \alpha_2 + \alpha_1)^2 + 2(\beta_2-\beta_1)(\alpha_2-\alpha_1)}{-2u(\gamma_0) + (1-2\gamma_0)(\alpha_2 \beta_1 - \alpha_1 \beta_2)}\right\}.
\end{split}
\end{align}
\end{small}
Notice that in the second term 
the denominator is positive.
Hence, we get that
\begin{small}\begin{align*}
\frac{\gamma_0  (\beta_2 - \beta_1 - \alpha_2 + \alpha_1)^2 + 2(\beta_2-\beta_1)(\alpha_2-\alpha_1)}{-2u(\gamma_0) + (1-2\gamma_0)(\alpha_2 \beta_1 - \alpha_1 \beta_2)}
\geq \frac{2(\beta_2-\beta_1)(\alpha_2-\alpha_1)}{-2u(\gamma_0) + (1-2\gamma_0)(\alpha_2 \beta_1 - \alpha_1 \beta_2)},
\end{align*}\end{small}
where for $\gamma_0\in(0,\frac{1}{2}]$ equality holds if and only if $\alpha_1-\alpha_2=\beta_1-\beta_2$. 
Next, we have that
\begin{small}\begin{align*}
&-2u(\gamma_0) + (1-2\gamma_0)(\alpha_2 \beta_1 - \alpha_1 \beta_2) = (1-\gamma_0)(\alpha_1 - \alpha_2) + \gamma_0(\beta_1 - \beta_2)+ (1-2\gamma_0)(\alpha_2 \beta_1 - \alpha_1 \beta_2)\\
&~~~~~~~~~~~~~~~~~= \alpha_1 - \alpha_2 + (1-\gamma_0)(\alpha_2 \beta_1 - \alpha_1 \beta_2) - \gamma_0 (\alpha_1 - \alpha_2 + \beta_2 - \beta_1 + \alpha_2 \beta_1 - \alpha_1 \beta_2).
\end{align*}\end{small}
Because of $\gamma_0>0$, $\beta_2 - \beta_1 + \alpha_1 - \alpha_2 +  \alpha_2 \beta_1 - \alpha_1 \beta_2 \geq 0$ and $\alpha_2 \beta_1 - \alpha_1 \beta_2 \geq 0$ we obtain that
\begin{small}\begin{align*}
-2u(\gamma_0) + (1-2\gamma_0)(\alpha_2 \beta_1 - \alpha_1 \beta_2)& \leq \alpha_1 - \alpha_2 + (1-\gamma_0)(\alpha_2 \beta_1 - \alpha_1 \beta_2)\\
& \leq \alpha_1 - \alpha_2 + (\alpha_2 \beta_1 - \alpha_1 \beta_2),
\end{align*}\end{small}
where for $\gamma_0>0$ equality holds if and only if $\alpha_2 \beta_1 = \alpha_1 \beta_2$ and $\alpha_1-\alpha_2=\beta_1-\beta_2$. We 
conclude that
\begin{small}\begin{align*}
\frac{\gamma_0  (\beta_2 - \beta_1 - \alpha_2 + \alpha_1)^2 + 2(\beta_2-\beta_1)(\alpha_2-\alpha_1)}{-2u(\gamma_0) + (1-2\gamma_0)(\alpha_2 \beta_1 - \alpha_1 \beta_2)} \geq 2\frac{(\beta_2 - \beta_1)(\alpha_2  - \alpha_1)}
{\alpha_1 - \alpha_2 + \alpha_2 \beta_1 - \alpha_1 \beta_2},
\end{align*}\end{small}
where equality  holds if and only if $\alpha_2 \beta_1 = \alpha_1 \beta_2$ and $\alpha_1-\alpha_2=\beta_1-\beta_2$. It is not hard  to see that 
$\alpha_1-\alpha_2=\beta_1-\beta_2$ and $\alpha_2 \beta_1 = \alpha_1 \beta_2$ is equivalent to $\alpha_1=\beta_1$ and $\alpha_2=\beta_2$.  
It follows from \eqref{case2_temp} that
\begin{small}\begin{align*}
\Error(\Ytrue) - \Error(\Ycorr) \geq 0,
\end{align*}\end{small}
where equality holds if and only if $\alpha_1=\beta_1$ and $\alpha_2=\beta_2$.

\vspace{4mm}
Note that in Case 1 we can never have $\alpha_1=\beta_1$ and $\alpha_2=\beta_2$  and that $\alpha_1=\beta_1$ and $\alpha_2=\beta_2$ is
equivalent to $\Bias_{Y=+1}(\widetilde{Y})=\Bias_{Y=-1}(\widetilde{Y})=0$ (compare with \eqref{bias_tildeY}). 
Hence, we have proved Theorem~\ref{thm:accuracy-balanced}. \hfill$\square$

\vspace{\abstparaapp}
\subsection{Long Version of Section~\ref{section_related_work} on Related Work}\label{appendix_related_work}

By now, there is a huge body of work on fairness in 
ML, 
mainly  in supervised learning 
\citep[e.g.,][]{Kamiran2012,Kamishima2012,zemel2013,feldman2015,hardt2016equality,kleinberg2017,pleiss2017,woodworth2017,
zafar2017,zafar2017www,agarwal2018,donini2018,menon2018,FairGAN2018,kallus2019}, 
but 
more 
recently also in  unsupervised learning  
\citep[e.g.,][]{fair_clustering_Nips2017,celis2018,sohler_kmeans,samira2018,fair_k_center_2019,fair_SC_2019,samira2019}. 
All of these papers assume to know the true value of the protected attribute for 
every 
data point. We will discuss some papers not making this assumption below. 
First we discuss the pieces of work
 related to the fairness notion of equalized odds, which is central to our paper and one of the most prominent fairness notions in the ML literature
 (see \citealp{Verma2018}, for a 
 summary of the various 
 notions and a citation~count).

\paragraph{Equalized Odds}
 Our paper builds upon the EO postprocessing method of \citet{hardt2016equality} as described in Section~\ref{section_equalized_odds_MAIN}. 
\citeauthor{hardt2016equality} also show how to derive an optimal predictor
satisfying
the EO criterion based on a biased score function 
rather than a binary classifier $\widetilde{Y}$. 
However, in this case the resulting optimization problem is no longer a linear program and it is unclear how to 
extend 
our analysis to it. 
Concurrently with 
the paper by 
\citeauthor{hardt2016equality}, the fairness notion of EO has also been proposed by \citet{zafar2017www} under the name of disparate mistreatment. 
\citeauthor{zafar2017www} incorporate a proxy for the EO criterion into the training phase of a decision boundary-based classifier, 
which leads to a convex-concave optimization problem and does not come with any theoretical guarantees.
The seminal paper of 
 \citet{kleinberg2017} proves that, except for trivial cases, a classifier cannot satisfy the EO criterion and the fairness notion of 
 calibration within groups 
 at the same time. Subsequently,  \citet{pleiss2017} show how to achieve calibration within groups and  a relaxed 
 form of the EO constraints 
 simultaneously. 
 \citet{woodworth2017} show that 
 postprocessing a Bayes optimal 
 unfair  
 classifier 
 in order to obtain a fair classifier (fair / unfair with respect to the notion of EO) can be highly suboptimal  
 and propose a two-step procedure as remedy. In the first step, some approximate fairness constraints are incorporated into the empirical risk minimization framework to get a classifier that 
 is fair to a non-trivial degree, and in the second step, the EO postprocessing method of \citet{hardt2016equality} is used to obtain the final classifier. 
 This procedure is computationally intractable, however, and \citeauthor{woodworth2017} propose the notion of equalized correlations as a relaxation of the 
 notion of EO, which leads to a 
 computationally tractable learning problem. 
 We also want to mention the critical work of \citet{corbett_davies_goel_2018}, 
 which points out some 
 limitations  of 
  prominent 
  group fairness notions 
  based on inframarginal statistics,  
  including~equalized~odds.

\paragraph{Fairness with 
Only 
Limited Information about the 
Protected 
Attribute} 
\citet{fta2012} phrased the notion of individual fairness 
mentioned in Section~\ref{section_intro}, 
according to which similar data points (as measured by a given metric) should be 
treated similarly by a randomized classifier. 
Only recently there have been works studying how to satisfy group fairness criteria when having only limited information about the protected attribute. 
Most important to mention are  the 
works by  \citet{gupta2018} and \citet{lamy2019}. 
\citet{gupta2018} empirically show that 
when the protected attribute is not known, improving a fairness metric  
for a proxy of the true 
attribute 
can 
improve the fairness metric for the true attribute. 
Our paper provides theoretical evidence 
for their observations.
\citet{lamy2019} study a scenario related to ours and consider training a fair classifier when the 
protected  attribute is corrupted 
according to a mutually contaminated model \citep{scott2013}. 
In their case, 
training 
is done by means of constrained empirical risk minimization 
and requires to solve a non-convex optimization problem. 
Similarly to our Theorem~\ref{theorem_bias}, they show that the bias  of a classifier trained with the corrupted 
attribute grows in a certain way with the amount of corruption (where the bias is defined according to the fairness notions of EO or demographic parity). 
However, they do not  
investigate the error 
of such a classifier. 
Importantly, \citeauthor{lamy2019} only consider classifiers that do not use the protected attribute when making a prediction 
for a test point. 
Also important to mention is the paper by \citet{Hashimoto2018}, 
which
uses distributionally robust optimization in order to minimize the worst-case misclassification risk in a $\chi^2$-ball around the data generating distribution. 
In doing so, under the assumption that the resulting non-convex optimization problem was solved exactly, 
one provably 
controls the risk of each protected group without knowing which group a data point belongs to. \citeauthor{Hashimoto2018} also show that their approach helps to avoid disparity amplification 
in a sequential classification setting in which a group's fraction in the data decreases as its misclassification risk increases. 
As an application of our results, in Section~\ref{subsec_exp_repeated_loss_minimization} /  Appendix~\ref{supp_mat_rep_loss_min}  
we experimentally compare the approach of \citeauthor{Hashimoto2018} to the 
EO method with perturbed 
attribute information in such a sequential setting.  
There are a couple of more works that we want to discuss. 
\citet{botros2018} propose a variational autoencoder for learning fair representations \citep{zemel2013,louizos2016} that also works when the protected attribute is only partially 
observed.   
%
\citet{kilbertus2018} provide an approach to fair classification when users to be classified are not willing to share their protected attribute but only
an encrypted version of it. 
Their approach assumes the existence of a regulator with fairness aims and is based on secure multi-party computation.  
\citet{chen_fairness_under_unawareness} study the problem of assessing the demographic disparity of a classifier when the protected attribute is 
unknown and has to be estimated from data.  
%
\citet{coston2019} study
fair classification in a covariate shift setting where the 
attribute is  
only available in the source domain but not in the target domain (or the other way round). 
Finally, we want to mention the recent line of work on 
\emph{rich subgroup fairness} \citep{Hebert2018,kearns_2018_subgroup1,kearns_2018_subgroup2}.
This notion 
falls between the categories of individual and group fairness in that it requires some statistic to be similar 
for a \emph{large} (or even infinite) number of subgroups,  
which are 
defined 
via a function class 
rather than a protected attribute.

\vspace{\abstparaapp}
\subsection{Detailed Expressions Required for the Experiments of Section~\ref{subsec_simulations}}\label{subsec_supp_mat_sim_details}

We need to solve the linear program
\begin{small}
\begin{align}\label{linear_program_details}
\begin{split}
&\min_{\substack{p_{1,0},~p_{1,1},\\p_{-1,0},~p_{-1,1}\in[0,1]}}~ \sum_{\substack{y\in\{-1,+1\}\\a\in\{0,1\}}}
\left\{\Pro\left[Y=-1, \As=a, \widetilde{Y}=y\right]-\Pro\left[Y=1, \As=a, \widetilde{Y}=y\right]\right\}\cdot p_{y,a}\\
&~\text{s.t.}~~\Pro\left[\widetilde{Y}=1 \,\big|\, Y=y,\As=0\right]\cdot p_{1,0} + \Pro\left[\widetilde{Y}=-1 \,\big|\, Y=y,\As=0\right]\cdot p_{-1,0} =\\
&~~~~~~~~~~~~~~~~ \Pro\left[\widetilde{Y}=1 \,\big|\, Y=y,\As=1\right]\cdot  p_{1,1}+ \Pro\left[\widetilde{Y}=-1 \,\big|\, Y=y,\As=1\right]\cdot p_{-1,1}, \quad y\in\{-1,1\},
\end{split}
\end{align}
\end{small}
where we have to express all coefficients in terms of the problem parameters $\Psymb[Y=y,A=a]$ and $\Psymb\left[\widetilde{Y}=1|Y=y,A=a\right]$ and 
the  perturbation probabilities $\Pro\left[\As\neq A | Y=y,A=a\right]$. As in Section~\ref{subsec_simulations}, we let 
$\gamma_{y,a}:=\Pro\left[\As\neq A |Y=y, A=a\right]$, $y\in\{-1,+1\},a\in\{0,1\}$. From \eqref{form_for_details_1} to \eqref{def_h} in the proof 
of Theorem~\ref{theorem_bias}  
we obtain that the objective function equals
\begin{small} \begin{align*}
\Psymb\left[Y=-1, \As=0\right]\cdot \{p_{1,0} \cdot g + p_{-1,0}\cdot (1-g)\}+\Psymb\left[Y=-1, \As=1\right] \cdot\{p_{1,1} \cdot h + p_{-1,1}\cdot (1-h)\}&\\
- \,\Psymb\left[Y=1, \As=0\right]\cdot \{p_{1,0} \cdot e + p_{-1,0}\cdot (1-e) \}-\Psymb\left[Y=1, \As=1\right] \cdot\{p_{1,1} \cdot f + p_{-1,1}\cdot (1-f) \}&
\end{align*}
\end{small}
and that the constraints are equivalent to
\begin{small}
\begin{align*}
 p_{1,0} \cdot e + p_{-1,0}\cdot (1-e) = p_{1,1} \cdot f + p_{-1,1}\cdot (1-f),\\
p_{1,0} \cdot g + p_{-1,0}\cdot (1-g) = p_{1,1} \cdot h + p_{-1,1}\cdot (1-h)
\end{align*}
\end{small}
with
\begin{small} \begin{align*}
e&:=\alpha_1 + (\beta_1 - \alpha_1)\cdot\Psymb\left[A=1  \,|\, Y=1, \As =0\right],\qquad~~ f:=\alpha_1 + (\beta_1 - \alpha_1)\cdot\Psymb\left[A=1  \,|\, Y=1, \As =1\right],\\
g&:= \alpha_2 + (\beta_2 - \alpha_2)\cdot\Psymb\left[A=1  \,|\, Y=-1, \As =0\right],\qquad h:= \alpha_2 + (\beta_2 - \alpha_2)\cdot\Psymb\left[A=1  \,|\, Y=-1, \As =1\right]
\end{align*} \end{small}
 and $\alpha_1,\beta_1,\alpha_2,\beta_2$ defined in \eqref{def_alpha_beta}.
 It is 
 \begin{small}
 \begin{align*}
  \Psymb\left[Y=y, \As=a\right]=\sum_{a'\in\{0,1\}}\underbrace{\Psymb\left[\As=a\,|\,Y=y,A=a'\right]}_{\gamma_{y,a'}~\text{or}~1-\gamma_{y,a'}}\cdot\Psymb\left[Y=y, A=a'\right]
 \end{align*}
 \end{small}
 and from \eqref{prob_in_proof_end_1} and \eqref{prob_in_proof_end_2} in the proof of Theorem~\ref{theorem_bias} we obtain that
\begin{small}
 \begin{align*}
&\Psymb[A=1 \,|\, Y=y, \As=0]=\frac{\gamma_{y,1}\cdot \Psymb[A=1, Y=y]}{\gamma_{y,1}\cdot \Psymb[A=1, Y=y]+(1-\gamma_{y,0})\cdot 
\Psymb[A=0, Y=y]},\\
&\Psymb[A=1 \,|\, Y=y, \As=1]=\frac{(1-\gamma_{y,1})\cdot \Psymb[A=1, Y=y]}{(1-\gamma_{y,1})\cdot \Psymb[A=1, Y=y]+\gamma_{y,0}\cdot \Psymb[A=0, Y=y]}.
 \end{align*}
 \end{small}
Hence, we have written all coefficients of \eqref{linear_program_details} in terms of the problem parameters and perturbation probabilities.

\vspace{2mm}
After solving \eqref{linear_program_details} and obtaining 
a solution $p_{1,0},\,p_{1,1},\,p_{-1,0},\,p_{-1,1}$, we need to compute the bias and the error of the equalized odds predictor~$\widehat{Y}$ that is based on 
$p_{1,0},\,p_{1,1},\,p_{-1,0},\,p_{-1,1}$. From \eqref{bias_widehatY_in_proof_theo_bias} in the proof of Theorem~\ref{theorem_bias} we obtain that
\begin{small} \begin{align*}
 \Bias_{Y=+1} &= \left|\alpha_1 \cdot(p_{1,0} - p_{-1,0}) - \beta_1\cdot (p_{1,1} - p_{-1,1}) + p_{-1,0} - p_{-1,1}\right|,\\
 \Bias_{Y=-1} &= \left|\alpha_2  \cdot (p_{1,0} - p_{-1,0}) - \beta_2  \cdot (p_{1,1} - p_{-1,1}) + p_{-1,0} - p_{-1,1}\right|.
  \end{align*} \end{small}
  It is easy to verify that the error of $\widehat{Y}$ is given by (recall that the error refers to the test error and that in the test phase $\widehat{Y}$ gets to see the true protected 
  attribute)
  \begin{small}
   \begin{align}\label{error_widehatY_in_details}
   \begin{split}
&  \Error(\widehat{Y})= \Psymb[Y=1]+\big\{\alpha_2\Psymb[Y=-1,A=0]-\alpha_1\Psymb[Y=1,A=0]\big\}\cdot p_{1,0}\\
&~~~~+\big\{\beta_2\Psymb[Y=-1,A=1]-\beta_1\Psymb[Y=1,A=1]\big\}\cdot p_{1,1}\\
&~~~~+\big\{\Psymb[Y=-1,A=0]-\Psymb[Y=1,A=0]-\alpha_2\Psymb[Y=-1,A=0]+\alpha_1\Psymb[Y=1,A=0]\big\}\cdot p_{-1,0}\\
&~~~~+\big\{\Psymb[Y=-1,A=1]-\Psymb[Y=1,A=1]-\beta_2\Psymb[Y=-1,A=1]+\beta_1\Psymb[Y=1,A=1]\big\}\cdot p_{-1,1}.
  \end{split}
  \end{align}
  \end{small}

 \vspace{2mm}
 Finally, we have
 \begin{small} \begin{align*}
 \Bias_{Y=+1}(\widetilde{Y})=|\alpha_1-\beta_1|, \quad  \Bias_{Y=-1}(\widetilde{Y})=|\alpha_2-\beta_2|
 \end{align*} \end{small}
and (simply set $p_{1,0}=p_{1,1}=1$ and $p_{-1,0}=p_{-1,1}=0$ in \eqref{error_widehatY_in_details})
\begin{small} \begin{align*}
&\Error(\widetilde{Y})= \Psymb[Y=1]+\alpha_2\Psymb[Y=-1,A=0]-\alpha_1\Psymb[Y=1,A=0]+\beta_2\Psymb[Y=-1,A=1]-\beta_1\Psymb[Y=1,A=1].
 \end{align*} \end{small}

\vspace{\abstparaapp}
\subsection{Problem Parameters for the Experiments of Figure~\ref{figure_simulations}}\label{subsec_table1}

Table~\ref{table_parameters_main} provides the problem parameters for the experiments shown in Figure~\ref{figure_simulations}.

\vspace{2mm}
\begin{table}[h!]
  \caption{Problem parameters for the experiments of Figure~\ref{figure_simulations}.}\label{table_parameters_main}
  \centering
\renewcommand{\arraystretch}{1.4}
\begin{tabular}{lccccc}
\toprule
\multirow{3}{*}{Plot}&\multicolumn{4}{c}{$\Psymb[\widetilde{Y}=1\,|\,Y=y,A=a]$} & \multirow{3}{*}{~~$(\gamma_{1,1},\gamma_{-1,0},\gamma_{-1,1})$~~}\\
    \cmidrule(r){2-5}
 & ~~$y=1$~~ & ~~$y=1$~~ & ~~$y=-1$~~ & ~~$y=-1$~~ &\\[-1mm]
 & $a=0$ & $a=1$ & $a=0$ & $a=1$ &\\
 \midrule
top left & 0.9 & 0.8 & 0.4 & 0.1 & $(\gamma_{1,0},\gamma_{1,0},\gamma_{1,0})$\\ 
top right & 0.9 & 0.6 & 0.7 & 0.1 & $(\gamma_{1,0},\frac{\gamma_{1,0}}{2},\frac{\gamma_{1,0}}{2})$\\ 
bottom left~~ & 0.9 & 0.6 & 0.3 & 0.8 & $(\frac{\gamma_{1,0}}{2},\frac{\gamma_{1,0}}{4},\frac{\gamma_{1,0}}{8})$\\
bottom right~~ & 0.9 & 0.5 & 0.0 & 0.4 & $(\gamma_{1,0},\gamma_{1,0},\gamma_{1,0})$\\
\bottomrule
\end{tabular}
\end{table}

\vspace{\abstparaapp}
\subsection{Further Experiments as in Section~\ref{subsec_simulations}}\label{supp_mat_section_experiments}

In Figure~\ref{figure_simulations_appendix} and Figure~\ref{figure_simulations_appendix_2}, we present a number of 
further experiments as described in Section~\ref{subsec_simulations}.  
The problem parameters can be read from the titles of the plots and Tables~\ref{table_parameters_appendix} and~\ref{table_parameters_appendix_2}, respectively.  
In these tables, we also report whether 
inequality~\eqref{inequality_error} is true or not (with $\Ytrue$ corresponding to $\widehat{Y}$ for $\gamma_{1,0}=0$, and $\Ycorr$   
corresponding to $\widehat{Y}$ for $\gamma_{1,0}=0.05$).  
We chose the parameters 
to be presented here 
in a rather non-systematic way, but such that 
(i) the given classifier~$\widetilde{Y}$ is biased (i.e., $\Bias_{Y=+1}(\widetilde{Y})>0$), 
(ii) 
$\widetilde{Y}$ 
satisfies Assumption~\ref{assu_error}, 
(iii) we do not only observe constant curves in a plot (i.e., the EO method does not  yield the same classifier for all values of $\gamma_{1,0}$),  
and  
(iv) the parameters cover a wide range of settings.  
In these experiments, we make the same observations as in the experiments of 
Section~\ref{subsec_simulations}, and 
we 
obtain further 
confirmation of  
the main claims of our paper.

\vspace{2mm}
\begin{table}[h]
  \caption{Problem parameters for the experiments of Figure~\ref{figure_simulations_appendix}. We use $r(\gamma_{1,0}):=\min\{2\gamma_{1,0},0.8\}$.}\label{table_parameters_appendix}
  \centering
\renewcommand{\arraystretch}{1.4}
\begin{tabular}{lcccccc}
\toprule
\multirow{3}{*}{Plot}&\multicolumn{4}{c}{$\Psymb[\widetilde{Y}=1\,|\,Y=y,A=a]$} & \multirow{3}{*}{~~$(\gamma_{1,1},\gamma_{-1,0},\gamma_{-1,1})$~~}&\multirow{3}{*}{\eqref{inequality_error} is true}\\
    \cmidrule(r){2-5}
 & ~~$y=1$~~ & ~~$y=1$~~ & ~~$y=-1$~~ & ~~$y=-1$~~ &\\[-1mm]
 & $a=0$ & $a=1$ & $a=0$ & $a=1$ &\\
 \midrule
1st row left & 0.8 & 0.9 & 0.1 & 0.0 & $(\gamma_{1,0},\gamma_{1,0},\gamma_{1,0})$ & yes\\ 
1st row right & 0.8 & 0.9 & 0.1 & 0.0 & $(\frac{\gamma_{1,0}}{2},\frac{\gamma_{1,0}}{4},\frac{\gamma_{1,0}}{8})$ & yes\\ 
2nd row left & 0.8 & 0.9 & 0.1 & 0.0 & $(\gamma_{1,0},\frac{\gamma_{1,0}}{2},\frac{\gamma_{1,0}}{2})$ & yes\\
2nd row right~~ & 0.8 & 0.9 & 0.1 & 0.0 & ~~~~$(\gamma_{1,0},r(\gamma_{1,0}),r(\gamma_{1,0}))$~~~~& yes\\
3rd row left & 0.9 & 0.6 & 0.7 & 0.1 & $(\gamma_{1,0},\gamma_{1,0},\gamma_{1,0})$& yes\\
3rd row right~~ & 0.9 & 0.4 & 0.1 & 0.1 & $(\frac{\gamma_{1,0}}{2},\frac{\gamma_{1,0}}{4},\frac{\gamma_{1,0}}{8})$& yes\\
4th row left & 0.7 & 0.9 & 0.3 & 0.0 & $(\gamma_{1,0},\frac{\gamma_{1,0}}{2},\frac{\gamma_{1,0}}{2})$& yes\\
4th row right~~~~ & 0.7 & 0.9 & 0.3 & 0.0 & ~~~~$(\gamma_{1,0},r(\gamma_{1,0}),r(\gamma_{1,0}))$~~~~& yes\\
5th row left & 0.3 & 0.8 & 0.1 & 0.2 & $(\gamma_{1,0},\gamma_{1,0},\gamma_{1,0})$& yes\\
5th row right~~ & 0.3 & 0.8 & 0.1 & 0.2 & $(\gamma_{1,0},\gamma_{1,0},\gamma_{1,0})$& yes\\
6th row left & 0.9 & 0.6 & 0.4 & 0.1 & $(\frac{\gamma_{1,0}}{2},\frac{\gamma_{1,0}}{4},\frac{\gamma_{1,0}}{8})$& yes\\
6th row right~~~~ & 0.9 & 0.6 & 0.4 & 0.4 & $(\frac{\gamma_{1,0}}{2},\frac{\gamma_{1,0}}{4},\frac{\gamma_{1,0}}{8})$& yes\\
7th row left & 0.5 & 0.8 & 0.1 & 0.4 & $(\gamma_{1,0},\gamma_{1,0},\gamma_{1,0})$& \textbf{no}\\
7th row right~~ &0.6 & 0.8 & 0.1 & 0.4 & ~~~~$(\gamma_{1,0},r(\gamma_{1,0}),r(\gamma_{1,0}))$~~~~& \textbf{no}\\
\bottomrule
\end{tabular}
\end{table}

\vspace{2mm}
\begin{table}[h!]
  \caption{Problem parameters for the experiments of Figure~\ref{figure_simulations_appendix_2}. 
  }
  \label{table_parameters_appendix_2}
  \centering
\renewcommand{\arraystretch}{1.4}
\begin{tabular}{lcccccc}
\toprule
\multirow{3}{*}{Plot}&\multicolumn{4}{c}{$\Psymb[\widetilde{Y}=1\,|\,Y=y,A=a]$} & \multirow{3}{*}{~~$(\gamma_{1,1},\gamma_{-1,0},\gamma_{-1,1})$~~}&\multirow{3}{*}{\eqref{inequality_error} is true}\\
    \cmidrule(r){2-5}
 & ~~$y=1$~~ & ~~$y=1$~~ & ~~$y=-1$~~ & ~~$y=-1$~~ &\\[-1mm]
 & $a=0$ & $a=1$ & $a=0$ & $a=1$ &\\
 \midrule
1st row left & 0.6 & 0.55 & 0.1 & 0.3 & $(\gamma_{1,0},\gamma_{1,0},\gamma_{1,0})$ & yes\\ 
1st row right &  0.9 & 0.6 & 0.4 & 0.1 & $(\gamma_{1,0},\gamma_{1,0},\gamma_{1,0})$ & yes\\ 
2nd row left & 1.0 & 0.8 & 0.0 & 0.1 & $(\gamma_{1,0},\gamma_{1,0},\gamma_{1,0})$ & yes\\
2nd row right~~ & 0.4 & 0.95 & 0.1 & 0.15 & $(\gamma_{1,0},\gamma_{1,0},\gamma_{1,0})$ & yes\\
3rd row left & 0.3 & 0.7 & 0.1 & 0.5 & $(\frac{\gamma_{1,0}}{2},\frac{\gamma_{1,0}}{4},\frac{\gamma_{1,0}}{8})$ & \textbf{no}\\
3rd row right~~ & 0.35 & 0.95 & 0.1 & 0.15 & $(\gamma_{1,0},\frac{\gamma_{1,0}}{2},\frac{\gamma_{1,0}}{2})$ & \textbf{no}\\
\bottomrule
\end{tabular}
\end{table}

\newcommand{\WiSimAppendix}{7.2cm}
\newcommand{\DiSimAppendix}{6mm}

\begin{figure}[h!]
 \centering
\includegraphics[width=\WiSimAppendix]{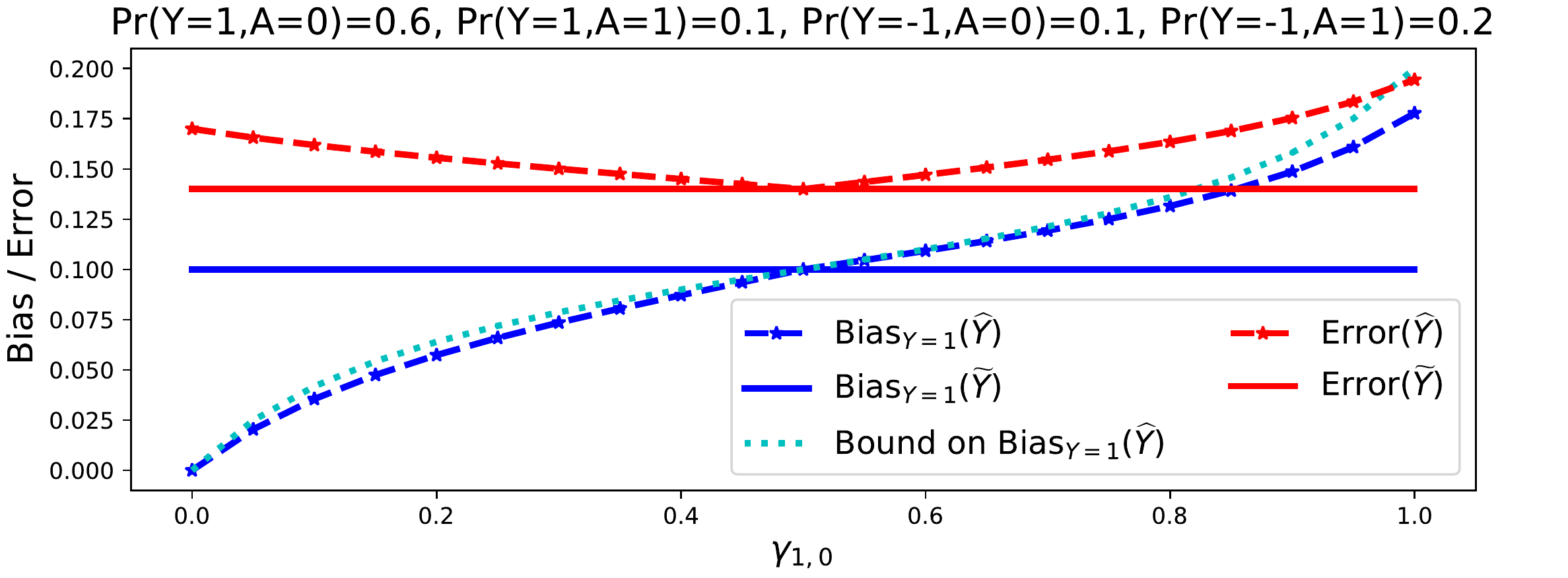}
\hspace{\DiSimAppendix}
\includegraphics[width=\WiSimAppendix]{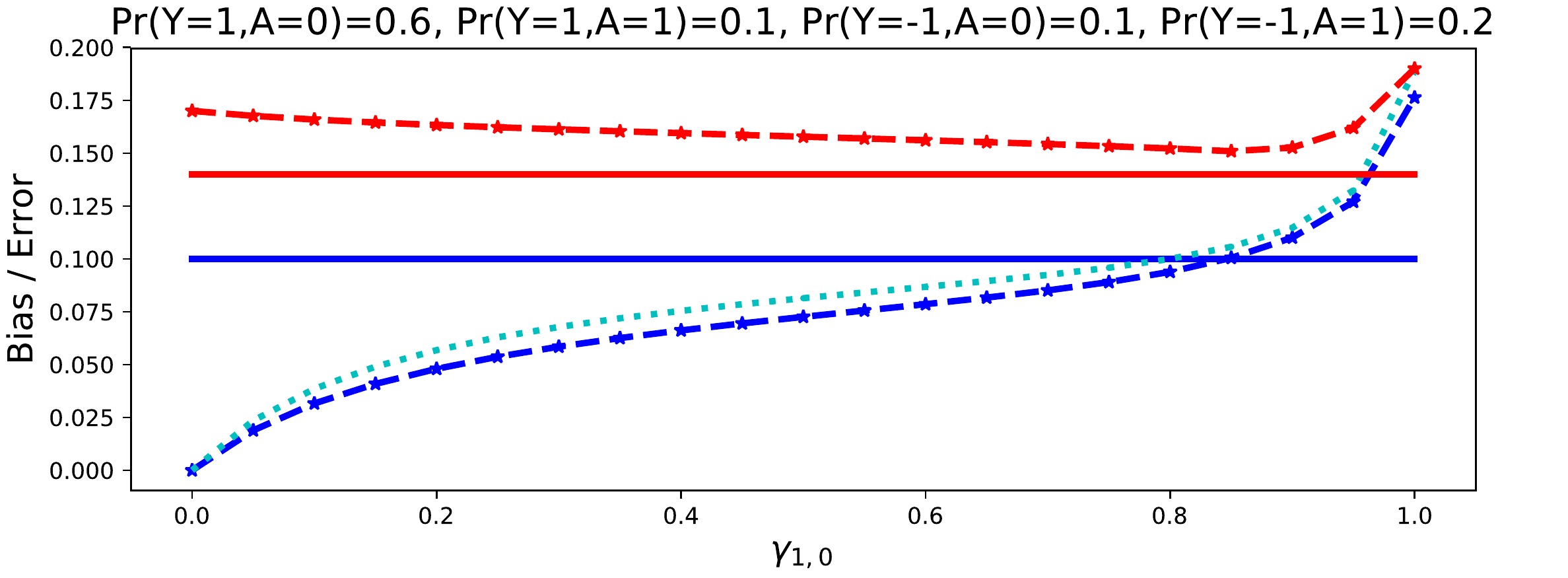}

\vspace{3mm}
\includegraphics[width=\WiSimAppendix]{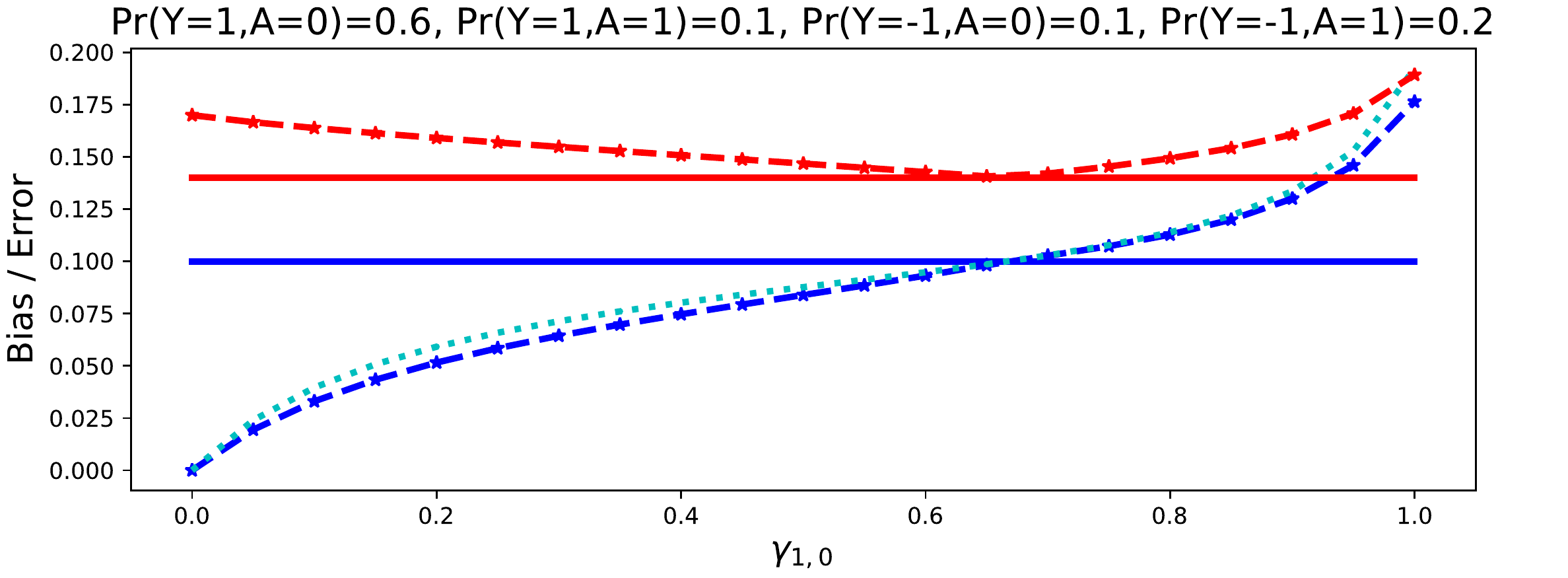}
\hspace{\DiSimAppendix}
\includegraphics[width=\WiSimAppendix]{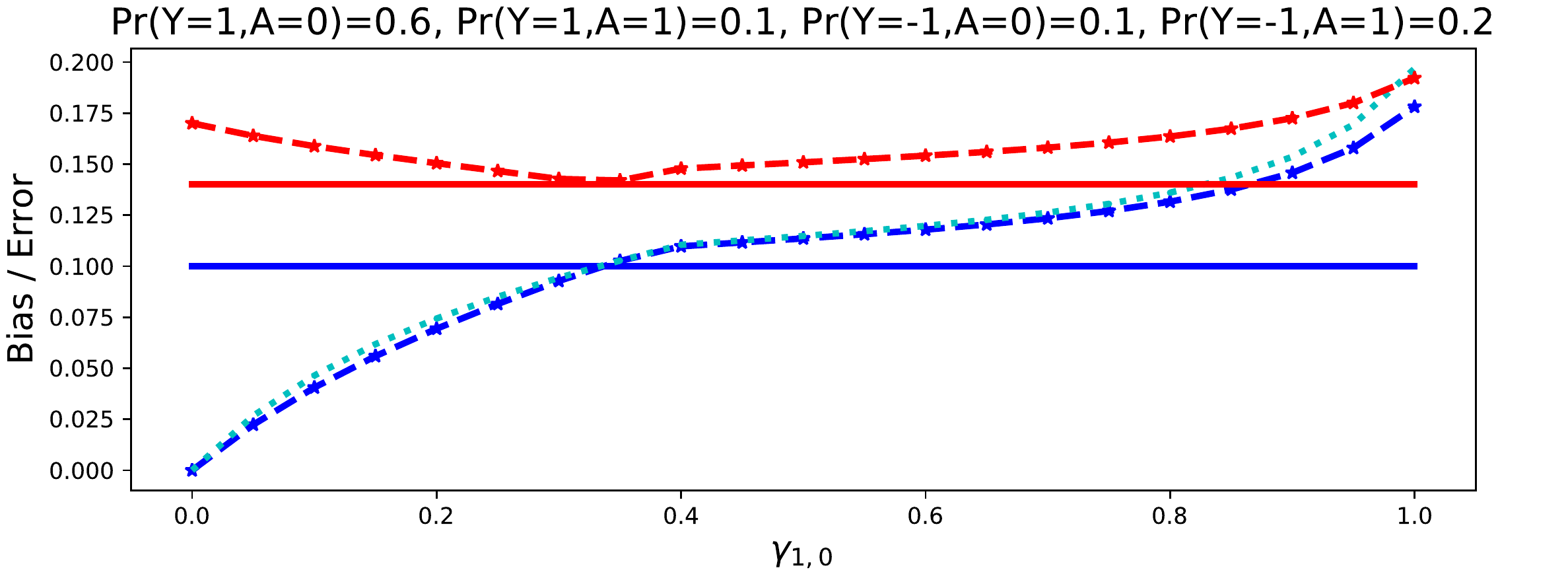}

\vspace{3mm}
\includegraphics[width=\WiSimAppendix]{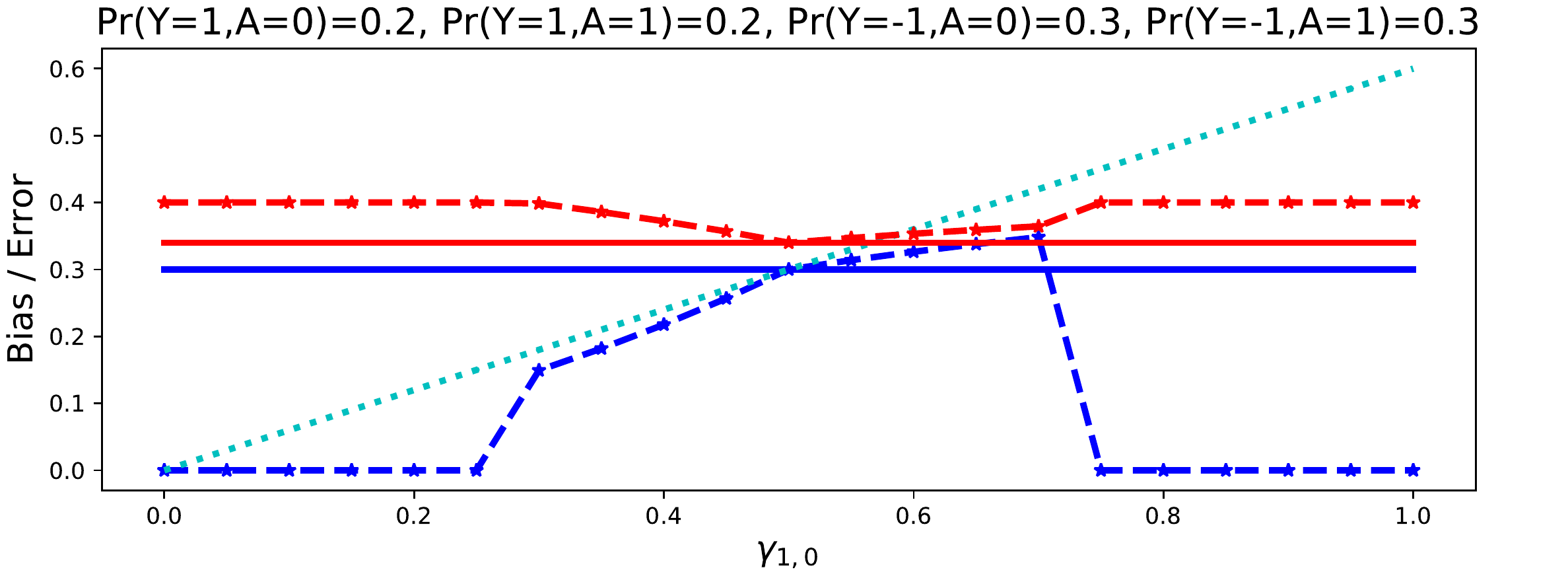}
\hspace{\DiSimAppendix}
\includegraphics[width=\WiSimAppendix]{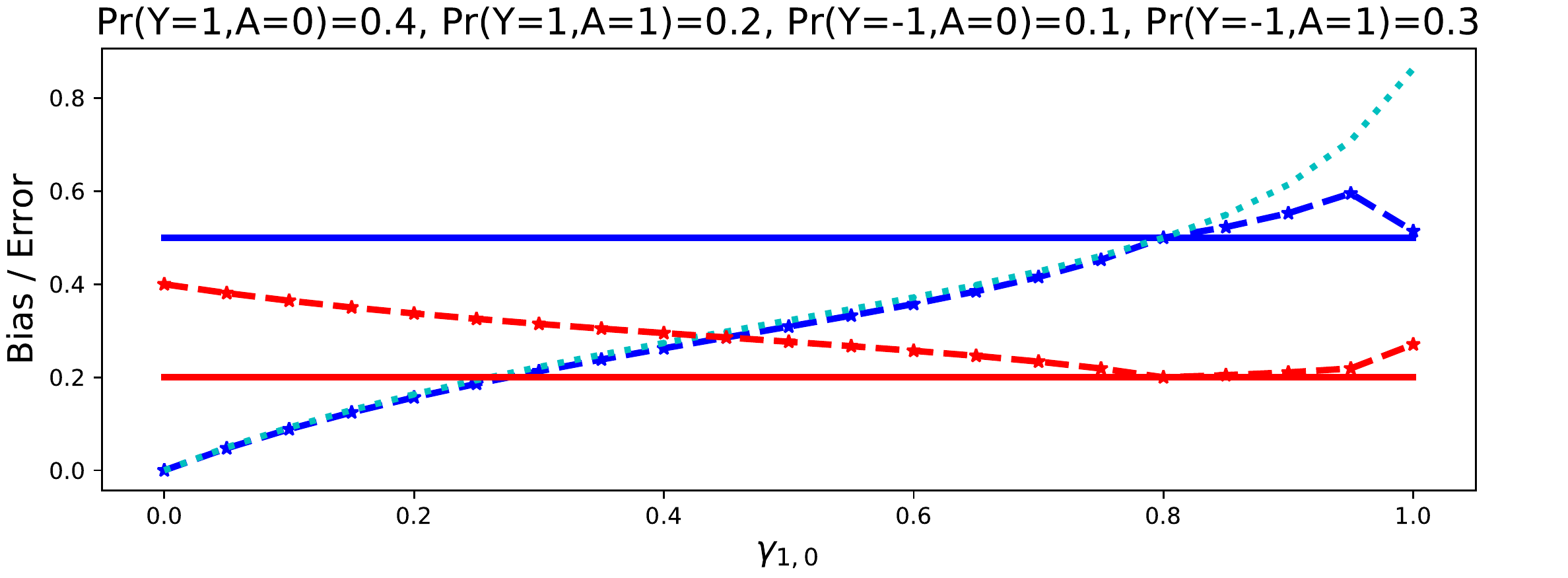}

\vspace{3mm}
\includegraphics[width=\WiSimAppendix]{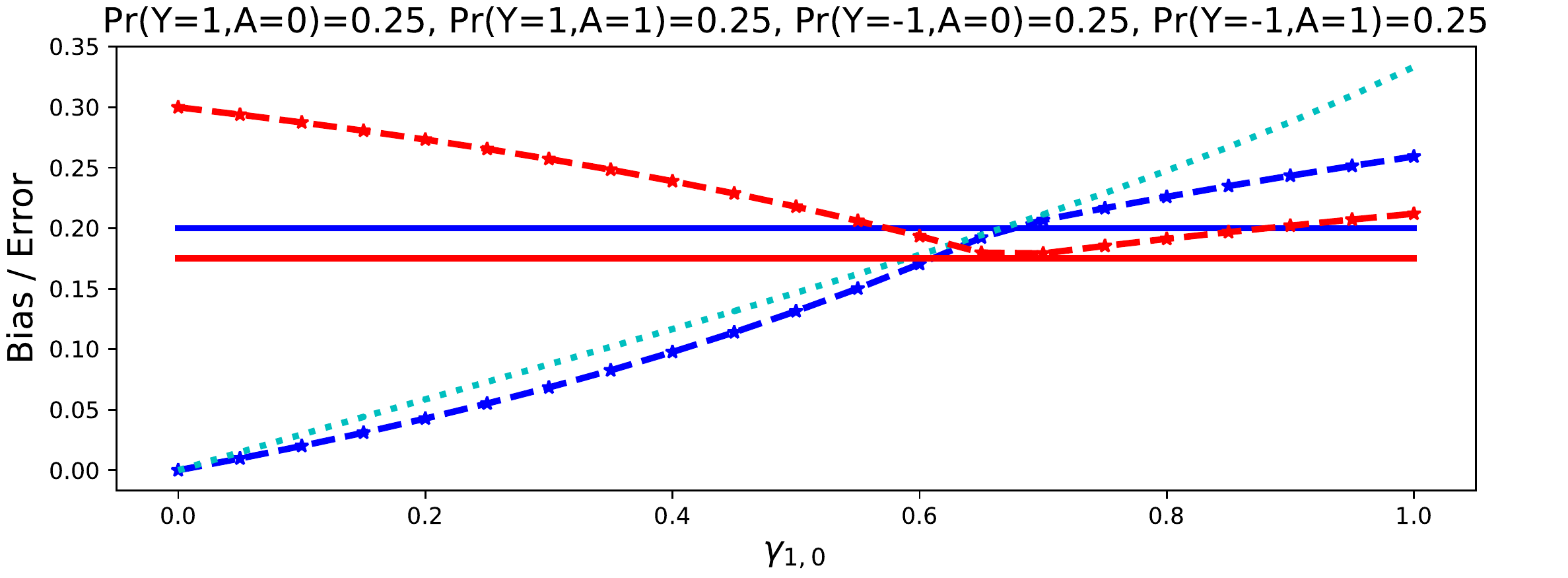}
\hspace{\DiSimAppendix}
\includegraphics[width=\WiSimAppendix]{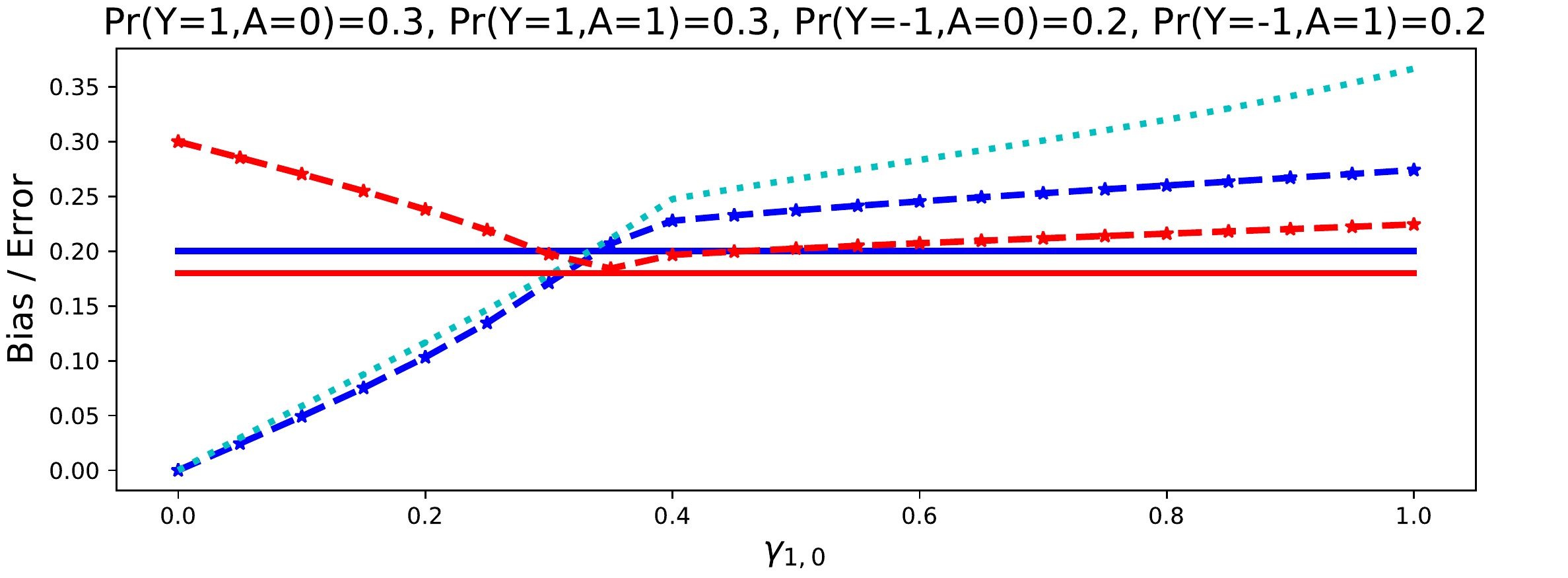}

\vspace{3mm}
\includegraphics[width=\WiSimAppendix]{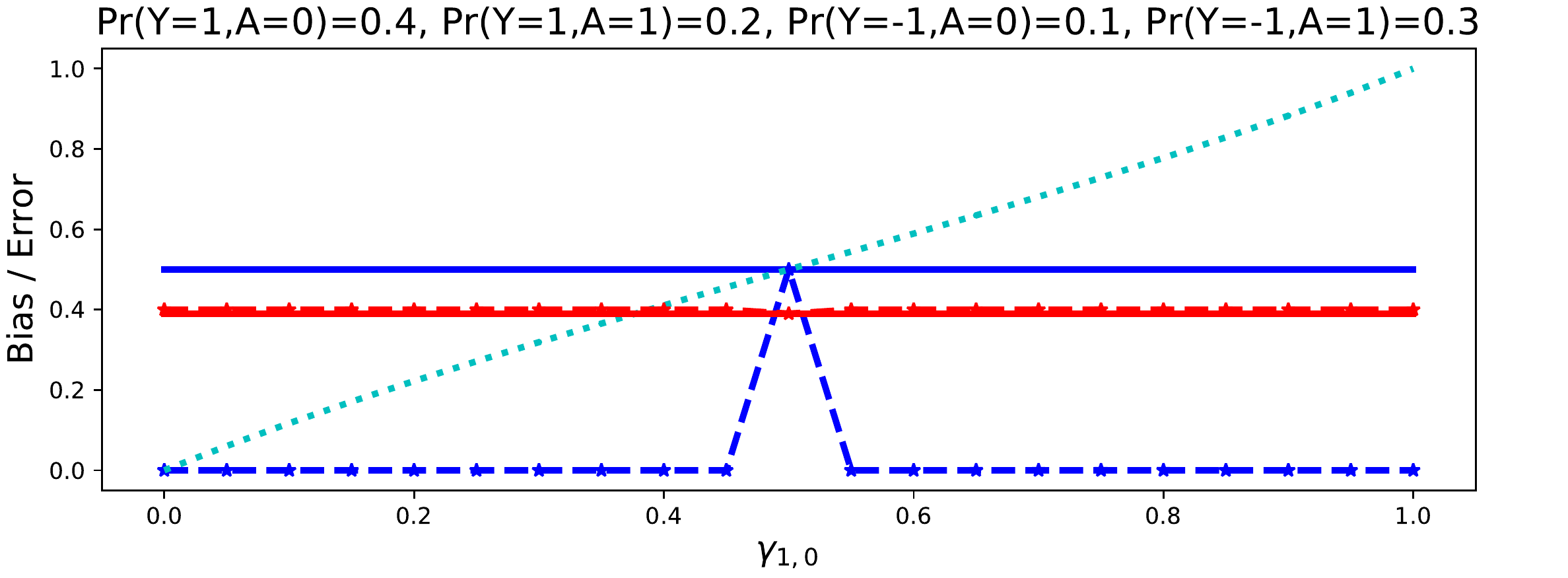}
\hspace{\DiSimAppendix}
\includegraphics[width=\WiSimAppendix]{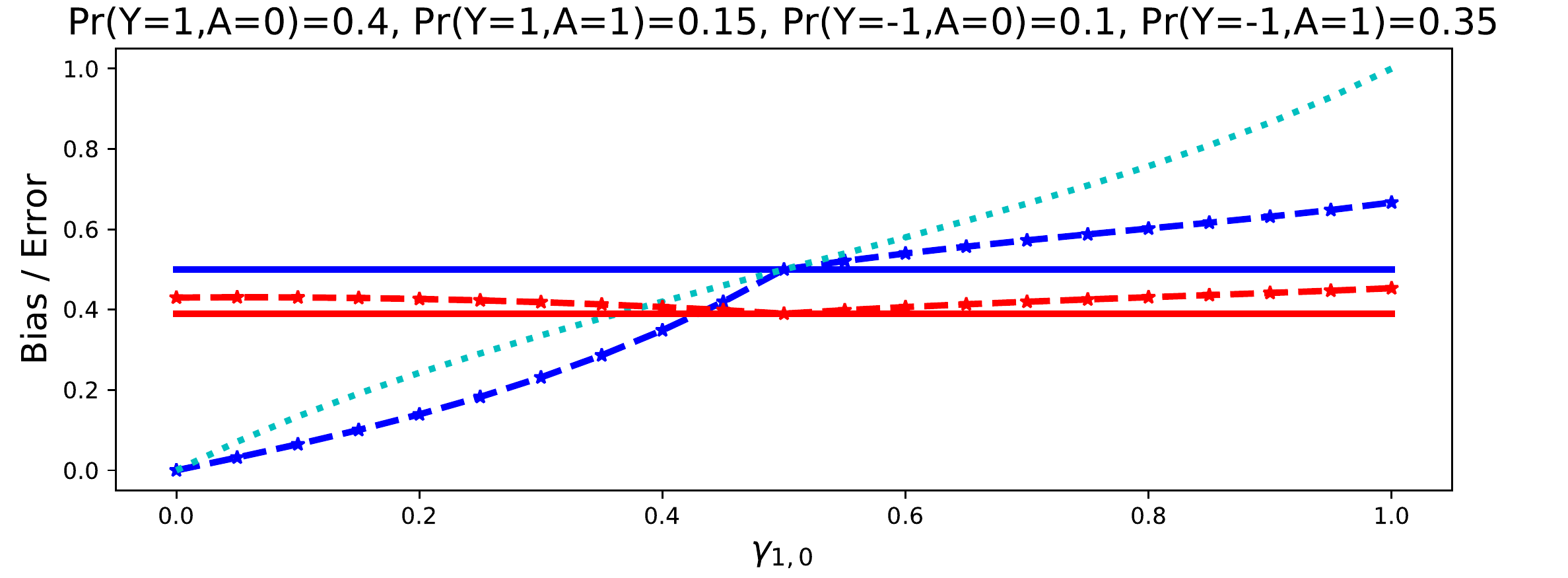}

\vspace{3mm}
\includegraphics[width=\WiSimAppendix]{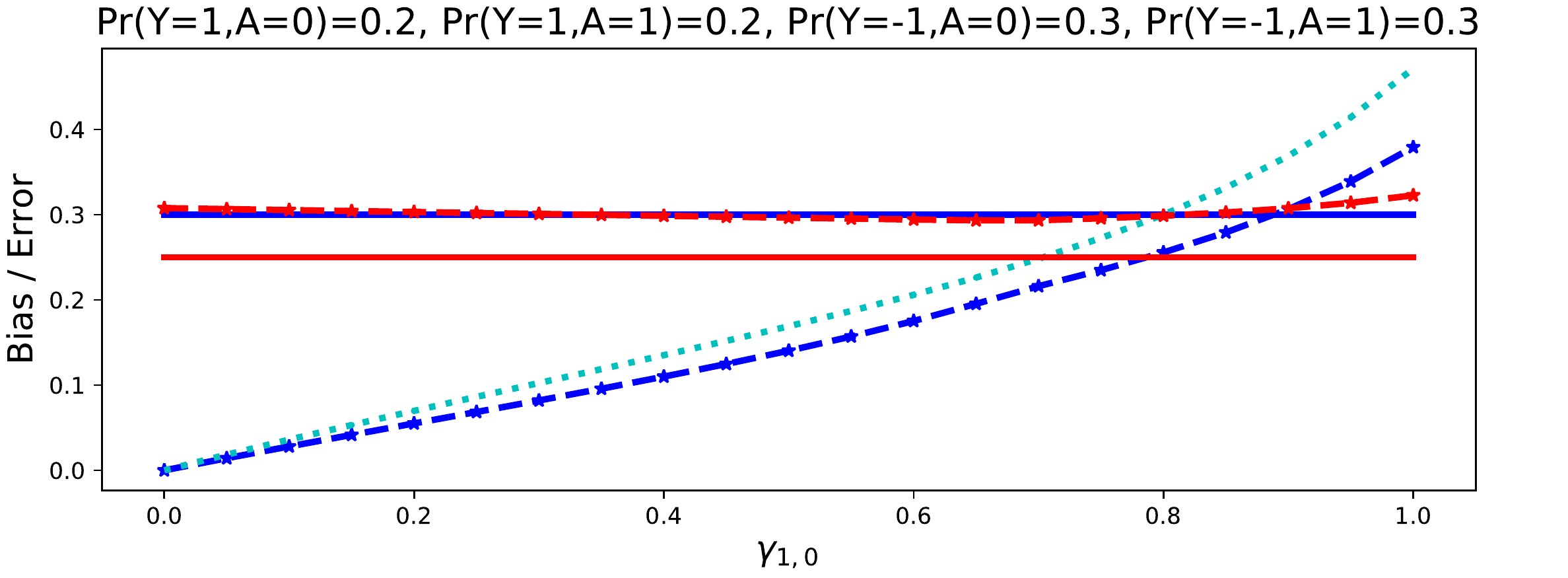}
\hspace{\DiSimAppendix}
\includegraphics[width=\WiSimAppendix]{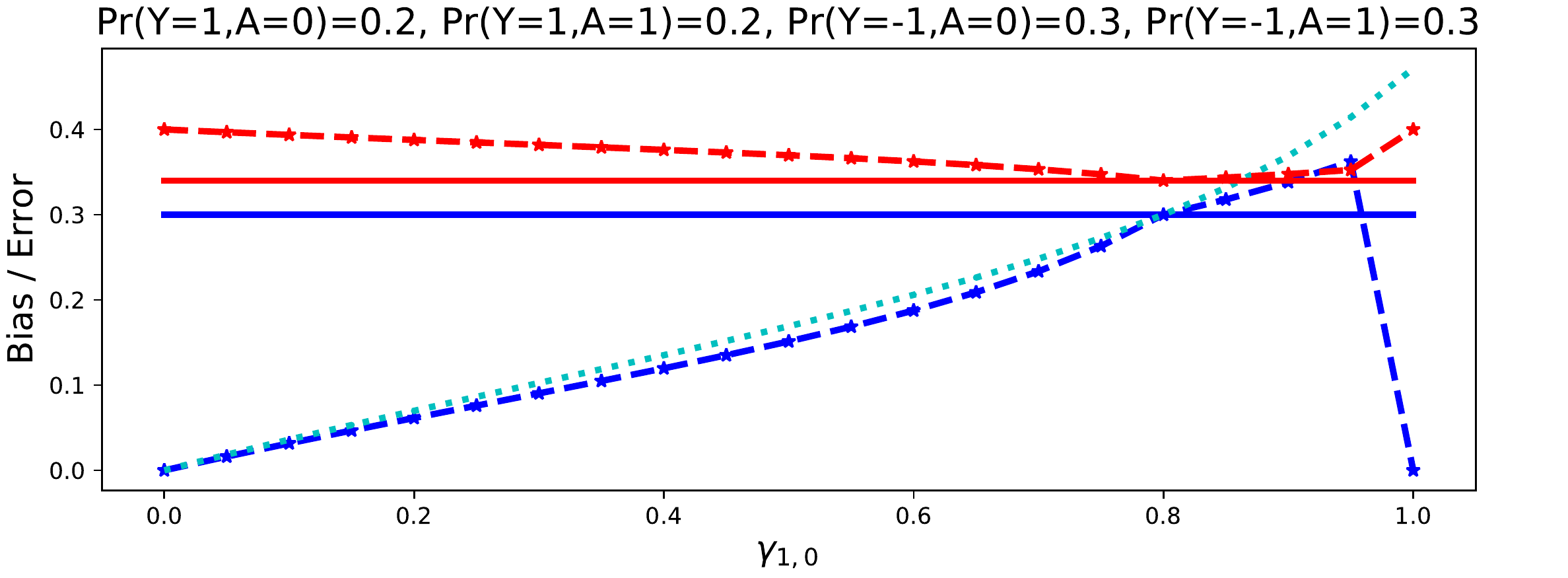}

\vspace{3mm}
\includegraphics[width=\WiSimAppendix]{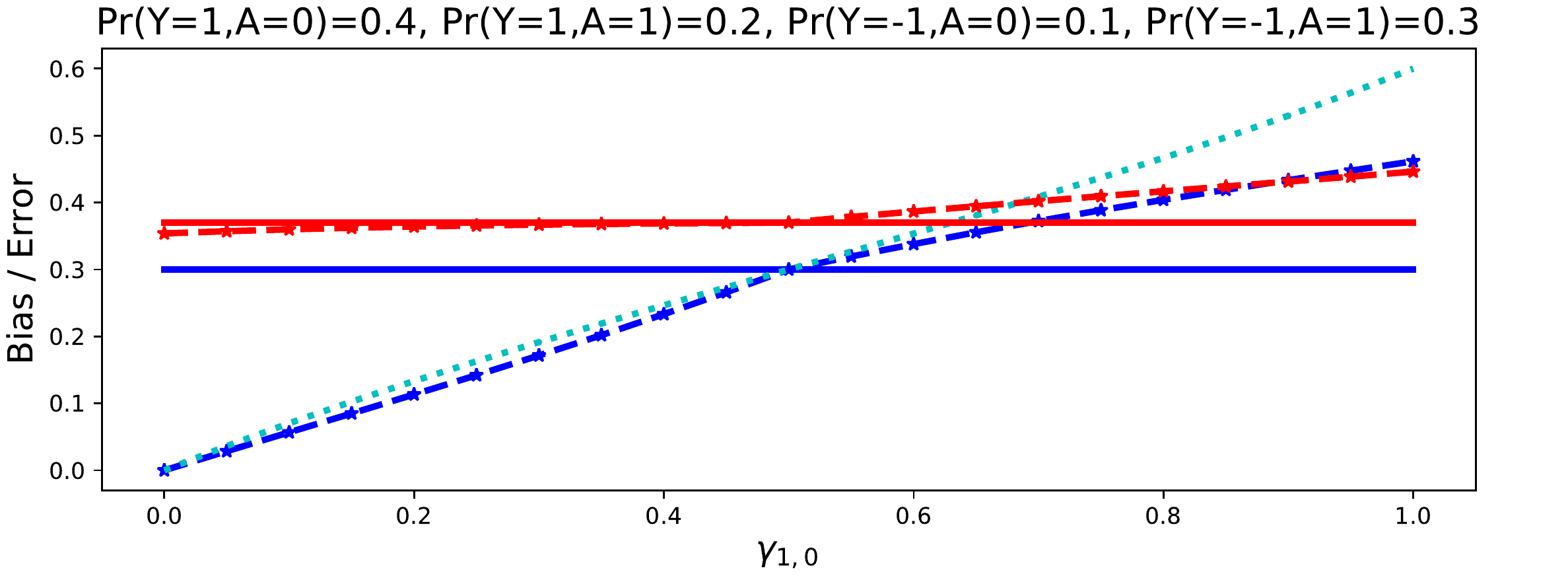}
\hspace{\DiSimAppendix}
\includegraphics[width=\WiSimAppendix]{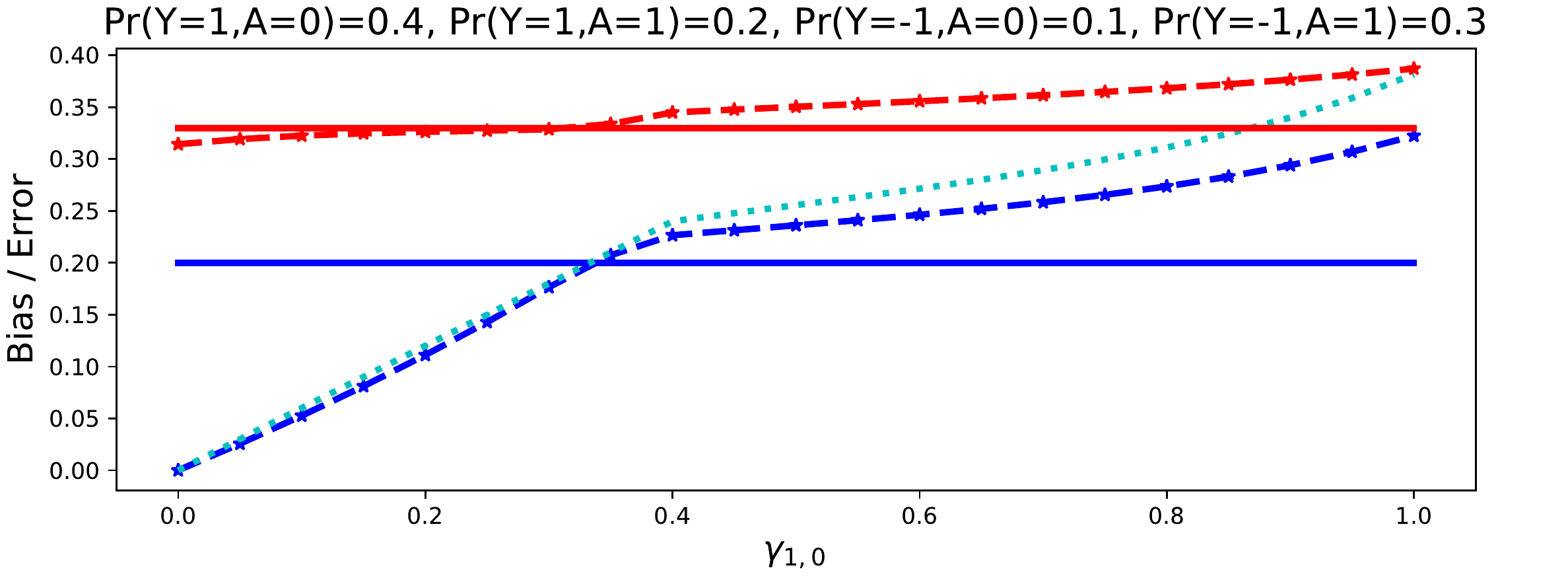}

\caption{Similar experiments as shown in Figure~\ref{figure_simulations}.
The dashed blue curve shows $\Bias_{Y=1}(\widehat{Y})$ and the dashed red curve shows $\Error(\widehat{Y})$ as a function of the perturbation level. 
The solid blue line shows $\Bias_{Y=1}(\widetilde{Y})$  and the solid red line 
shows $\Error(\widetilde{Y})$. 
The dotted cyan 
curve shows the upper bound on $\Bias_{Y=1}(\widehat{Y})$ provided in \eqref{theorem_bias_formula} in Theorem~\ref{theorem_bias}. 
The problem parameters can be read from the titles of the plots and Table~\ref{table_parameters_appendix}.}\label{figure_simulations_appendix}
\end{figure}

\begin{figure}[h!]
 \centering
\includegraphics[width=\WiSimAppendix]{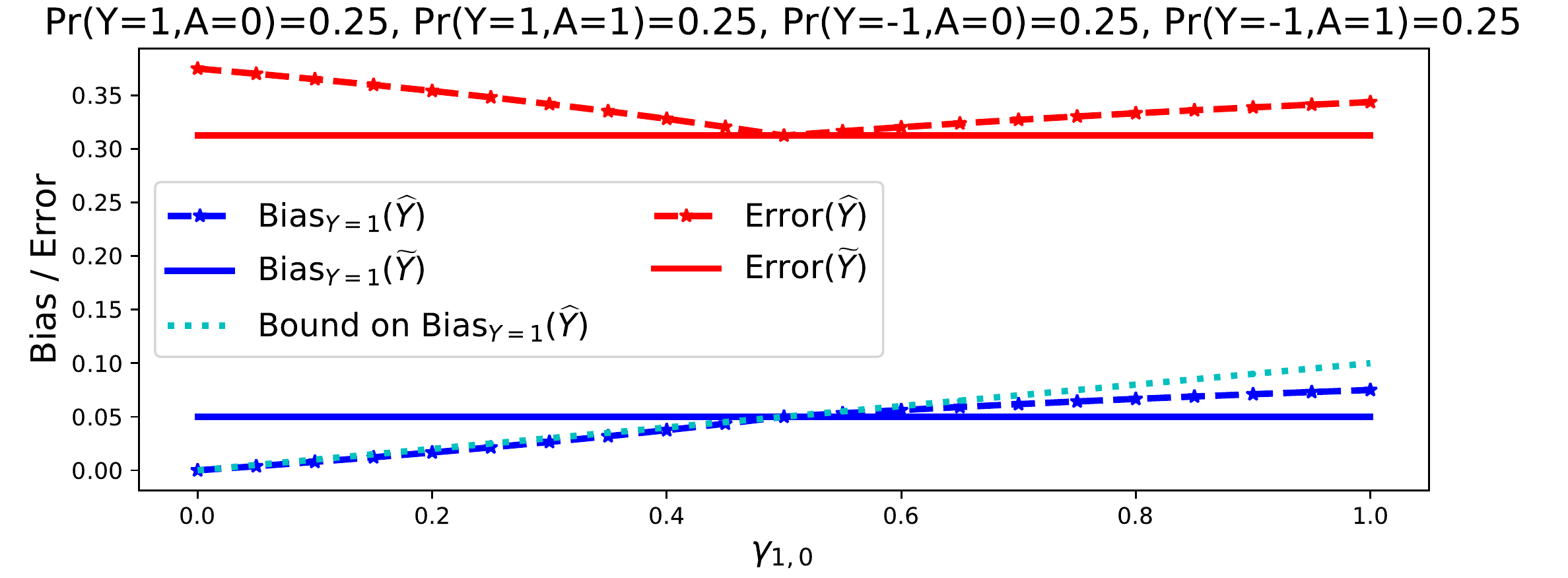}
\hspace{\DiSimAppendix}
\includegraphics[width=\WiSimAppendix]{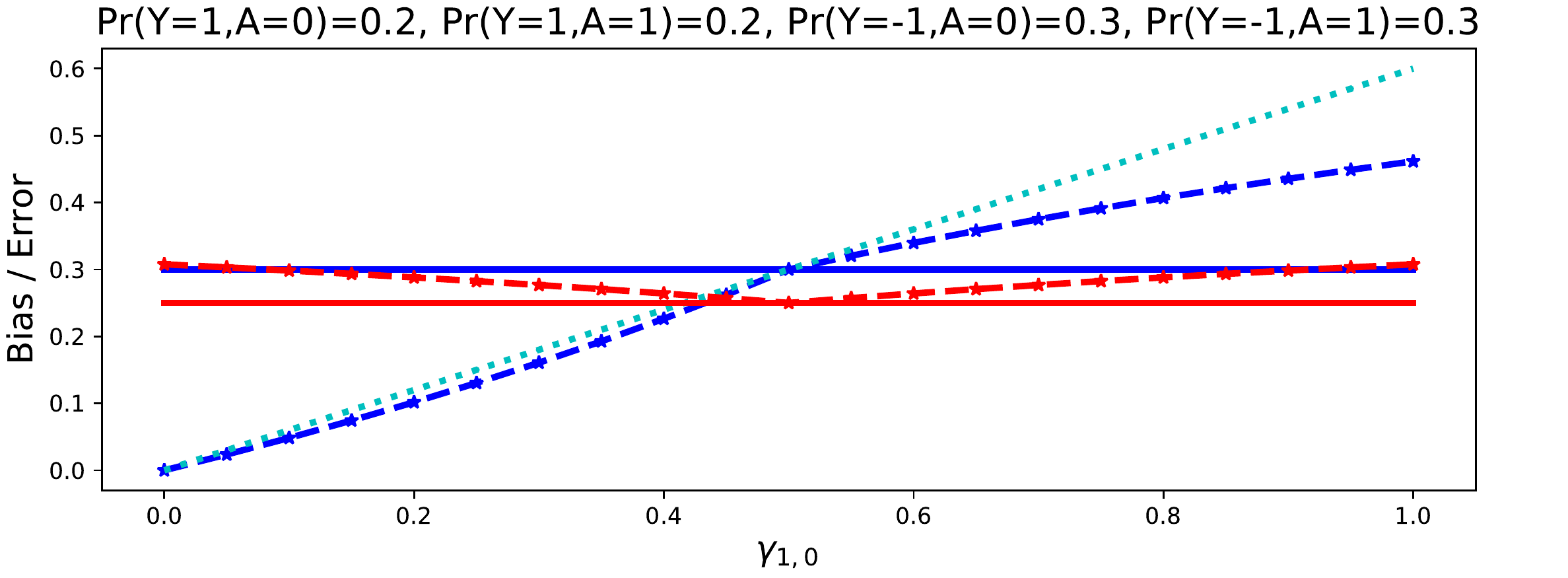}

\vspace{3mm}
\includegraphics[width=\WiSimAppendix]{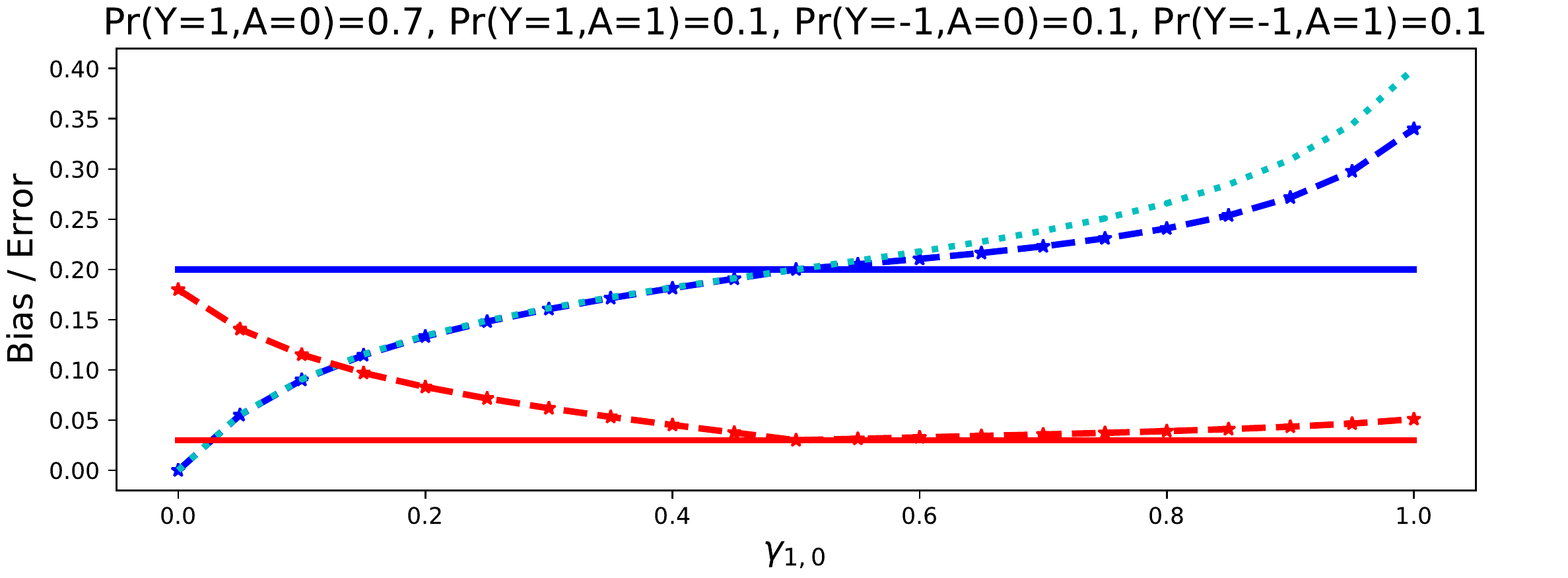}
\hspace{\DiSimAppendix}
\includegraphics[width=\WiSimAppendix]{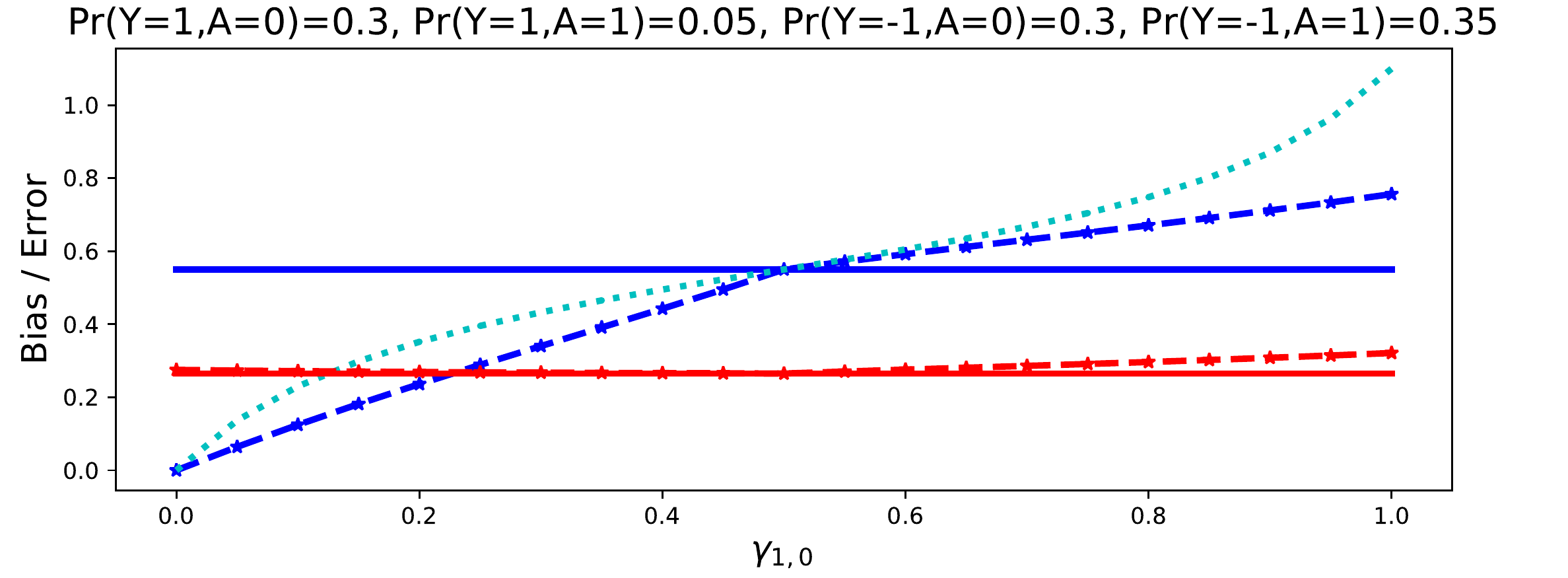}

\vspace{3mm}
\includegraphics[width=\WiSimAppendix]{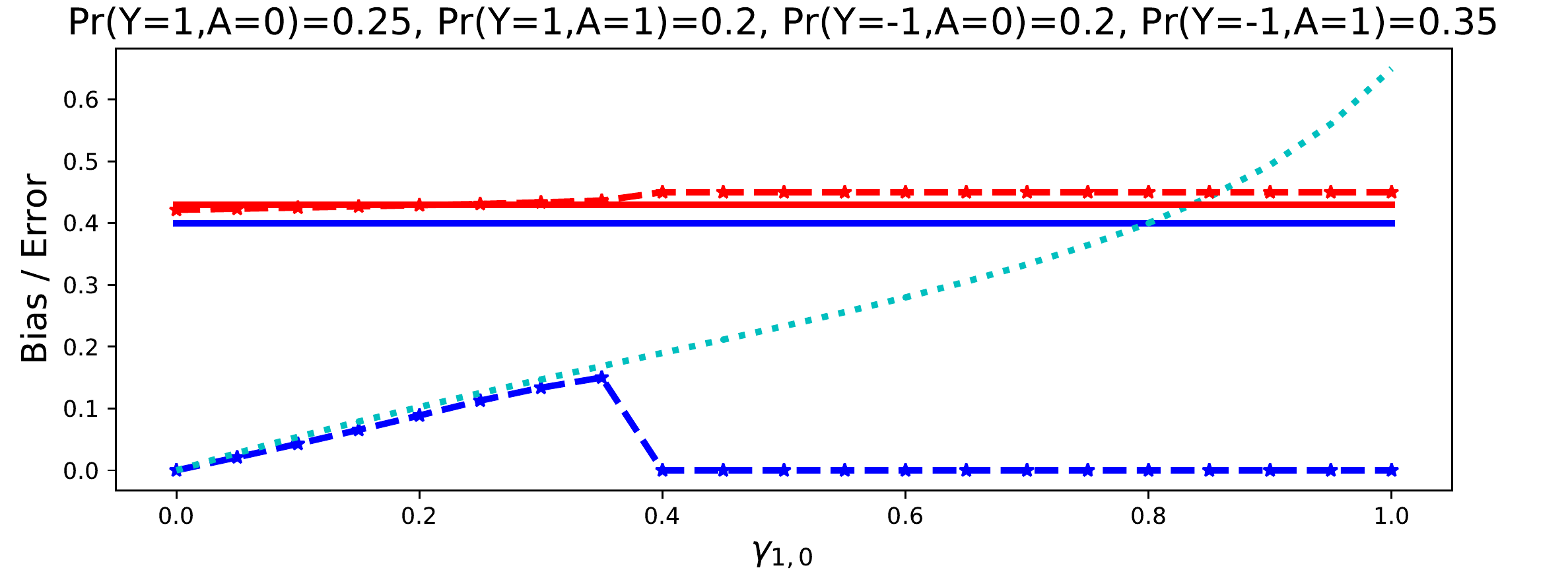}
\hspace{\DiSimAppendix}
\includegraphics[width=\WiSimAppendix]{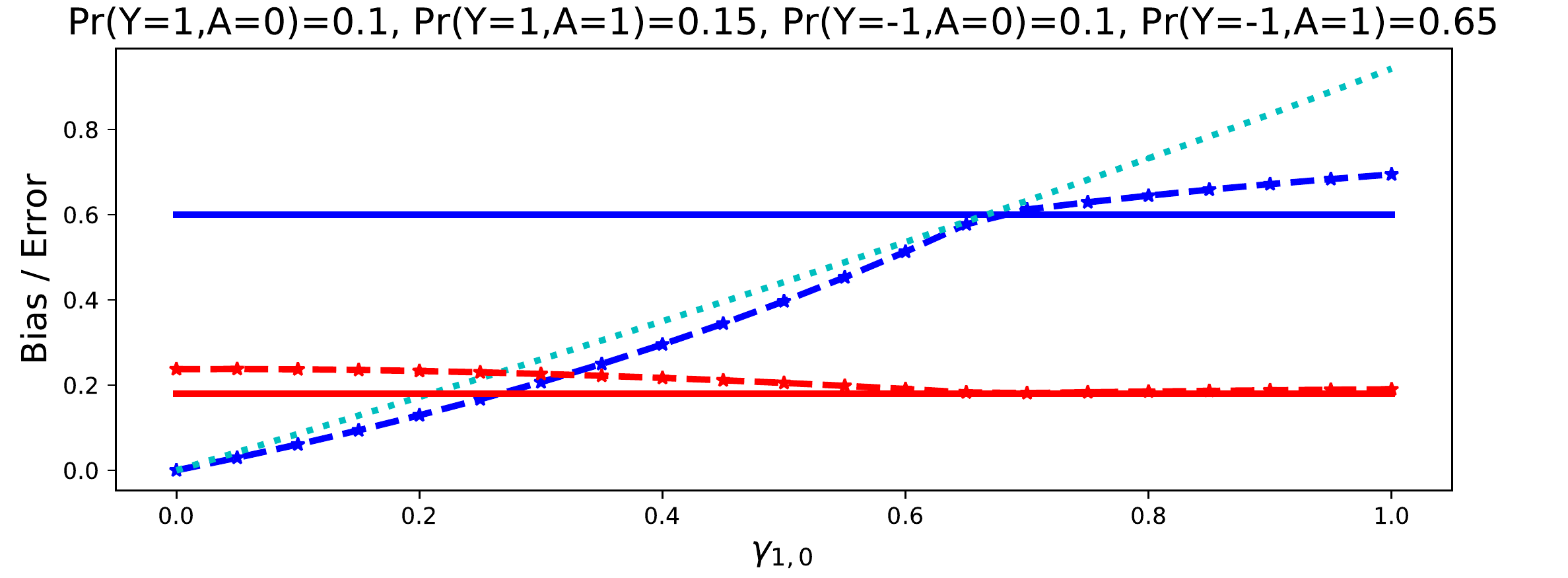}

\caption{Similar experiments as shown in Figures~\ref{figure_simulations} and~\ref{figure_simulations_appendix}.
The dashed blue curve shows $\Bias_{Y=1}(\widehat{Y})$ and the dashed red curve shows $\Error(\widehat{Y})$ as a function of the perturbation level. 
The solid blue line shows $\Bias_{Y=1}(\widetilde{Y})$  and the solid red line 
shows $\Error(\widetilde{Y})$. 
The dotted cyan 
curve shows the upper bound on $\Bias_{Y=1}(\widehat{Y})$ provided in~\eqref{theorem_bias_formula} in Theorem~\ref{theorem_bias}. 
The problem parameters can be read from the titles of the plots and Table~\ref{table_parameters_appendix_2}.}\label{figure_simulations_appendix_2}
\end{figure}

\subsection{Full Table and Additional Statistics of the Experiment on the Drug Consumption Data Set of Section~\ref{subsec_exp_real_data}}\label{appendix_table_drugs}

Table~\ref{table_drugs_appendix} provides the complete results for the experiment on the drug consumption data set of Section~\ref{subsec_exp_real_data}. Note that we do not 
consider the drugs Alcohol, Caff, Choc and the  fictitious drug Semer
since for these drugs it is $\Psymb[Y=1]>0.96$ or $\Psymb[Y=1]<0.01$ and there is a significant chance of observing 
$\Psymb[Y=y,A=a]=0$ for some $y\in\{-1,+1\}$ and $a\in\{0,1\}$ when working with only a random third of the data set. However, the equalized odds postprocessing method requires 
$\Pro[Y=y,A=a]>0$ for $y\in\{-1,+1\}$ and $a\in\{0,1\}$.

Table~\ref{table_assumptions_are_satisfied} provides for each drug the number of runs (out of the 200 in total) in which 
Assumptions~\ref{assu_bias}\,\ref{assu_bias_b} and Assumption~\ref{assu_error}, respectively, is satisfied.

\vspace{2mm}
\begin{table*}[h!]
  \caption{Number of runs (out of 200) with Assumptions~\ref{assu_bias}\,\ref{assu_bias_b} / Assumption~\ref{assu_error} being satisfied.}\label{table_assumptions_are_satisfied}.
   \centering
\renewcommand{\arraystretch}{1.5}
\begin{small}
\begin{tabular}{ccccccccc}
\toprule 
& Amphet & Amyl & Benzos & Cannabis & Coke & Crack & Ecstasy & Heroin  \\
\midrule
Assumptions~\ref{assu_bias}\,\ref{assu_bias_b} & 200 & 196 & 198 & 200 & 199 & 191 & 200 & 166\\
Assumption~\ref{assu_error} & 200 & 199 & 200 & 200 & 200 & 164 & 200 & 194\\
\midrule\midrule
& Ketamine & Legalh & LSD & Meth & Mushroom & Nicotine & VSA &  \\
\midrule
Assumptions~\ref{assu_bias}\,\ref{assu_bias_b} & 169 & 197 & 197 & 164 & 198 & 200 & 197 &\\
Assumption~\ref{assu_error} & 191 & 200 & 200 & 200 & 200 & 200 & 197 &\\
\bottomrule
\end{tabular}
\end{small}
\end{table*}

\begin{landscape}
    \centering
\begin{table}[t]
  \caption{Experiment on the Drug Consumption data set.}\label{table_drugs_appendix}
   \centering
\renewcommand{\arraystretch}{1.5}
\begin{tabular}{ccccccccc}
\toprule \toprule
$Y$ & $\Psymb[Y=1]$ & $\Bias_{Y=1/-1}(\widetilde{Y})$ & $\Bias_{Y=1/-1}(\Ycorr)$ & $\Bias_{Y=1/-1}(\Ytrue)$ & $\Error(\widetilde{Y})$ & $\Error(\Ycorr)$ & 
$\Error(\Ytrue)$ & C. I. \eqref{independence_measure}\\
\toprule
Amphet & 0.36 & 0.085 / 0.106 & 0.076 / 0.065 & 0.043 / 0.027  & 0.317 & 0.339 & 0.352 & 0.033 \\ 
\midrule
Amyl & 0.19 & 0.08 / 0.032 & 0.002 / 0.001 & 0.0 / 0.0 & 0.226 & 0.195 & 0.195 & 0.032 \\ 
\midrule
Benzos & 0.41 & 0.074 / 0.132 & 0.064 / 0.1 & 0.041 / 0.034 & 0.351 & 0.369 & 0.39 & 0.036 \\ 
\midrule
Cannabis & 0.67 & 0.092 / 0.052 & 0.091 / 0.073 & 0.041 / 0.077 & 0.214 & 0.227 & 0.255 & 0.032 \\ 
\midrule
Coke & 0.36 & 0.075 / 0.107 & 0.054 / 0.068 & 0.04 / 0.024 & 0.331 & 0.347 & 0.358 & 0.032 \\ 
\midrule
Crack & 0.1 & 0.075 / 0.025 & 0.0 / 0.0 & 0.0 / 0.0 & 0.129 & 0.101 & 0.101 & 0.039 \\ 
\midrule
Ecstasy & 0.4 & 0.095 / 0.117 & 0.109 / 0.084 & 0.064 / 0.049  & 0.294 & 0.313 & 0.331 & 0.032 \\ 
 \midrule
Heroin & 0.11 & 0.086 / 0.022 & 0.002 / 0.0 & 0.0 / 0.0 & 0.137 & 0.112 & 0.112 & 0.042 \\ 
\midrule
Ketamine & 0.19 & 0.067 / 0.043 & 0.0 / 0.0 & 0.0 / 0.0 & 0.236 & 0.185 & 0.185 & 0.035 \\ 
\midrule
Legalh & 0.4 & 0.098 / 0.062 & 0.119 / 0.047 &  0.071 / 0.044  & 0.261 & 0.281 & 0.289 & 0.031 \\ 
\midrule
LSD & 0.29 & 0.076 / 0.082 & 0.095 / 0.059 & 0.061 / 0.032 & 0.246 & 0.264 & 0.279 & 0.032 \\ 
\midrule
Meth & 0.22 & 0.07 / 0.063 & 0.015 / 0.009 & 0.003 / 0.002 & 0.229 & 0.223 & 0.223 & 0.038 \\ 
\midrule
Mushroom & 0.37 & 0.084 / 0.106 & 0.094 / 0.075 & 0.071 / 0.041 & 0.279 & 0.297 & 0.316 & 0.031 \\ 
\midrule
Nicotine & 0.67 & 0.081 / 0.077 & 0.041 / 0.047 & 0.014 / 0.026 & 0.317 & 0.329 & 0.332 & 0.03 \\ 
\midrule
VSA & 0.12 & 0.074 / 0.037 & 0.0 / 0.0 &  0.0 / 0.0    & 0.148 & 0.12 & 0.12 & 0.043 \\





\bottomrule \bottomrule
\end{tabular}
\end{table}
\end{landscape}

\clearpage
\subsection{Plots for the Experiment of Section~\ref{subsec_exp_real_data} on the Adult Data Set and 
some Statistics of the COMPAS and Adult Data Sets}\label{supp_mat_setion_stats_real_data}

Figure~\ref{figure_experiments_real_data_APPENDIX} provides the plots for the experiment 
of Section~\ref{subsec_exp_real_data} on the Adult data set. 
Table~\ref{table_statistics} provides several statistics of the COMPAS and Adult data sets
(before splitting them into a training and a test set).

\vspace{8mm}

\renewcommand{\wire}{8cm}
\begin{figure*}[h]
 \centering
\includegraphics[width=\wire]{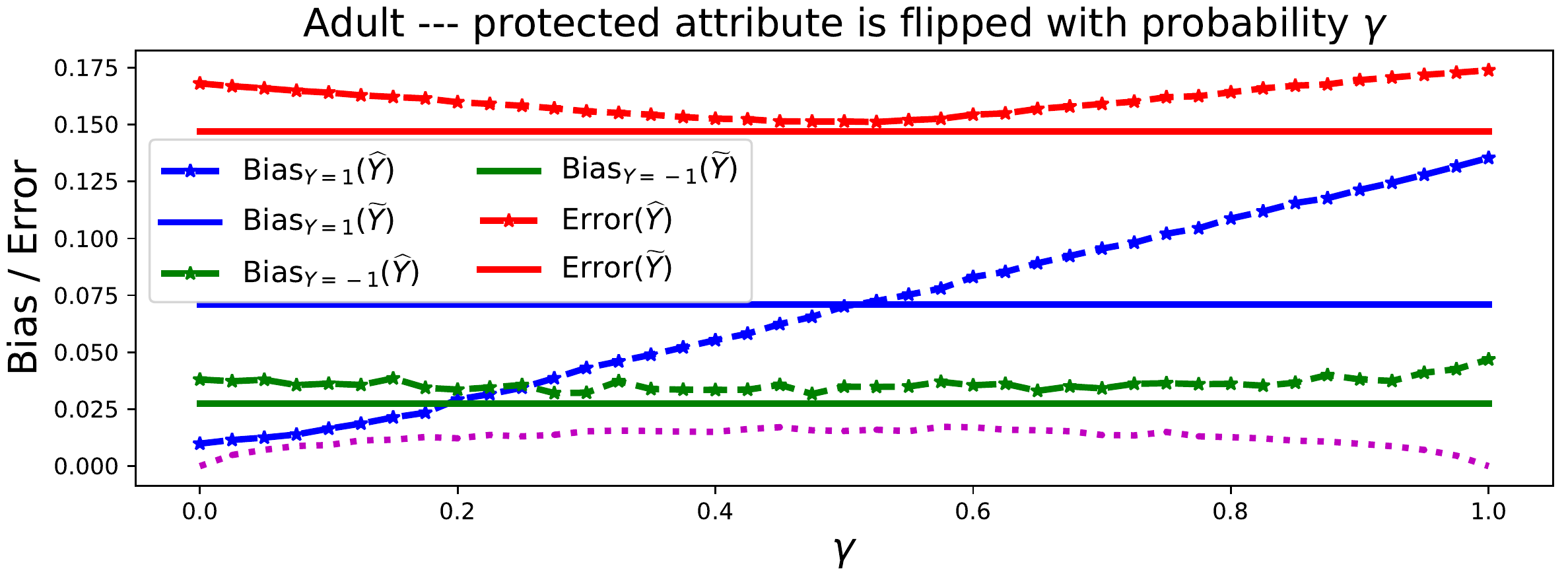}
\hspace{5mm}
\includegraphics[width=\wire]{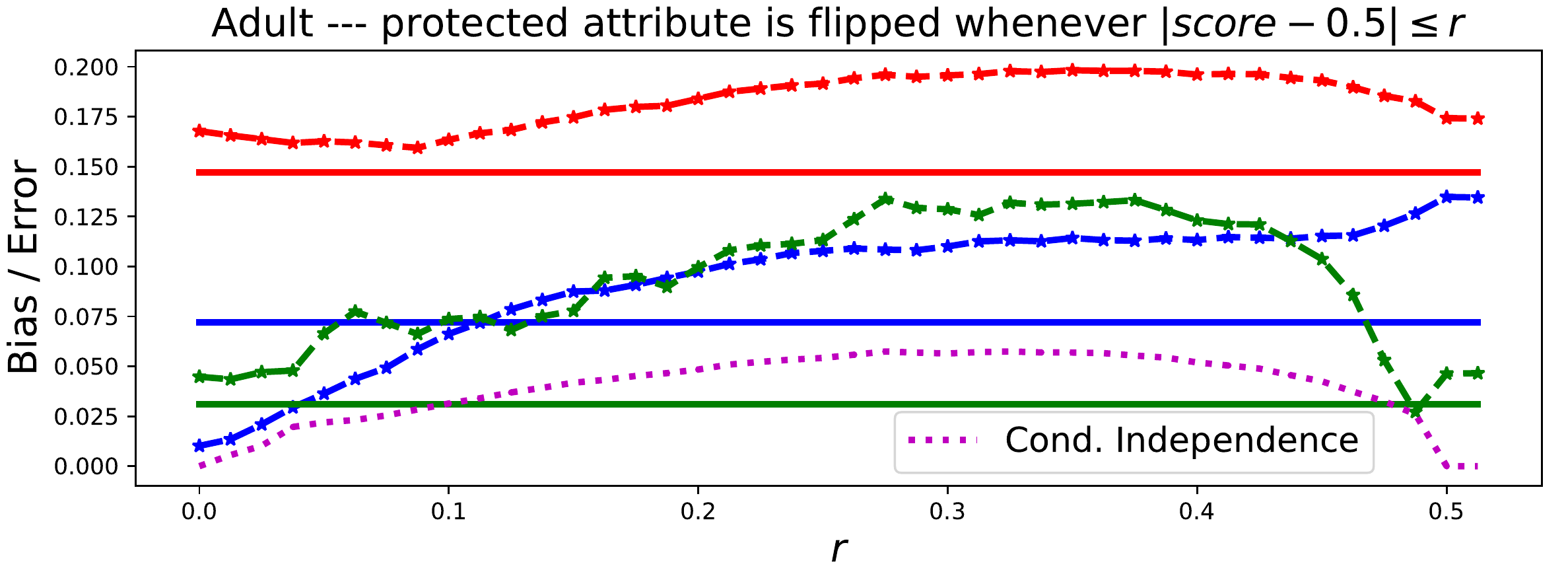}

\vspace{3mm}
\includegraphics[width=\wire]{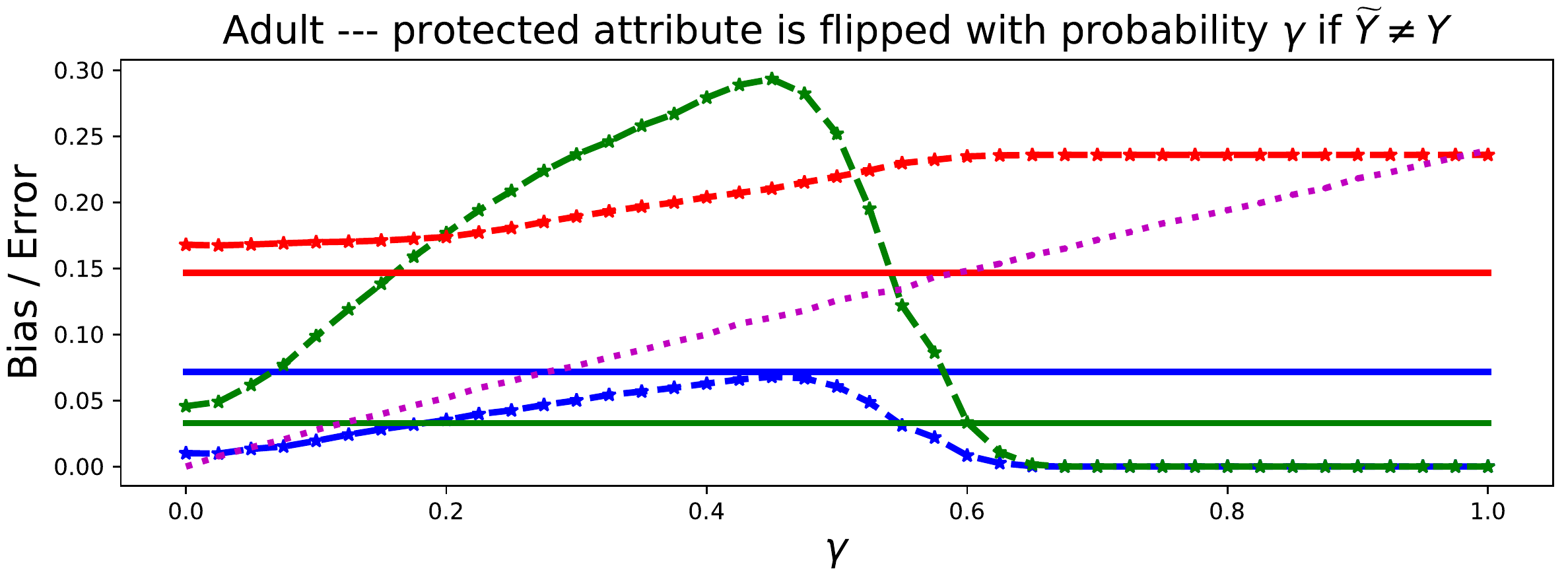}
\hspace{5mm}
\includegraphics[width=\wire]{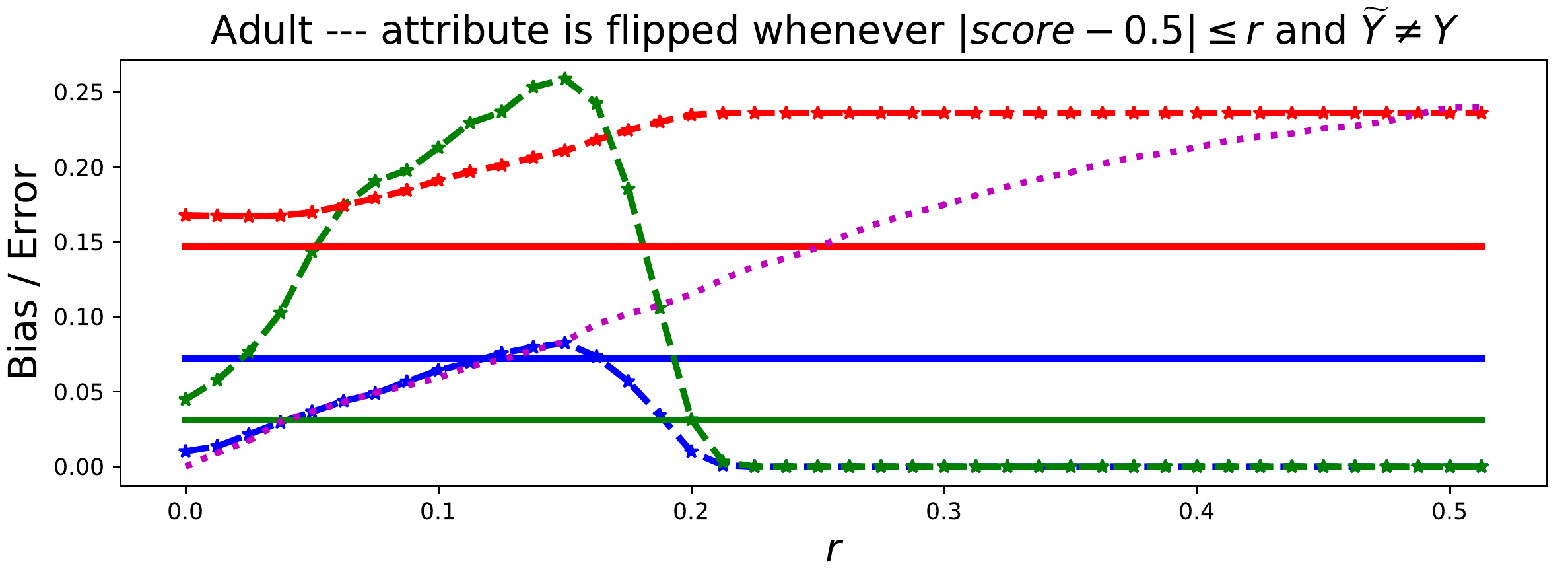}

\caption{Adult data set. 
$\Bias_{Y=+1 / -1}(\widehat{Y})$ (dashed blue / dashed green) 
and $\Error(\widehat{Y})$ (dashed red) as a function of the perturbation level 
in four perturbation scenarios.
The solid lines show the bias (blue and green) and the error (red) of 
$\widetilde{Y}$. 
The magenta line shows an estimate of \eqref{independence_measure} and how heavily Assumptions~\ref{assu_bias}\,\ref{assu_bias_a} is violated.}\label{figure_experiments_real_data_APPENDIX}
\end{figure*}

\vspace{8mm}

\vspace{2mm}
\begin{table}[h!]
  \caption{Statistics of the real data sets used in Section~\ref{subsec_exp_real_data}.}\label{table_statistics}
  \centering
\renewcommand{\arraystretch}{1.5}
\begin{tabular}{ccc}
\toprule
& ~~~~COMPAS~~~~ & ~~~Adult~~~~ \\
\midrule
$\#$ records & 6150 & 9768 \\
\midrule
$\frac{\#\, (Y=1 \,\wedge\, A=0)}{\#~\text{records}}$ & 0.157 & 0.470 \\[4pt]
$\frac{\#\, (Y=1 \,\wedge\, A=1)}{\#~\text{records}}$ & 0.309 & 0.294 \\[4pt]
$\frac{\#\, (Y=-1 \,\wedge\, A=0)}{\#~\text{records}}$ & 0.242 & 0.201 \\[4pt]
$\frac{\#\, (Y=-1 \,\wedge\, A=1)}{\#~\text{records}}$  & 0.292 & 0.036 \\[2pt]
\midrule
$\frac{\#\, (\widetilde{Y}=1)}{\#~\text{records}}$ & 0.394 & 0.795\\[4pt]
$\frac{\#\, (\widetilde{Y}\neq Y)}{\#~\text{records}}$  & 0.344 & 0.147 \\[2pt]
\midrule
$\frac{\#\, (\widetilde{Y}=1 \,\wedge\, Y=1 \,\wedge\, A=0)}{\#\, (Y=1 \,\wedge\, A=0)}$  & 0.408 & 0.897\\[4pt]
$\frac{\#\, (\widetilde{Y}=1 \,\wedge\, Y=1 \,\wedge\, A=1)}{\#\, (Y=1 \,\wedge\, A=1)}$  &  0.628 & 0.968 \\[4pt]
$\frac{\#\, (\widetilde{Y}=1 \,\wedge\, Y=-1 \,\wedge\, A=0)}{\#\, (Y=-1 \,\wedge\, A=0)}$  & 0.147 & 0.374 \\[4pt]
~~~$\frac{\#\, (\widetilde{Y}=1 \,\wedge\, Y=-1 \,\wedge\, A=1)}{\#\, (Y=-1 \,\wedge\, A=1)}$~~~  & 0.343  & 0.398\\
\bottomrule
\end{tabular}
\end{table}

\subsection{Repeated Loss Minimization Experiment Outlined in Section \ref{subsec_exp_repeated_loss_minimization}}\label{supp_mat_rep_loss_min}

As another application of our results, 
 we compare the equalized odds 
 postprocessing 
 method to the method of \citet{Hashimoto2018}, discussed in Section~\ref{section_related_work}, in the sequential classification setting studied by 
 \citeauthor{Hashimoto2018}. In this setting, at each time step a classifier is trained on a data set that comprises several protected groups. 
 The fraction of a group at time step~$t$ depends on 
 the group's fraction and the classifier's accuracy 
 for 
 the group
 at time step~$t-1$. \citeauthor{Hashimoto2018} show that in such a 
 sequential 
 setting 
 standard empirical risk minimization can lead to disparity amplification with a group having 
 a very small fraction /  
 classification accuracy
 after some time while their 
 proposed 
 method helps to avoid~this~situation. 

 In Figure~\ref{figure_experiment_Hashimoto} we present an experiment that reproduces and extends the experiment shown in Figure~5 in 
 \citet{Hashimoto2018}.\footnote{We used the code provided by \citeauthor{Hashimoto2018} and extended it without  
 changing any parameters.} 
 Figure~\ref{figure_experiment_Hashimoto} shows the classification accuracy (left plot) and the fraction (right plot) of the minority group over time for various 
 classification strategies. In this experiment, there are 
 only two groups that initially have the same size, and by minority group we mean the group that has a smaller fraction on average over time 
 (hence, at some time steps the fraction of the minority group can be greater than one half). The classification strategies that we consider are 
 all based on logistic regression. ERM refers to a ``standard'' logistic regression classifier trained with empirical risk minimization and  
 DRO to a logistic regression classifier trained with distributionally robust optimization (the method proposed by \citeauthor{Hashimoto2018}; see their paper for details).   
  EO refers to the ERM strategy with equalized odds postprocessing. We consider EO using the true protected attribute and when 
 the true attribute $A$ is perturbed and replaced by $\As$, which is obtained by flipping $A$ to its complementary value with probabilities $\gamma_0:=\Psymb[\As\neq A| A=0]$ and 
 $\gamma_1:=\Psymb[\As\neq A| A=1]$, respectively, 
 independently for each data point. We can see from the plots that EO achieves the same goal as DRO, namely avoiding disparity amplification, even when the protected attribute is
 highly perturbed (orange and magenta curves with $\gamma_0=\gamma_1=0.45$ and $\gamma_0=0.1$ / $\gamma_1=0.8$, respectively). DRO achieves a slightly higher accuracy, 
 at least in this experiment, and other than EO, 
 it does not require knowledge about the protected attribute at all. 
 However, the underlying optimization problem for DRO is non-convex, and as a result 
 DRO  does not come with theoretical per-step  guarantees. 
 Hence, we believe that in situations where one has access to 
 a  perturbed version of the 
 protected attribute and can assume Assumptions~\ref{assu_bias} and~\ref{assu_error} to be satisfied, the equalized odds 
 postprocessing 
 method is a more trustworthy alternative.

\begin{figure*}[t]
 \centering
\includegraphics[height=4cm]{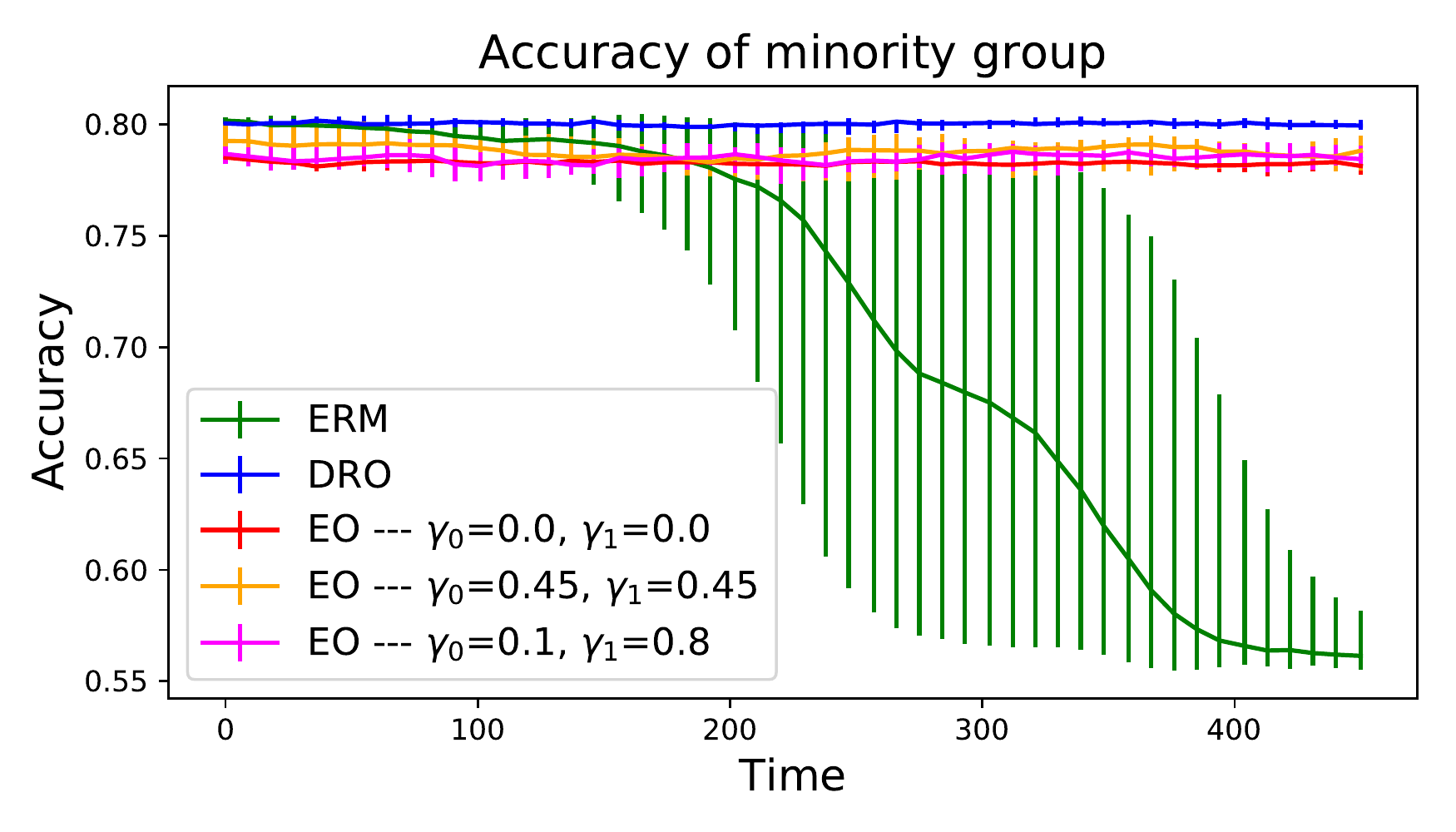}
\hspace{12mm}
\includegraphics[height=4cm]{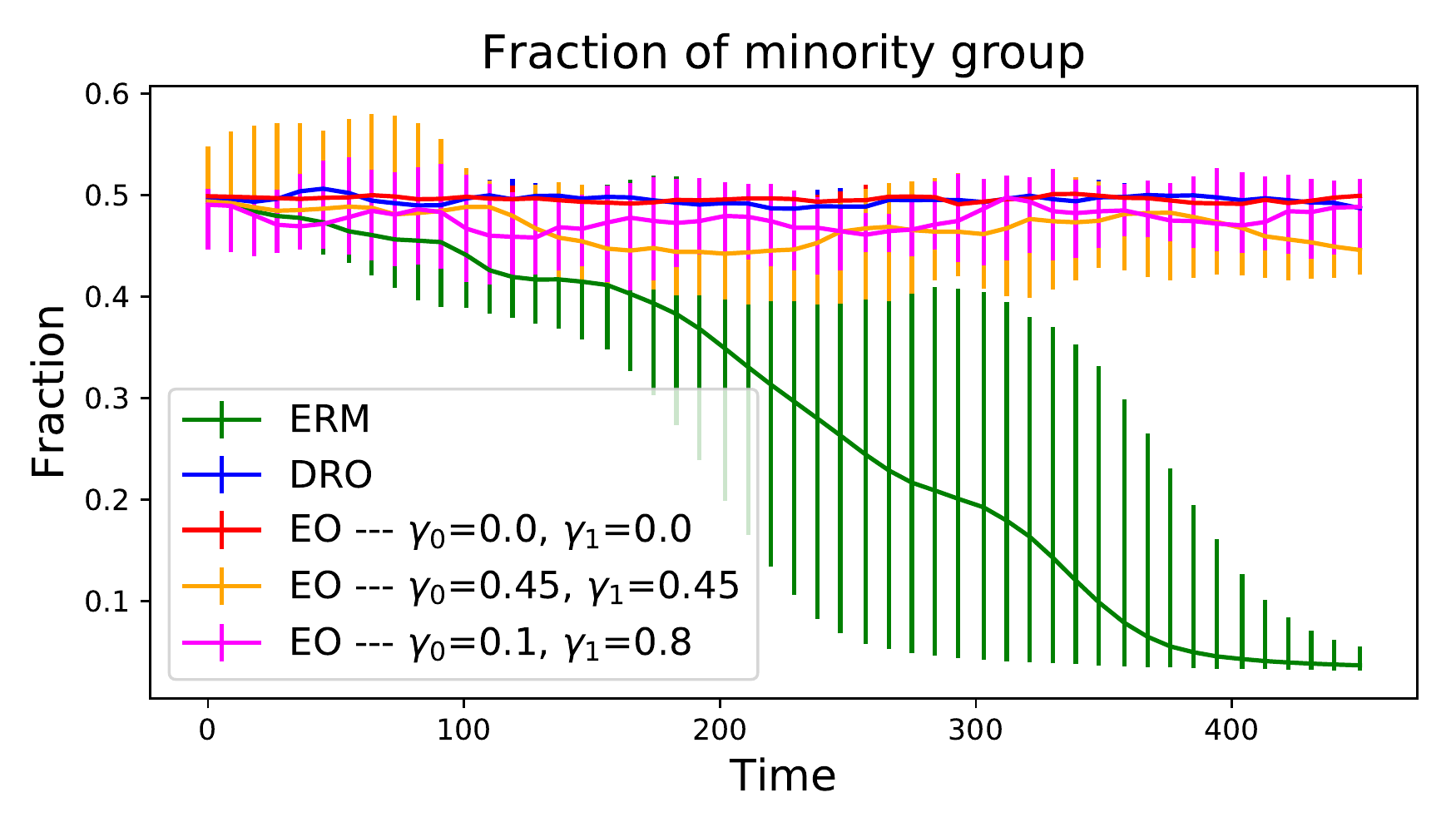}
\caption{Repeated loss minimization experiment of \citet{Hashimoto2018} (Figure~5 in their paper). Not only the method proposed by \citeauthor{Hashimoto2018} (DRO), but also equalized odds postprocessing 
guarantees high user retention, and hence high accuracy, for both groups over time, even when the protected attribute is highly perturbed. The curves and error bars show the accuracy \textbf{(left)} and 
fraction \textbf{(right)} of the minority group over time over 10 replicates of the experiment. 
}\label{figure_experiment_Hashimoto}
\end{figure*}

\end{document}